\documentclass[final]{siamart220329}

\usepackage{amssymb,amsmath,amsfonts,amscd,verbatim}
\usepackage{color,graphicx}
\usepackage{algorithm,algorithmic}
\usepackage{upgreek,comment,url}
\usepackage{mystylefile}
\usepackage{booktabs}
\usepackage{float}
\usepackage{hyperref}
\overfullrule=0pt
%\usepackage{showkeys}
%\newtheorem{theorem}{Theorem}
%\newtheorem{lemma}{Lemma}
%\newtheorem{definition}{Definition}
%\newtheorem{remark}{Remark}

% Sets running headers as well as PDF title and authors
\headers{Stochastic Trace Optimization of Parameter Dependent Matrices}{Arvind K. Saibaba and I. C. F. Ipsen}

% Title. If the supplement option is on, then "Supplementary Material"
% is automatically inserted before the title.
\title{Stochastic Trace Optimization of\\
Parameter Dependent Matrices\\
Based on Statistical Learning Theory\thanks{Submitted to the editors DATE.
\funding{The work of both authors 
		was supported in part by NSF grant DMS-1760374. The first author was also supported in part by NSF grants DMS-1845406 and  DMS-2411198.
		The second author was also supported in part by NSF grant CCF-2209510, and DOE grant DE-SC0022085.}}}

% Authors: full names plus addresses.
\author{Arvind K.\ Saibaba\thanks{Department of Mathematics, North Carolina State University,
NC 27695-8205, USA
  (\email{asaibab@ncsu.edu}, 
  \url{https://asaibab.math.ncsu.edu/})
  .}
\and Ilse C.F.\ Ipsen\thanks{Department of Mathematics, North Carolina State University,
NC 27695-8205, USA
  (\email{ipsen@ncsu.edu}, \url{https://ipsen.math.ncsu.edu/}).}}

\usepackage{amsopn}

\begin{document}
\maketitle
\begin{abstract}
We consider matrices $\ma(\vtheta)\in\rmm$ that depend, possibly nonlinearly, on a parameter $\vtheta$
from a compact parameter space $\Theta$. We present a 
Monte Carlo estimator for minimizing $\trace(\ma(\vtheta))$ over all $\vtheta\in\Theta$, and determine the sampling amount so that the 
backward error of the estimator is bounded with high probability. 
We derive two types of bounds, based on epsilon nets 
and on generic chaining. Both types predict a small 
sampling amount for 
matrices $\ma(\vtheta)$ with small offdiagonal mass, and parameter spaces $\Theta$ of small ``size.''
Dependence on the matrix dimension~$m$ is only
weak or not explicit.
The bounds based on epsilon nets are easier to evaluate and come with fully specified constants. In contrast, the bounds based on chaining depend on the Talagrand functionals which are difficult to evaluate, except in very special cases. Comparisons between the two types of bounds are difficult, although the literature suggests that chaining bounds can be superior. 
\end{abstract}

\begin{keywords}
Hutchinson trace estimator, Rademacher vectors,
Hoeffding's inequality, excess risk,
epsilon net, generic chaining, covering number,
Talagrand functional
\end{keywords}

\begin{MSCcodes}
15A15, 65F99, 65C05, 68W20, 68Q32
\end{MSCcodes}

\section{Introduction}
The goal is to optimize the trace of parameter dependent
matrices.
We consider symmetric matrices $\ma(\vtheta)\in\rmm$
and regularization terms $\mathcal{R}(\vtheta)$
that depend, possibly nonlinearly, on parameters $\vtheta$ from a compact
parameter space $\Theta \subset \real^K$. The objective
is to minimize, over all elements of $\Theta$, 
\begin{equation}\label{eqn:standardform}
    \min_{\vtheta \in \Theta} \{F(\vtheta) + \mathcal{R}(\vtheta)\} \qquad \text{where\quad } F(\vtheta) \equiv \trace(\ma(\vtheta))  .
\end{equation}
Optimization problems of this form arise in optimal experimental design~\cite{alexanderian2018efficient}, Gaussian processes~\cite{Anitescu2012,dong2017scalable}, biclustered matrix completion~\cite{chi2019going}, and
Bayesian inverse problems~\cite{chung2024}
discussed in Section~\ref{s_appl}. 

In the above applications, it is prohibitively 
expense to compute the matrix $\ma(\vtheta)$ explicitly, while the 
regularization term $\mathcal{R}(\vtheta)$ does not contribute a significant computational cost.  This means, the computation of the trace is a major bottleneck in large-scale implementations. A popular  approximation of the trace relies on the computation
of an optimizer $\hat\vtheta$ by means of a Monte Carlo estimator
\begin{align*}
\min_{\vtheta\in\Theta}\{\hat{F}(\vtheta) + \mathcal{R}(\vtheta)\} \qquad \hat{F}(\vtheta) \equiv \frac{1}{N}\sum_{i=1}^N{\vomega_i\t\ma(\vtheta)\vomega_i}.
\end{align*}
Here 
$\vomega_i\in\real^m$ are independent random vectors satisfying $\mathbb{E} [\vomega_i] = \vzero$ 
and\footnote{Here 
$\mi_m\in\rmm$ denotes the $m\times m$ identity matrix.}
$\mathbb{E}[\vomega_i\vomega_i\t] = \mi_m$, so that $\hat{F}(\vtheta)$ is an unbiased estimator of $F(\vtheta)$. This is an example of stochastic average approximation (SAA) optimization~\cite{shapiro2021lectures}. SAA is computationally attractive, 
since any appropriate deterministic optimization routine can be applied to $\hat{F}(\vtheta)$, and the random vectors $\vomega_i$ can be reused in different 
iterations.

The statistical properties of SAA estimators~$\hat\vtheta$ in the asymptotic regime as $N\rightarrow \infty$  have been studied in great detail~\cite{shapiro2021lectures}. In practice, one can only draw a finite number of samples $N$, hence
it is important to understand the accuracy and the statistical properties of SAA estimators in the non-asymptotic setting. Our goal is to quantify the probabilistic backward error  $F(\hat\vtheta) + \mathcal{R}(\hat\vtheta) - (F(\vtheta^*) + \mathcal{R}(\vtheta^*))$, where $\vtheta^*$ is a minimizer of~\eqref{eqn:standardform}.

There is a lot of existing work on stochastic methods for estimating the trace of a single, fixed matrix: \cite{avron2011randomized,cortinovis2022randomized,Girard1989,Hutch1989,roosta2015improved} just to name a few. The methods make use of concentration inequalities to derive the minimal sampling amount required for a 
desired relative error with high probability. However, they are only capable of estimating the accuracy of the objective function at a single point $\vtheta \in \Theta$, and they do not give information about the accuracy of the SAA estimator, or the accuracy of the objective function in the optimization trajectory.
In contrast, we derive
non-asymptotic bounds on the backward error with
tools from high-dimensional probability and statistical learning theory.

\subsection{Related work}
To our knowledge, ours are the first non-asymptotic SAA estimators for trace optimization.

The stochastic trace estimators for parameter dependent
matrices in \cite{MHKL2025} assume that the parameter
$\vtheta$ is a scalar and that the elements
of~$\ma(\vtheta)$ depend continuously on $\vtheta$.
Since the problem in \cite{MHKL2025} is not an optimization problem, the
same Gaussian random vectors are re-used for different
values of $\vtheta$.
Because these stochastic trace estimators rely on quadratic forms with random vectors, they represent an instance of second-order chaos
\cite{boucheron2003concentration,Krahmer2014suprema,talagrand2005generic}.

\subsection{Contributions}\smallskip
We present probabilistic bounds on the sampling amount~$N$ for 
Hutchinson's trace estimator with Rademacher vectors.
\begin{itemize}
\item This is the one of the first investigations
into the potential of statistical 
learning theory for numerical linear algebra
problems. 
\item In contrast to existing asymptotic analyses of SAA estimators in \cite[Chapter 5]{shapiro2021lectures} and \cite[Section 2.2]{Anitescu2012},
our approach produces 
quantitative and computable error bounds for a finite
amount of samples.

\item In contrast to spectral density estimation methods \cite{MHKL2025}, we make no assumptions on how the
matrices $\ma$ depend on the parameter $\vtheta$. 

\item We derive  bounds from epsilon nets (Section~\ref{s_main}), 
and from generic chaining (Section~\ref{s_chaining}), 
and express them in
terms of subgaussian tails, and also in terms of mixed
tails. All bounds show that the
sampling amount~$N$ is small, 
if the maximal offdiagonal
mass of the matrices $\ma(\vtheta)$ is small and if the
"size" (in  the appropriate measure) of~$\Theta$ is small.
\end{itemize}

\subsection{Main results}
\begin{itemize}
\item For finite parameter spaces $\Theta$, the
sampling amount~$N$ does not depend on the matrix
dimension~$m$, if the offdiagonal mass of the matrices $\ma(\vtheta)$ is small and the
cardinality of $\Theta$ is small (Theorem~\ref{t_4}). There is no dependence 
on the matrix dimension~$m$. 

\item For infinite parameter spaces $\Theta$, 
the bound  on 
the sampling amount~$N$ is the same as the bound
for finite spaces, except that the cardinality 
of $\Theta$ is replaced by the covering number
of the epsilon net which features a weak dependence
on the matrix dimension $m$ through the logarithm
(Theorem~\ref{t_1}).

\item The bounds from chaining 
(Theorems \ref{thm:subgaussrisk} and~\ref{thm:chainingmixed})
do not require the Lipschitz continuity  of $\ma(\vtheta)$ but have the disadvantage of depending on 
Talagrand functionals, which are difficult to evaluate.

\item For spherical parameter spaces $\Theta$ the bounds
on the sampling amount~$N$
also 
depend on the dimension and the radius of $\Theta$,
however they feature unknown constants
(Corollaries \ref{c_54a} and~\ref{c_54b}).
\item 
Although the literature suggests that chaining bounds can be superior, we found 
comparisons of chaining and epsilon net bounds difficult
(Remarks \ref{r_com2} and~\ref{r_com}).
The epsilon nets bounds are easier to evaluate and come with fully specified constants. In contrast, the 
chaining bounds depend on the Talagrand functionals which are difficult to evaluate, except in very special cases. 
\end{itemize}

\subsection{Overview}
After presenting two motivating applications in Bayesian
inverse problems (Section~\ref{s_appl}), and a review of
concentration inequalities and stochastic trace estimators (Section~\ref{s_review}), we derive bounds 
based on epsilon nets (Section~\ref{s_main}) and based on generic
chaining (Section~\ref{s_chaining}), and end with  concluding remarks and avenues for future research (Section~\ref{sec:conc}).

\section{Application to Bayesian inverse problems}\label{s_appl}
We illustrate how Bayesian inverse problems give rise to trace optimization problems of the form~(\ref{eqn:standardform}). 
After reviewing the basics of Bayesian inverse problems, 
we discuss specific applications in 
\textit{Optimal Experimental Design} (Section~\ref{ssec:oed}) and \textit{Hyperparameter  estimation} (Section~\ref{s_hyper}).

Given a parameter vector $\vm \in \real^n$ and a
forward operator $\mf \in \real^{m\times n}$,
the \textit{data acquisition} or 
\textit{forward problem} consists of 
computing a vector $\vd \in \real^m$ so that
\begin{align}\label{e_problem}
\vd = \mf\vm + \veps, \qquad \text{where}\qquad
\veps \sim \mathcal{N}(\vzero, \sigma^2 \mi_m). 
\end{align}
The measurement noise $\veps \in \real^m$ is normally distributed with zero mean and covariance~$\sigma^2 \mi_m$. 
On the other hand,
given a data vector $\vd \in \real^m$ and a
forward operator $\mf \in \real^{m\times n}$,
the \textit{inverse problem} consists of recovering the parameter vector $\vm$. 
In many applications, (\ref{e_problem}) is underdetermined with  $m \le n$, and, thus, ill-posed. 

The Bayesian approach to inverse problems puts a prior distribution
on $\vm$, where the prior is a Gaussian distribution
$\vm \sim \mathcal{N}(\vm_{\rm pr}, \mgam_{\rm pr})$. 
Application of Bayes rule gives the posterior distribution $\vm | \vd \sim \mathcal{N}(\vm_{\rm post}, \mgam_{\rm post})$, where 
\[ \mgam_{\rm post} \equiv (\sigma^{-2} \mf\t\mf + \mgam_{\rm pr}^{-1})^{-1} \qquad 
\vm_{\rm post} \equiv \mgam_{\rm post}(\sigma^{-2}\mf\t\vd + \mgam_{\rm pr}^{-1}\vm_{\rm pr}). \]
The following two Bayesian optimization problems can be cast in the form~\eqref{eqn:standardform}.

\subsection{Optimal Experimental Design}\label{ssec:oed}
Data are important in making inference about parameters, but 
the data acquisition process can be laborious and expensive. \textit{Optimal Experimental Design (OED)} seeks to optimize this process subject to budgetary constraints. In the OED formulation of \textit{sensor placement}, one seeks an optimal placement of $k$ sensors at $m>k$ candidate locations. 

Bayesian D-optimality places the sensors to maximize the expected information gain, which is defined as the expected Kullback-Leibler divergence from the prior to the posterior distribution. The D-optimality problem is of the form
\[ \max_{\mtheta \in \Theta}\left\{ \log\det(\mi + \mg \mtheta \mg\t)\right\},  \]
where $\mg \equiv \sigma^{-1} \mgam^{1/2}_{\rm pr}\mf\t$,
and
$\mtheta \equiv \text{diag}\begin{pmatrix}\theta_1& \cdots &\theta_m\end{pmatrix}$ 
is a diagonal matrix with diagonal elements from
the set 
\[ \Theta = \left\{ \vtheta \in \{0,1\}^m\, \vert \>\sum_{j=1}^m \theta_i = k \right\}. \]

With the identity $\log\det(\mk) = \trace(\log(\mk))$ for nonsingular matrices $\mk$, the D-optimal design problem can be cast as~\eqref{eqn:standardform} with 
\begin{align*}
\ma(\vtheta) = - \log(\mi + \mg \mtheta \mg\t),\qquad
\mathcal{R}(\vtheta) = 0.
\end{align*}
In the context of sensor placement,  $\theta_i$ is $1$ or $0$,  meaning a sensor should either be placed at
or else excluded from the $i$th candidate location, $1 \le i \le m$.

\subsection{Hyperparameter estimation}\label{s_hyper}
In many applications \cite{chung2024}, the prior distribution is expressed in terms of unknown hyperparameters $\vtheta \in \real^K$ that must be estimated from the data. 

The \textit{hierarchical Bayesian} approach treats the hyperparameters as random variables and posits a hyperprior $\vtheta \sim \pi(\vtheta)$, to model the parameter vector  
as $\vm|\vtheta \sim \mathcal{N}(\vm_{\rm pr}(\vtheta), \mgam_{\rm pr}(\vtheta))$.
Application of Bayes rule gives the joint posterior distribution 
\[ \pi_{\rm post}(\vm,\vtheta|\vd) \propto \exp\left( -\frac{1}{2\sigma^2} \|\vd-\mf\vm\|_2^2 -\frac12 \|\vm - \vm_{\rm pr}(\vtheta)\|_{\mgam_{\rm pr}(\vtheta)^{-1}}^2   \right) \pi(\vtheta),\] 
where $\propto$ denotes proportionality up to a constant. This 
joint  distribution is non-Gaussian, and the joint estimation of $\vm$ and $\vtheta$ is, in general, computationally expensive. 

As an alternative, the \textit{empirical Bayesian approach} determines the posterior distribution $\pi(\vtheta|\vd)$ by marginalizing over $\vm$,
\[ \pi(\vtheta|\vd) \propto \frac{1}{\sqrt{\det{\mpsi(\vtheta)}}}\exp\left(-\frac12\| \vd -  \mf \vm_{\rm pr}(\vtheta) \|_{\mpsi(\vtheta)^{-1}}^2 \right) \pi(\vtheta),\]
where $\mpsi(\vtheta) \equiv \mf \mgam_{\rm pr}(\vtheta) \mf\t + \sigma^2 \mi_m$, and computes the maximum a posteriori estimate  by minimizing the negative log-likelihood, 
\begin{align}\label{e_appl1}
\min_{\vtheta \in \real^K} \frac12\| \vd -  \mf \vm_{\rm pr}(\vtheta)\|_{\rm \mpsi(\vtheta)^{-1}}^2 + \frac12 \log\det\mpsi(\vtheta) - \ln \pi(\vtheta). 
\end{align}

As in Section~\ref{ssec:oed}, the identity $\log\det(\mpsi(\vtheta)) = \trace(\log (\mpsi(\vtheta)))$
allows this problem to be cast as~\eqref{eqn:standardform} with 
\begin{align*}
\ma(\vtheta) =\log( \mpsi(\vtheta)), \qquad \mathcal{R}(\vtheta) = \| \vd -  \mf \vm_{\rm pr}(\vtheta)\|_{\rm \mpsi(\vtheta)^{-1}}^2 - 2\ln \pi(\vtheta).
\end{align*}
Although the problem ranges over $\real^K$, we can restrict it to a compact set $\Theta \subset \real^K$. 
The detailed numerical computation of~(\ref{e_appl1})
is discussed in~\cite{chung2024}.
\section{Concentration inequalities and trace 
estimators}\label{s_review}
We present two concentration inequalities (Section~\ref{s_conc}) and properties of Hutchinson's
trace estimator (Section~\ref{s_hutch}).

\subsection{Concentration inequalities}\label{s_conc}
We present two different concentration inequalities that
lead to two different trace bounds, and ultimately two different sampling amounts for trace optimization.

Hoeffding's inequality below is the basis for the
trace bound in Lemma~\ref{l_1}.

\begin{theorem}[Theorem 2.2.6 in 
\cite{vershynin2018high}]\label{t_hoeff}
Let $X_1, \ldots, X_N$ be independent scalar random variables with $a_i\leq X_i\leq b_i$
for $0\leq a_i\leq b_i$, $1\leq i\leq N$. Let 
$\hat{X}\equiv \frac{1}{N}\sum_{i=1}^N{X_i}$. Then for any $t>0$
\begin{align*}
\myP\left[|\hat{X}-\E[\hat{X}]|\geq t\right]\leq 
\exp\left(-\frac{2N^2t^2}{\sum_{i=1}^N{(b_i-a_i)^2}}\right).
\end{align*}
\end{theorem}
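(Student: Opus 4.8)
The plan is to prove the one-sided estimate $\myP[\hat X-\E[\hat X]\ge t]\le \exp(-2N^2t^2/\sum_{i=1}^N(b_i-a_i)^2)$ by the Cram\'er--Chernoff (exponential moment) method, and then obtain the two-sided statement by applying the same argument to $-X_i$ and taking a union bound over the two tails. First I would write $\ell_i\equiv b_i-a_i$ and center the variables as $Y_i\equiv X_i-\E[X_i]$, so that the $Y_i$ are independent, mean zero, and each $Y_i$ lies in an interval of length $\ell_i$.

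For any $\lambda>0$, Markov's inequality applied to $\exp(\lambda\sum_i Y_i)$ together with independence gives
\[
\myP\left[\sum_{i=1}^N Y_i\ge Nt\right]\;\le\; e^{-\lambda Nt}\,\E\!\left[\exp\Bigl(\lambda\sum_{i=1}^N Y_i\Bigr)\right]\;=\;e^{-\lambda Nt}\prod_{i=1}^N\E\!\left[e^{\lambda Y_i}\right].
\]
The key ingredient is \emph{Hoeffding's lemma}: a mean-zero random variable $Y$ supported on an interval of length $\ell$ satisfies $\E[e^{\lambda Y}]\le\exp(\lambda^2\ell^2/8)$. I would prove this by setting $\psi(\lambda)\equiv\log\E[e^{\lambda Y}]$ (well-defined and smooth since $Y$ is bounded) and checking that $\psi(0)=0$, that $\psi'(0)=\E[Y]=0$, and that $\psi''(\lambda)$ equals the variance of $Y$ under the tilted law proportional to $e^{\lambda Y}$; since that law is again supported on an interval of length $\ell$, its variance is at most $(\ell/2)^2$ (Popoviciu's inequality, with equality for the two-point distribution at the endpoints), so $\psi''(\lambda)\le\ell^2/4$, and a second-order Taylor expansion of $\psi$ about $0$ yields $\psi(\lambda)\le\lambda^2\ell^2/8$.

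Substituting $\E[e^{\lambda Y_i}]\le\exp(\lambda^2\ell_i^2/8)$ into the product bound gives $\myP[\sum_i Y_i\ge Nt]\le\exp(-\lambda Nt+\tfrac{\lambda^2}{8}\sum_i\ell_i^2)$, and minimizing the exponent over $\lambda>0$ — the minimizer is $\lambda^\star=4Nt/\sum_i\ell_i^2$ — produces $\exp(-2N^2t^2/\sum_i\ell_i^2)$. Dividing the event through by $N$ recovers the one-sided bound on $\hat X-\E[\hat X]$; applying the same reasoning to the $-Y_i$, which lie in intervals of the same lengths, controls the lower tail, and a union bound delivers the two-sided inequality. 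The main obstacle is Hoeffding's lemma — in particular the uniform variance bound $\psi''(\lambda)\le(\ell/2)^2$ on the exponentially tilted law; the remaining pieces (the Chernoff bound, the use of independence, and the scalar optimization over $\lambda$) are routine.
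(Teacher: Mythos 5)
The paper gives no proof of this statement---it is quoted verbatim as Theorem 2.2.6 of Vershynin---so there is no in-paper argument to compare against. Your route (Chernoff's exponential-moment bound plus Hoeffding's lemma via the tilted-measure computation of $\psi''$ and Popoviciu's variance bound) is the standard one, and the one-sided part is carried out correctly: the optimization over $\lambda$ at $\lambda^\star=4Nt/\sum_i\ell_i^2$ does give the exponent $-2N^2t^2/\sum_i\ell_i^2$.

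The gap is in your final step. The union bound over the two tails yields
\[
\myP\left[\,|\hat{X}-\E[\hat{X}]|\geq t\,\right]\;\leq\;2\exp\left(-\frac{2N^2t^2}{\sum_{i=1}^N(b_i-a_i)^2}\right),
\]
with a factor of $2$ that you silently drop, so your argument does not deliver the displayed inequality, which has no such factor. Nor can that factor be removed by a better argument: take $N=1$ and $X_1$ uniform on $\{0,1\}$, so $a_1=0$, $b_1=1$, and $|\hat{X}-\E[\hat{X}]|=1/2$ surely; at $t=1/2$ the left-hand side equals $1$ while the claimed right-hand side is $e^{-1/2}<1$. (Vershynin's Theorem 2.2.6 is in fact the one-sided bound $\myP[\hat{X}-\E[\hat{X}]\geq t]\leq\exp(\cdots)$; the two-sided version necessarily carries the factor $2$.) You should either retain the factor $2$ in your conclusion---flagging that the statement as transcribed, and the downstream Lemma 3.4 that inherits it, should be adjusted accordingly---or prove only the one-sided inequality that the cited theorem actually asserts. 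Everything else in your write-up is sound.
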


The inequality below is the basis for the trace bound in Lemma~\ref{l_2}.

\begin{theorem}[Theorem 2 in \cite{cortinovis2022randomized}]\label{t_CK}
Let $\mb\in\rmm$ be symmetric with $b_{jj}=0$, $1\leq j\leq m$, and $\vomega\in\real^m$
be a Rademacher vector, then
\begin{align*}
\myP\left[|\vomega\t\mb\vomega|\geq t\right]\leq 2\,
\exp\left(-\frac{t^2}{8(\|\mb\|_F^2+t\|\mb\|_2)}\right).
\end{align*}
\end{theorem}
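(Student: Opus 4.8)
Write $Z\equiv\vomega\t\mb\vomega=\sum_{i\neq j}b_{ij}\omega_i\omega_j$. Since $b_{jj}=0$, we have $\E[Z]=0$, so $Z$ is a mean-zero Rademacher chaos of order two and the claim is a Hanson--Wright-type estimate. It is enough to bound $\myP[Z\geq t]$: the lower tail $\myP[Z\leq -t]$ follows by running the same argument for the matrix $-\mb$, which has zero diagonal and the same norms $\|\mb\|_F$, $\|\mb\|_2$, and the two tails together account for the leading factor $2$. The plan is to control the moment generating function $\E[e^{\lambda Z}]$ on a range $0<\lambda<c/\|\mb\|_2$, and then to optimize over $\lambda$ with a sub-gamma (Bernstein-type) Chernoff argument.

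For the moment generating function I would proceed in two moves. First, \emph{decoupling}: by the classical decoupling inequality for quadratic forms in independent symmetric random variables, there is an absolute constant (one may take $4$) with $\E[f(Z)]\leq\E[f(4\,\vomega\t\mb\vomega_\star)]$ for every convex $f$, where $\vomega_\star$ is an independent Rademacher copy; applying this with $f(x)=e^{\lambda x}$ passes to the decoupled chaos $Z_\star\equiv\vomega\t\mb\vomega_\star$. Second, \emph{conditioning and a Gaussian comparison}: conditionally on $\vomega_\star$, the quantity $Z_\star$ is a linear combination of the independent Rademacher entries of $\vomega$ with coefficient vector $\mb\vomega_\star$, so Hoeffding's lemma gives $\E_{\vomega}[e^{\lambda Z_\star}\mid\vomega_\star]\leq\exp(\tfrac12\lambda^2\|\mb\vomega_\star\|_2^2)$, and it remains to bound $\E_{\vomega_\star}\exp(\tfrac12\lambda^2\,\vomega_\star\t\mb^2\vomega_\star)$. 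The map $\mathbf u\mapsto\exp(\tfrac12\lambda^2\|\mb\mathbf u\|_2^2)$ is convex on $\real^m$, so the contraction (Rademacher--Gaussian) comparison principle replaces $\vomega_\star$ by a standard Gaussian vector $\mathbf g$ at the cost of an absolute constant in the exponent; diagonalizing $\mb^2$ then yields, for $2s\|\mb\|_2^2<1$,
\[ \E_{\mathbf g}[e^{s\,\mathbf g\t\mb^2\mathbf g}]=\prod_{k}(1-2s\mu_k^2)^{-1/2}\leq\exp\!\Big(\frac{s\,\|\mb\|_F^2}{1-2s\,\|\mb\|_2^2}\Big), \]
where $\mu_1,\dots,\mu_m$ are the singular values of $\mb$ and $\sum_k\mu_k^2=\|\mb\|_F^2$. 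Chaining these two moves gives, on a range $|\lambda|<c_0/\|\mb\|_2$, a bound $\log\E[e^{\lambda Z}]\leq\tfrac12\lambda^2 v/(1-b\lambda)$ with $v$ proportional to $\|\mb\|_F^2$ and $b$ proportional to $\|\mb\|_2$ (on the admissible range one bounds $\lambda^2\|\mb\|_2^2$ by a constant times $\lambda\|\mb\|_2$, turning the quadratic denominator into a linear one).

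Finally, the standard sub-gamma Chernoff bound turns $\log\E[e^{\lambda Z}]\leq\tfrac12\lambda^2 v/(1-b\lambda)$ into $\myP[Z\geq t]\leq\exp(-t^2/(2v+2bt))$ via $\lambda=t/(v+bt)$, and combining with the lower tail gives the stated inequality. I expect the main obstacle to be twofold: (i) obtaining a Hanson--Wright moment generating function estimate for a \emph{Rademacher} (hence non-rotation-invariant) quadratic form, for which the decouple-then-compare device above seems the cleanest route; and (ii) tracking the absolute constants --- from decoupling, from Hoeffding's lemma, and from the Rademacher--Gaussian comparison --- tightly enough to land on $v=4\|\mb\|_F^2$ and $b=4\|\mb\|_2$, and hence exactly on the constant $1/8$. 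If the constant bookkeeping proves awkward, an alternative is to first establish the moment bound $\|Z\|_p\leq C(\sqrt p\,\|\mb\|_F+p\,\|\mb\|_2)$ by the same decouple-and-condition argument --- now with Khintchine's inequality in place of Hoeffding's lemma and a short recursion on $\vomega_\star\t\mb^2\vomega_\star$, whose Frobenius and operator norms are bounded by $\|\mb\|_2\|\mb\|_F$ and $\|\mb\|_2^2$ --- and then convert moments to a tail.
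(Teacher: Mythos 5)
First, a point of comparison: the paper does not prove this statement at all. Theorem~\ref{t_CK} is imported verbatim as Theorem~2 of \cite{cortinovis2022randomized}, so the only ``paper's own proof'' is the citation, and your attempt has to be measured against the argument in that reference.

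Your outline is the standard Hanson--Wright route (decouple with constant $4$, condition and apply Hoeffding's lemma, pass from Rademacher to Gaussian by convexity, diagonalize, then a sub-gamma Chernoff bound). Every step is individually sound, and it does prove an inequality of the stated \emph{form}, $\myP[|\vomega\t\mb\vomega|\ge t]\le 2\exp\bigl(-t^2/(c(\|\mb\|_F^2+t\|\mb\|_2))\bigr)$ for some absolute constant $c$. The genuine gap is the one you flag yourself at the end: this route provably cannot deliver the constant $c=8$. After decoupling, Hoeffding's lemma gives $\E_{\vomega}[e^{4\lambda Z_\star}\mid\vomega_\star]\le\exp(8\lambda^2\|\mb\vomega_\star\|_2^2)$, and by Jensen $\E_{\vomega_\star}\exp(8\lambda^2\|\mb\vomega_\star\|_2^2)\ge\exp(8\lambda^2\|\mb\|_F^2)$; so anything produced downstream of the decoupling step has variance proxy $v\ge 16\|\mb\|_F^2$, hence an exponent denominator of at least $2v=32\|\mb\|_F^2$, already four times too large --- and the Rademacher-to-Gaussian comparison costs a further factor of $\pi/2$, pushing the denominator to roughly $16\pi\|\mb\|_F^2+\cdots$. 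The cited source reaches $c=8$ by a sharper, direct bound on the moment generating function of the \emph{undecoupled} chaos (equivalent, in your notation, to $v=4\|\mb\|_F^2$ and $b=4\|\mb\|_2$), followed by exactly the sub-gamma Chernoff step you describe; your fallback via the moment bound $\|Z\|_p\le C(\sqrt{p}\,\|\mb\|_F+p\,\|\mb\|_2)$ has the same defect of an unspecified $C$. For this paper's purposes the damage would be limited --- only the explicit constants in Lemma~\ref{l_2} and the results built on it (Theorems \ref{t_4}, \ref{t_1}, and~\ref{thm:chainingmixed}) would change --- but as a proof of the theorem as stated, with its constant, the proposal does not close.
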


\subsection{Hutchinson's trace estimator}\label{s_hutch}
We review properties and bounds for Hutchinson's trace estimator with Rademacher vectors \cite{Girard1989,Hutch1989}.

A Rademacher vector $\vomega\in\real^m$  has independent elements $\omega_j=\pm 1$, each chosen with probability $1/2$, $1\leq j\leq m$.

Given $\ma\in\rmm$ and a Rademacher vector $\vomega\in\real^m$, the quadratic form
$\vomega\t\ma\vomega$ is an unbiased 
estimator \cite[Proposition~1]{Hutch1989},
that is,
\begin{align*}
\E[\vomega\t\ma\vomega]=\trace(\ma) =\sum_{i=1}^m{\ve_i\t\ma\ve_i}.
\end{align*}
Extract the offdiagonal part of $\ma$,
\begin{align}\label{e_abar}
\overline{\ma}\equiv \ma-\diag(\ma).
\end{align}
If $\ma$ is also symmetric, then the variance can be expressed as
\begin{align*}
\V[\vomega\t\ma\vomega]=
2 \|\overline{\ma}\|_F^2,
\end{align*}
because  the elements of $\vomega$ satisfy $\omega_j^2=1$.
Furthermore, since all elements of $\vomega$ are bounded by one in magnitude,
 the random variable
$\vomega\t \ma\vomega$ is bounded by
\begin{align}\label{e_hutch2}
\trace(\ma)-\|\overline{\ma}\|_M\leq \vomega\t \ma\vomega\leq
\trace(\ma)+\|\overline{\ma}\|_M,
\end{align}
where for any matrix $\mb\in\rmm$ we define the norm
\begin{align}\label{e_Mnorm}
\|\mb\|_M\equiv 
\sum_{i=1}^m{\sum_{j=1}^m{|b_{ij}|}}=\|\myvec(\mb)\|_1.
\end{align}

For $N\geq 1$ independent Rademacher vectors $\vomega_i\in\real^m$, the
Monte Carlo trace estimator
\begin{align}\label{e_hutch3}
\hat{F}\equiv\frac{1}{N}\sum_{i=1}^N{\vomega_i\t\ma\vomega_i}
\end{align}
is an unbiased estimator with 
\begin{align}\label{e_hutch1}
\E[\hat{F}]=\trace(\ma),\qquad \V[\hat{F}]=\frac{1}{N}\V[\vomega_i\t\ma\vomega_i]=
\frac{2}{N} \|\overline{\ma}\|_F^2,
\end{align}
since the vectors $\vomega_i$ are independent.

The deviation from expectation of Hutchinson's estimator can be bounded for
general matrices $\ma$ by means of Hoeffding's inequality.
Tail bounds  of the form below are called subgaussian \cite[Section 2.5]{vershynin2018high}.

\begin{lemma}\label{l_1}
Let $\ma\in\rmm$, $\overline{\ma}$ as defined in (\ref{e_abar}), $\hat{F}$ as defined 
in~(\ref{e_hutch3}), and $t>0$. Then
\begin{align*}
\myP\left[|\hat{F}-\trace(\ma)|\geq t\right]\leq 
\exp\left(\frac{-Nt^2}{2\,\|\overline{\ma}\|_M^2}\right),
\end{align*}
where $\|\overline{\ma}\|_F\leq \|\overline{\ma}\|_M\leq m\|\overline{\ma}\|_F$.
\end{lemma}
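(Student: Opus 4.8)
The plan is to apply Hoeffding's inequality (Theorem~\ref{t_hoeff}) to the independent summands $X_i\equiv\vomega_i\t\ma\vomega_i$ appearing in $\hat{F}$ of \eqref{e_hutch3}, using the deterministic envelope \eqref{e_hutch2} to supply the interval widths, and to finish with the unbiasedness relation $\E[\hat{F}]=\trace(\ma)$ from \eqref{e_hutch1}.

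First I would record the boundedness of each $X_i$. Expanding $\vomega_i\t\ma\vomega_i=\trace(\ma)+\sum_{j\neq k}a_{jk}\,\omega_{i,j}\omega_{i,k}$ and using $|\omega_{i,j}|=1$ gives
\[
|X_i-\trace(\ma)|\;\le\;\sum_{j\neq k}|a_{jk}|\;=\;\|\overline{\ma}\|_M ,
\]
which is exactly \eqref{e_hutch2}; this derivation uses nothing beyond $\omega_{i,j}^2=1$, so it applies to the general (not necessarily symmetric) $\ma\in\rmm$ of the lemma. Hence each $X_i$ lies in an interval $[a_i,b_i]$ of width $b_i-a_i=2\|\overline{\ma}\|_M$.

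Next I would invoke Hoeffding's inequality. Since the version in Theorem~\ref{t_hoeff} requires nonnegative endpoints, I would first replace $X_i$ by $X_i+\|\overline{\ma}\|_M-\trace(\ma)\in[0,2\|\overline{\ma}\|_M]$, a shift that leaves the centered deviation $|\hat{F}-\E[\hat{F}]|$ unchanged. With $\sum_{i=1}^N(b_i-a_i)^2=4N\|\overline{\ma}\|_M^2$, Theorem~\ref{t_hoeff} gives
\[
\myP\!\left[\,|\hat{F}-\E[\hat{F}]|\ge t\,\right]\;\le\;\exp\!\left(-\frac{2N^2t^2}{4N\|\overline{\ma}\|_M^2}\right)\;=\;\exp\!\left(\frac{-Nt^2}{2\,\|\overline{\ma}\|_M^2}\right),
\]
and substituting $\E[\hat{F}]=\trace(\ma)$ from \eqref{e_hutch1} produces the claimed bound (with the convention that the right-hand side is $0$ when $\overline{\ma}=0$, which is consistent since $\hat{F}$ is then deterministic).

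Finally, the norm comparison $\|\overline{\ma}\|_F\le\|\overline{\ma}\|_M\le m\|\overline{\ma}\|_F$ follows by viewing $\overline{\ma}$ as the vector of its $m^2-m$ off-diagonal entries: the left inequality is $\|v\|_2\le\|v\|_1$ and the right is Cauchy--Schwarz, $\|v\|_1\le\sqrt{m^2-m}\,\|v\|_2\le m\|v\|_2$. I do not expect a genuine obstacle here; the only point needing care is the nonnegativity hypothesis in the cited form of Hoeffding's inequality, which the harmless recentering above removes (one could equally cite the unrestricted Hoeffding bound).
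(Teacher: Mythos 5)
Your proof is correct and follows essentially the same route as the paper: set $X_i=\vomega_i\t\ma\vomega_i$, use the envelope \eqref{e_hutch2} to get interval widths $b_i-a_i=2\|\overline{\ma}\|_M$, apply Hoeffding's inequality, and obtain the norm comparison from the $\ell_1$/$\ell_2$ relations on the vectorized offdiagonal part. Your extra care about the nonnegativity hypothesis $0\le a_i$ in the stated form of Theorem~\ref{t_hoeff} (handled by a harmless recentering) is a valid observation that the paper's proof glosses over, but it does not change the argument.
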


\begin{proof} Set $X_i\equiv \vomega_i\t\ma\boldsymbol\vomega_i$, 
$1\leq i\leq N$. From (\ref{e_hutch2}) follows that  $a_i\leq X_i \leq b_i$ with 
\begin{align*}
a_i=\trace(\ma)-\|\overline{\ma}\|_M,\qquad
b_i=\trace(\ma)+\|\overline{\ma}\|_M,\qquad 1\leq i \leq N.
\end{align*}
Then $\sum_{i=1}^N{(b_i-a_i)^2}=4N\|\overline{\ma}\|_M^2.$ Now apply Theorem~\ref{t_hoeff}.

For the bounds on $\|\overline{\ma}\|_M$, apply the vector norm relations to 
$\|\overline{\ma}\|_F=\|\myvec(\ma)\|_2$ and
$\|\overline{\ma}\|_M=\|\myvec(\ma)\|_1$, and use the fact that $\overline{\ma}$ has
at most $m(m-1)\leq m^2$ non-zero elements.
\end{proof}

Lemma~\ref{l_1} implies that the probability of
$\hat{F}$ being an accurate estimator increases
with decreasing 
offdiagonal mass $\|\overline{\ma}\|_M$ of $\ma$.

The bound below is an alternative to Lemma~\ref{l_1}. It
requires a symmetric matrix, although a real nonsymmetric matrix
$\mb$ can be replaced by its symmetric part, 
$\vx\t\mb\vx=\frac{1}{2}\vx\t(\mb+\mb\t)\vx$ 
\cite[Section~1]{cortinovis2022randomized}.
Tail bounds of the form below are referred to as
mixed tails since they represent a mixture of subgaussian and subexponential tails \cite[Section 2.8]{vershynin2018high}.

\begin{lemma}[Corollary~1 in \cite{cortinovis2022randomized}]\label{l_2}
Let  $\ma\in\rmm$ be symmetric, $\overline{\ma}$ as defined 
in~(\ref{e_abar}), $\hat{F}$ as defined in~(\ref{e_hutch3}) and $t>0$. Then
\begin{align*}
\myP\left[|\hat{F}-\trace(\ma)|\geq t\right]\leq 2 \exp\left(\frac{-Nt^2}{8(\|\overline{\ma}\|_F^2+t\|\overline{\ma}\|_2)}\right).
\end{align*}
\end{lemma}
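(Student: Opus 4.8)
The plan is to rewrite the deviation $\hat{F}-\trace(\ma)$ as a single Rademacher quadratic form with zero diagonal, and then apply Theorem~\ref{t_CK} directly. First I would peel off the diagonal: since the Rademacher entries satisfy $\omega_{j}^2=1$, each sample splits as $\vomega_i\t\ma\vomega_i=\trace(\ma)+\vomega_i\t\overline{\ma}\vomega_i$, so that $\hat{F}-\trace(\ma)=\frac1N\sum_{i=1}^N\vomega_i\t\overline{\ma}\vomega_i$, a sum of $N$ independent quadratic forms each built from the symmetric zero-diagonal matrix $\overline{\ma}$.

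Next I would collapse this sum into one quadratic form by stacking. Let $\vomega\equiv(\vomega_1\t\ \cdots\ \vomega_N\t)\t\in\real^{mN}$, which is again a Rademacher vector, and let $\mb\equiv\frac1N(\mi_N\otimes\overline{\ma})\in\real^{mN\times mN}$ be the block-diagonal matrix with $N$ diagonal blocks each equal to $\frac1N\overline{\ma}$. Then $\hat{F}-\trace(\ma)=\vomega\t\mb\vomega$, and $\mb$ inherits symmetry and a zero diagonal from $\overline{\ma}$, so the hypotheses of Theorem~\ref{t_CK} are met.

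The only computation left is to evaluate the two norms appearing in Theorem~\ref{t_CK} for the block-diagonal matrix: since the Frobenius norm adds over blocks and the spectrum is the union of the block spectra, $\|\mb\|_F^2=N\,\|\tfrac1N\overline{\ma}\|_F^2=\tfrac1N\|\overline{\ma}\|_F^2$ and $\|\mb\|_2=\tfrac1N\|\overline{\ma}\|_2$. Substituting these into Theorem~\ref{t_CK} and multiplying numerator and denominator of the exponent by $N$ yields exactly the stated bound.

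I do not expect a genuine obstacle here; the points that need care are (i) checking that the stacked matrix $\mb$ still has a zero diagonal and is symmetric, so Theorem~\ref{t_CK} actually applies, and (ii) computing the Frobenius and spectral norms of the block-diagonal $\mb$ correctly. One could instead avoid the stacking step and argue by a direct Chernoff bound — estimate the moment generating function of each $\vomega_i\t\overline{\ma}\vomega_i$, take the product over $i$ by independence, and optimize the free parameter — but this merely re-derives the proof of Theorem~\ref{t_CK} along the way and is less economical than reducing to it.
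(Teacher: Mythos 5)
Your proposal is correct and matches the paper's own (commented-out) argument essentially verbatim: both form the block-diagonal matrix $\frac{1}{N}\mi_N\otimes\overline{\ma}$, stack the $N$ Rademacher vectors into a single Rademacher vector in $\real^{mN}$, compute $\|\mb\|_F^2=\frac{1}{N}\|\overline{\ma}\|_F^2$ and $\|\mb\|_2=\frac{1}{N}\|\overline{\ma}\|_2$, and apply Theorem~\ref{t_CK}. No gaps.
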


\begin{comment}
\begin{proof}
As in \cite[Proof of Theorem 1]{cortinovis2022randomized}, set
$\mb=\frac{1}{N}\mi_{N}\bigotimes \overline{\ma}\in\real^{(mN)\times (mN)}$, and let 
$\vomega\in\real^{mN}$ be a Rademacher vector with
$\vomega^T\equiv \begin{bmatrix}\vomega_1^T& \cdots&\vomega_N\end{bmatrix}^T$.
Then
\begin{align*}
\vomega^T\mb\vomega=\frac{1}{N}\sum_{i=1}^{N}{\vomega_i^T\overline{\ma}\vomega_i}=\frac{1}{N}\sum_{i=1}^N{\left(\vomega_i^T\ma\vomega_i-\trace(\ma)\right)}
\end{align*}
with $\|\mb\|_F=\frac{1}{\sqrt{N}}\|\overline{\ma}\|_F$ and 
$\|\mb\|_2=\frac{1}{N}\|\overline{\ma}\|_2$.
Now apply Theorem~\ref{t_CK}.
\end{proof}
\end{comment}

Like Lemma~\ref{l_1}, Lemma~\ref{l_2} implies that the probability of
$\hat{F}$ being an accurate estimator, for a fixed $t >0$,  increases
with decreasing 
offdiagonal mass of $\ma$.

\section{Bounds based on nets}\label{s_main}
After listing the assumptions (Section~\ref{s_ass}),
we present probabilistic bounds 
on the backward error for matrices with finite parameter spaces (Section~\ref{s_finite}) 
and infinite parameter spaces (Section~\ref{s_infinite});
and end with auxiliary results 
for the proofs of Theorem~\ref{t_1}
and Corollary~\ref{c_1}
(Section~\ref{s_auxiliary}).

\subsection{Assumptions}\label{s_ass}
Let $\ma(\vtheta)\in\rmm$ be a set of symmetric matrices that depend on parameters $\vtheta$ from a 
compact parameter space $\Theta \subset \real^K$.

We use the following abbreviations that focus on the
parameters~$\vtheta$.
The trace and its desired minimizer $\vtheta^*$ are
given by
\begin{align}\label{e_theta2}
F(\vtheta)\equiv \trace(\ma(\vtheta)),\qquad
F(\vtheta^*)\equiv\min_{\vtheta\in\Theta}{F(\vtheta)}.
\end{align}
In practice, we can only compute an estimator and its minimizer $\hat{\vtheta}$, based 
on $N\geq 1$ independent Rademacher vectors
$\vomega_i\in\real^m$, 
given by
\begin{align*}
\hat{F}(\vtheta)\equiv \frac{1}{N}\sum_{i=1}^N{\vomega_i\t\ma(\vtheta)\vomega_i},\qquad
\hat{F}(\hat{\vtheta})\equiv\min_{\vtheta\in\Theta}{\hat{F}(\vtheta)}.
\end{align*}

From (\ref{e_hutch1}) follows that 
$\hat{F}(\vtheta)$ is an unbiased estimator,
\begin{align}\label{e_unbiased}
\E[\hat{F}(\vtheta)]=F(\vtheta)\qquad \text{for all}\  \vtheta\in\Theta.
\end{align}

\begin{remark}\label{r_R}
    In a number of applications, the optimization problem~\eqref{eqn:standardform} contains a regularization term $\mathcal{R}(\vtheta)$. Since the Monte Carlo estimator does not involve this additional term, we assume without loss of generality that $\mathcal{R}(\vtheta) = 0$. Extending the results to include  $\mathcal{R}(\vtheta)$ should be straightforward. 
\end{remark}

\subsubsection*{Strategy}
In general, we do not know how the matrices 
$\ma(\vtheta)$
depend on the parameters $\vtheta$, so bounding the
forward error
$\hat{\vtheta}-\vtheta^*$ is not feasible. Instead,
we bound the backward error $F(\hat{\vtheta})-F(\vtheta^*)$ of
 the computed $\hat{\vtheta}$.
Motivated by statistical learning theory 
\cite{KRV2024}, 
\cite[Section 8.4]{vershynin2018high}, we interpret this
backward error as an `excess risk'.
This also allows us to ignore uniqueness issues of $\hat{\vtheta}$ and $\vtheta^*$ in this general context.
Under specific assumptions \cite[Section 2.1]{Anitescu2012}, though, it is possible to show
local uniqueness.
Below we bound the excess risk in terms of the `overall loss function' $\max_{\vtheta\in\Theta}{|F(\vtheta)-\hat{F}(\vtheta)|}$.

\begin{remark}\label{r_excess}
Under the above assumptions
\begin{align*}
0\leq F(\hat{\vtheta})-F(\vtheta^*)\leq 
2\max_{\vtheta\in\Theta}{|F(\vtheta)-\hat{F}(\vtheta)|}.
\end{align*}
The lower bound follows from $\vtheta^*$ being a minimizer for $F$.
The upper bound follows from $\hat{\vtheta}$ being a minimizer for $\hat{F}$ \cite[Section 2.1]{KRV2024},
\begin{align*}
 F(\hat{\vtheta})-F(\vtheta^*)&\leq F(\hat{\vtheta})-\hat{F}(\hat{\vtheta})
 +\underbrace{\hat{F}(\hat{\vtheta})-\hat{F}(\vtheta^*)}_{\leq 0}+
 \hat{F}(\vtheta^*)-F(\vtheta^*)\\
 &\leq |F(\hat{\vtheta})-\hat{F}(\hat{\vtheta})|+|\hat{F}(\vtheta^*)-F(\vtheta^*)|
 \leq 2\max_{\vtheta\in\Theta}{|\hat{F}(\vtheta)-F(\vtheta)|}.
 \end{align*}
 \end{remark}

All that remains is to bound the overall loss
function $\max_{\vtheta\in\Theta}{|F(\vtheta)-\hat{F}(\vtheta)|}$.  To this end, we need the following 
ingredients. Analogous to  (\ref{e_abar}),
define the offdiagonal part 
of a matrix~$\ma(\vtheta)$ by
\begin{align}\label{e_offdiag}
\overline{\ma}(\vtheta)\equiv\ma(\vtheta)-\diag(\ma(\vtheta)).
\end{align}
The maximal norms of the offdiagonal part are
\begin{align}\label{e_offdiagn}
\alpha_{\xi}\equiv \max_{\vtheta\in\Theta}{\|\overline{\ma}(\vtheta)\|_{\xi}}, \qquad
 \xi \in \{2, F, M\},
\end{align}
where $\|\cdot\|_M$ is defined in (\ref{e_Mnorm}).

\subsection{Finite parameter space}\label{s_finite}
For a parameter space $\Theta$ with finite cardinality~$|\Theta|$, we determine two expressions for the sampling amount~$N$ so that the backward error
$F(\hat{\vtheta})-F(\vtheta^*)$ is bounded by $\epsilon$.

\begin{theorem}\label{t_4}
Given the assumptions in Section~\ref{s_ass},
let $0<\epsilon<1$ and $0<\delta<1$.
If
\begin{align*}
N \ge \frac{8}{\epsilon^2}\,\alpha_M^2\, \ln(|\Theta|/\delta), \quad \text{or} \quad 
N \ge \frac{16}{\epsilon^2}\, (2 \alpha_F^2 + \epsilon\alpha_2) \ln(2\,|\Theta|/\delta)
\end{align*}
then $0 \le F(\hat\vtheta) - F(\vtheta^*) < \epsilon$ with probability at least $1-\delta$.  
\end{theorem}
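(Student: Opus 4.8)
The plan is to combine the excess-risk reduction from Remark~\ref{r_excess} with a union bound over the finite parameter space, applying one of the two single-matrix trace concentration bounds (Lemma~\ref{l_1} or Lemma~\ref{l_2}) pointwise. First I would invoke Remark~\ref{r_excess} to reduce the claim to showing that $\max_{\vtheta\in\Theta}|\hat F(\vtheta)-F(\vtheta)| < \epsilon/2$ with probability at least $1-\delta$; this is the event whose complement I will control. For the first bound, I would fix $\vtheta\in\Theta$, apply Lemma~\ref{l_1} to $\ma(\vtheta)$ with $t=\epsilon/2$, and bound $\|\overline{\ma}(\vtheta)\|_M\le\alpha_M$ uniformly, giving $\myP[|\hat F(\vtheta)-F(\vtheta)|\ge\epsilon/2]\le\exp(-N\epsilon^2/(8\alpha_M^2))$. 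A union bound over the $|\Theta|$ parameters gives failure probability at most $|\Theta|\exp(-N\epsilon^2/(8\alpha_M^2))$, and setting this $\le\delta$ and solving for $N$ yields exactly $N\ge(8/\epsilon^2)\alpha_M^2\ln(|\Theta|/\delta)$.

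For the second, mixed-tail bound I would proceed identically but use Lemma~\ref{l_2} with $t=\epsilon/2$, bounding $\|\overline{\ma}(\vtheta)\|_F\le\alpha_F$ and $\|\overline{\ma}(\vtheta)\|_2\le\alpha_2$; since $\epsilon<1$ the term $t\|\overline{\ma}(\vtheta)\|_2=(\epsilon/2)\alpha_2$ is what appears, so the denominator $8(\|\overline{\ma}\|_F^2 + t\|\overline{\ma}\|_2)$ is bounded above by $8(\alpha_F^2+(\epsilon/2)\alpha_2)=4(2\alpha_F^2+\epsilon\alpha_2)$. This gives per-parameter failure probability $2\exp(-N(\epsilon/2)^2/(4(2\alpha_F^2+\epsilon\alpha_2)))=2\exp(-N\epsilon^2/(16(2\alpha_F^2+\epsilon\alpha_2)))$; the union bound over $\Theta$ produces the factor $2|\Theta|$, and requiring the total $\le\delta$ and solving gives $N\ge(16/\epsilon^2)(2\alpha_F^2+\epsilon\alpha_2)\ln(2|\Theta|/\delta)$.

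The only subtlety, rather than a genuine obstacle, is bookkeeping the constants and the $t=\epsilon/2$ substitution so that the monotonicity of the tail bound in $\|\overline{\ma}\|_\xi$ lets me replace the $\vtheta$-dependent norms by their maxima $\alpha_\xi$ without loss; this is where the $8\to16$ and the extra factor of $2$ in the log arise. One should also note the randomness is shared across $\vtheta$ (the same Rademacher vectors $\vomega_i$ are reused for every $\vtheta$), but the union bound does not require independence across $\vtheta$, so this causes no difficulty — it only matters that for each fixed $\vtheta$ the vectors are i.i.d.\ Rademacher, which holds by assumption. Finally I would remark that the lower bound $0\le F(\hat\vtheta)-F(\vtheta^*)$ is deterministic, coming from $\vtheta^*$ minimizing $F$, so only the upper tail needs the probabilistic argument.
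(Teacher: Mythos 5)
Your proposal is correct and follows essentially the same route as the paper's own proof: reduce via Remark~\ref{r_excess} to bounding the overall loss by $\epsilon/2$, apply Lemma~\ref{l_1} (resp.\ Lemma~\ref{l_2}) pointwise with $t=\epsilon/2$, replace the $\vtheta$-dependent norms by $\alpha_\xi$, take a union bound over $|\Theta|$, and solve for $N$; the constants work out identically. Your added observations (that the union bound needs no independence across $\vtheta$ despite the shared Rademacher vectors, and that the lower bound $0\le F(\hat\vtheta)-F(\vtheta^*)$ is deterministic) are accurate and consistent with the paper.
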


\begin{proof}
For the first bound, let $t>0$, and apply Lemma~\ref{l_1} and the union bound,
\begin{align*}
\myP[ \max_{\vtheta \in \Theta}| F(\vtheta) - \hat{F}(\vtheta)| \ge t] \le \sum_{\vtheta \in \Theta} \exp\left( \frac{-Nt^2}{2\,\|\overline\ma(\vtheta)\|_M^2}\right)  \le   
|\Theta|\exp\left( \frac{-Nt^2}{2\,\alpha_M^2}\right). 
    \end{align*}  
Set the upper bound equal to $\delta$ and $t = \epsilon/2$.
Solving for $N$ bounds the overall loss function by
$\max_{\vtheta \in \Theta}| F(\vtheta) - \hat{F}(\vtheta)| \leq\epsilon/2$
with
\begin{align*}
N \ge \frac{8}{\epsilon^2}\, \alpha_M^2 \,\ln(|\Theta|/\delta).
\end{align*}
At last, insert the above bound for the overall loss 
into Remark~\ref{r_excess}.

For the second bound, let $t>0$, and apply Lemma~\ref{l_2} and the union bound,
\begin{align*}
        \myP[ \max_{\vtheta \in \Theta}| F(\vtheta) - \hat{F}(\vtheta)| \ge t] &\le  \sum_{\vtheta \in \Theta} 2\,\exp\left( \frac{-Nt^2}{8(\|\overline\ma(\vtheta)\|_F^2 + t \|\overline\ma(\vtheta)\|_2)}\right)\\ 
        &\le 2\, |\Theta|\exp\left( \frac{-Nt^2}{8(\alpha_F^2  + t \alpha_2)}\right). 
    \end{align*}  
Now proceed as above:
Set the upper bound equal to $\delta$ and $t = \epsilon/2$.
Solving for $N$ bounds the overall loss function by
$\max_{\vtheta \in \Theta}| F(\vtheta) - \hat{F}(\vtheta)| \leq\epsilon/2$
with
\begin{align*}
N \ge \frac{16}{\epsilon^2}\, (2 \alpha_F^2 + \epsilon\alpha_2) \ln(2\,|\Theta|/\delta).
\end{align*}
At last, insert the above bound for the overall loss 
into Remark~\ref{r_excess}.
\end{proof}

 The sampling amount~$N$ in Theorem~\ref{t_4} 
 is small if the maximal offdiagonal mass is small and 
 the cardinality of the parameter space $\Theta$ is low.
 The bounds do not depend
on the matrix dimension~$m$.

The bounds for the OED application below end up depending
on the matrix dimension $m$,
through the cardinality of $\Theta$.

\begin{remark}
In the application to OED (Section~\ref{ssec:oed}),
where $k$ sensors are to be placed at $m>k$ candidate
locations, the cardinality of $\Theta$ can be bounded by
\cite[Exercise 0.0.5]{vershynin2018high}
\[ |\Theta| = \binom{m}{k} \le \sum_{j=0}^k \binom{m}{j} \le \left(\frac{em}{k} \right)^k, \]
which gives a sampling amount of
\[ N \ge  \frac{8}{\epsilon^2} \,\alpha_M^2\,
\left(k\ln\left(\frac{em}{k}\right)-\ln{(\delta)}\right)
\]
for the first bound, and
\[ N \ge  \frac{16}{\epsilon^2} \,
(\alpha_F^2+\epsilon\,\alpha_2)\,
\left(k\ln\left(\frac{em}{k}\right)+\ln{(2/\delta)}\right)\]
for the second bound.
\end{remark}

\subsection{Infinite Parameter Space}\label{s_infinite}
We determine the sampling amount~$N$ required to bound the backward error $F(\hat{\vtheta})-F(\vtheta^*)$ by $\epsilon$. The first bound (Theorem~\ref{t_1}) applies to general parameter spaces $\Theta$, 
while the second bound (Corollary~\ref{c_1}) is customized
to parameter spaces that are $K$-dimensional spheres.

We assume that the parameter space $\Theta$ is compact, and that the matrices
are Lipschitz continuous for some $L_2>0$ with
\begin{align}\label{e_theta1}
\|\ma(\vtheta)-\ma(\vtheta^{\prime})\|_2\leq L_2 
\|\vtheta-\vtheta^{\prime}\|_2\qquad 
\text{for all}\ \vtheta, \vtheta^{\prime}\in\Theta.
\end{align}

We discretize the infinite space $\Theta$ 
with a finite net $\mathcal{C}$.

\begin{definition}[Definitions 4.2.1 and 4.2.2 in \cite{vershynin2018high}]\label{d_net}
Let $\eta>0$. A subset $\mathcal{C}\subset \Theta$ is called an {\rm $\eta$-net} of 
$\Theta$ if every point of $\Theta$ lies within  a two-norm distance $\eta$ of some point
in~$\mathcal{C}$. 
That is, for all $\vtheta\in\Theta$ there exists a $\vtheta_0\in\mathcal{C}$ so that 
\begin{align*}
\|\vtheta-\vtheta_0\|_2\leq \eta.
\end{align*}
The smallest possible cardinality of the $\eta$-net 
$\mathcal{C}$ is called the {\rm covering number} 
$S(\eta)$ of~$\mathcal{C}$. This is illustrated in Figure~\ref{fig:epsilonnet}.
\end{definition}

The bounds in Theorem~\ref{t_1} below are the same as 
the ones for finite parameter spaces
in Theorem~\ref{t_4}, except that the cardinality
$|\Theta|$ is replaced by the covering number
$S(\eta)$ of the particular $\eta$-net.

\begin{figure}[!ht]
    \centering
    \includegraphics[width=0.5\linewidth]{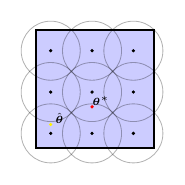}
    \caption{An illustration of the $\eta$-net argument. The parameter space $\Theta$ is covered by a net with 
    radius $\eta$. The covering number is $S(\eta) \le 9$ and the black dots represent the centers of the net.}
    \label{fig:epsilonnet}
\end{figure}

\begin{theorem}\label{t_1}
Given the assumptions above and in Section~\ref{s_ass},
let $0<\epsilon<1$, $0<\delta<1$, and abbreviate
\begin{align*}
\gamma\equiv S\left(\frac{\epsilon}{4mL_2}\right).
\end{align*}
If
\begin{align*}
N\geq \frac{8\alpha_M^2}{\epsilon^2}
\ln\left(\gamma/\delta\right)
\qquad \text{or} \qquad
N\geq \frac{16(2\alpha_F^2+\epsilon\alpha_2)}{\epsilon^2}\ln(2\gamma/\delta)
\end{align*}
then $0\le F(\hat{\vtheta})-F(\vtheta^*) \leq  \epsilon$ with probability at least $1-\delta$.
\end{theorem}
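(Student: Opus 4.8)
The plan is to reduce the infinite-parameter case to the finite case of Theorem~\ref{t_4} applied to the net~$\mathcal{C}$, paying a small price for the discretization error. First I would fix the $\eta$-net $\mathcal{C}$ with $\eta = \epsilon/(4mL_2)$, so that $|\mathcal{C}| \le \gamma = S(\epsilon/(4mL_2))$, and observe that for any $\vtheta \in \Theta$ there is a $\vtheta_0 \in \mathcal{C}$ with $\|\vtheta-\vtheta_0\|_2 \le \eta$. The key point is to control $|F(\vtheta)-F(\vtheta_0)|$ and $|\hat F(\vtheta) - \hat F(\vtheta_0)|$ by the Lipschitz assumption~\eqref{e_theta1}. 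For the true objective, $|F(\vtheta)-F(\vtheta_0)| = |\trace(\ma(\vtheta)-\ma(\vtheta_0))| \le m\|\ma(\vtheta)-\ma(\vtheta_0)\|_2 \le m L_2 \eta$, using $|\trace(\mb)| \le m\|\mb\|_2$ for symmetric $\mb$. For the estimator, since each $\vomega_i$ is a Rademacher vector with $\|\vomega_i\|_2^2 = m$, we get $|\hat F(\vtheta)-\hat F(\vtheta_0)| \le \tfrac1N \sum_i \|\vomega_i\|_2^2 \|\ma(\vtheta)-\ma(\vtheta_0)\|_2 \le m L_2 \eta$ as well. Hence $\max_{\vtheta\in\Theta}|F(\vtheta)-\hat F(\vtheta)| \le \max_{\vtheta_0\in\mathcal{C}}|F(\vtheta_0)-\hat F(\vtheta_0)| + 2mL_2\eta$.

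With the choice $\eta = \epsilon/(4mL_2)$ the discretization term $2mL_2\eta$ equals $\epsilon/2$, so it remains to force $\max_{\vtheta_0\in\mathcal{C}}|F(\vtheta_0)-\hat F(\vtheta_0)| \le \epsilon/2$ with probability at least $1-\delta$ — wait, I should be more careful with the budget. Actually, to end up with the overall loss bounded by $\epsilon/2$ (so that Remark~\ref{r_excess} yields backward error $\le \epsilon$), I would split as $\max_{\vtheta\in\Theta}|F-\hat F| \le \max_{\mathcal{C}}|F-\hat F| + \epsilon/4$ by taking $\eta = \epsilon/(8mL_2)$... but the theorem statement uses $\gamma = S(\epsilon/(4mL_2))$, so the intended split must be different. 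Re-examining: the cleanest route consistent with the stated $\gamma$ is to apply the concentration bounds on $\mathcal{C}$ with threshold $t=\epsilon/2$ exactly as in Theorem~\ref{t_4} (giving the stated $N$ with $|\Theta|$ replaced by $|\mathcal{C}|\le\gamma$), obtaining $\max_{\mathcal{C}}|F-\hat F| \le \epsilon/2$, and then argue that for the \emph{minimizers} the discretization error enters only once more carefully: one shows $F(\hat\vtheta)-F(\vtheta^*) \le 2\max_{\mathcal{C}}|F-\hat F| + (\text{discretization})$ and balances so the total is $\epsilon$. I would therefore run the excess-risk chain of Remark~\ref{r_excess} directly on the net, inserting $\pm F(\vtheta_0),\pm\hat F(\vtheta_0)$ around the nearest net points to $\hat\vtheta$ and $\vtheta^*$, which produces four Lipschitz error terms of size $mL_2\eta$ each, i.e.\ total $4mL_2\eta = \epsilon$ when $\eta = \epsilon/(4mL_2)$, plus $2\max_{\mathcal{C}}|F-\hat F|$; hence one needs the latter bounded by a quantity that, combined, stays below $\epsilon$.

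Concretely the steps are: (1) state the Lipschitz transfer lemma for both $F$ and $\hat F$ on net points; (2) form an $\eta$-net with $\eta=\epsilon/(4mL_2)$, so $|\mathcal{C}|\le\gamma$; (3) apply Lemma~\ref{l_1} (resp.\ Lemma~\ref{l_2}) together with the union bound over $\mathcal{C}$ at threshold $t$ to control $\max_{\mathcal{C}}|F-\hat F|$, choosing $t$ and $N$ from the two displayed formulas; (4) bound $F(\hat\vtheta)-F(\vtheta^*)$ by inserting the nearest net points, getting a term $2\max_{\mathcal{C}}|F-\hat F|$ plus discretization errors summing to at most $\epsilon$-minus-that; (5) collect constants to match the stated $N$. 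The nonnegativity $0\le F(\hat\vtheta)-F(\vtheta^*)$ is immediate since $\vtheta^*$ minimizes $F$.

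The main obstacle is bookkeeping the $\epsilon$-budget so that the final bound is \emph{exactly} $\epsilon$ while the covering radius is \emph{exactly} $\epsilon/(4mL_2)$ and the sampling amounts are \emph{identical} to Theorem~\ref{t_4} with $|\Theta|\to\gamma$: this forces the discretization contribution to the backward error to be absorbed without changing the $8\alpha_M^2/\epsilon^2$ and $16(2\alpha_F^2+\epsilon\alpha_2)/\epsilon^2$ prefactors, which means the net errors must be charged against a sub-fraction of $\epsilon$ and the concentration threshold adjusted correspondingly (likely $t$ slightly below $\epsilon/2$, or the net radius chosen so $4mL_2\eta$ is a strict fraction of $\epsilon$); I expect the proof to choose $\eta$ and $t$ so these pieces telescope cleanly, and the only real care needed is verifying the Rademacher bound $|\hat F(\vtheta)-\hat F(\vtheta_0)|\le mL_2\|\vtheta-\vtheta_0\|_2$ uniformly in the samples, which is what makes the high-probability event on $\mathcal{C}$ suffice for all of $\Theta$.
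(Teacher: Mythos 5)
Your overall route is exactly the paper's: discretize $\Theta$ by an $\eta$-net, transfer the loss from $\Theta$ to the net via the Lipschitz bounds $|F(\vtheta)-F(\vtheta_0)|\le mL_2\eta$ and $|\hat F(\vtheta)-\hat F(\vtheta_0)|\le mL_2\eta$ (the paper's Lemma~\ref{l_3}, proved verbatim as you do, including the $\|\vomega_i\|_2^2=m$ step), control the net maximum by Lemma~\ref{l_1} resp.\ Lemma~\ref{l_2} plus a union bound (the paper's Lemma~\ref{l_2b}), and finish with Remark~\ref{r_excess}. So all the ingredients are present and correct. The genuine gap is that you never actually close the argument: you end by \emph{hoping} that some choice of $\eta$ and $t$ makes the budget ``telescope cleanly,'' rather than exhibiting it. A proof must commit to the split. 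The paper's choice is: Lemma~\ref{l_3} with $\eta=t/(2mL_2)$ gives $\myP[\max_{\vtheta\in\Theta}|\hat F(\vtheta)-F(\vtheta)|\ge 2t]\le S(t/(2mL_2))\exp(-Nt^2/(2\alpha_M^2))$, then Remark~\ref{r_excess} converts this to a tail bound on $F(\hat\vtheta)-F(\vtheta^*)$ at level $4t$, and finally $\epsilon=4t$ is substituted, producing $\gamma=S(\epsilon/(4mL_2))$ and the prefactor $8\alpha_M^2/\epsilon^2$.

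That said, your unease about the bookkeeping is not a misunderstanding on your part --- it points at a real tension. If you track the factors honestly, Remark~\ref{r_excess} gives $F(\hat\vtheta)-F(\vtheta^*)\le 2\max_{\vtheta\in\Theta}|\hat F(\vtheta)-F(\vtheta)|$, so the event $\{F(\hat\vtheta)-F(\vtheta^*)\ge 4t\}$ is contained in $\{\max_{\vtheta\in\Theta}|\hat F(\vtheta)-F(\vtheta)|\ge 2t\}$, whose probability Lemma~\ref{l_3} bounds by $S(t/(2mL_2))\exp(-Nt^2/(2\alpha_M^2))$; with $\epsilon=4t$ this yields a covering radius $\epsilon/(8mL_2)$ and a prefactor $32\alpha_M^2/\epsilon^2$, not the stated $\epsilon/(4mL_2)$ and $8\alpha_M^2/\epsilon^2$. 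The paper's displayed intermediate bound $S(t/(mL_2))\exp(-2Nt^2/\alpha_M^2)$ corresponds to applying Lemma~\ref{l_3} at level $4t$ and then treating the excess risk as if it were bounded by $\max_{\vtheta\in\Theta}|\hat F(\vtheta)-F(\vtheta)|$ rather than twice it. So the correct diagnosis of your difficulty is: with the decomposition you (and the paper) use, the stated constants require absorbing one factor of $2$ that Remark~\ref{r_excess} does not obviously supply; a fully rigorous version of this argument delivers the theorem with $\gamma=S(\epsilon/(8mL_2))$ and prefactors $32\alpha_M^2/\epsilon^2$ and $64(2\alpha_F^2+\epsilon\alpha_2)/\epsilon^2$ (up to the analogous adjustment in the mixed-tail case). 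You should either prove that weaker-constant version cleanly, or find the extra factor of $2$ explicitly; leaving it as an expectation is the missing step.
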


\begin{proof}
The proof relies on two subsequent lemmas in 
Section~\ref{s_auxiliary}:
Lemma~\ref{l_2b} bounds the loss 
$\max_{\vtheta\in\mathcal{C}}{|F(\vtheta)-\hat{F}(\vtheta)|}$ over  the net~$\mathcal{C}$, via 
Hoeffding's Theorem~\ref{t_hoeff}. Then
Lemma~\ref{l_3} extends this bound to the overall loss  $\max_{\vtheta\in\Theta}{|F(\vtheta)-\hat{F}(\vtheta)|}$. 

\begin{enumerate}
\item First lower bound for $N$:
Apply Lemma~\ref{l_3} and Remark~\ref{r_excess}
to conclude
  \begin{align*}
\myP\left[F(\hat{\vtheta})-F(\vtheta^*) \geq  4t\right]
 \leq S\left(\frac{t}{mL_2}\right)\exp\left(\frac{-2Nt^2}{\alpha_M^2}\right).
\end{align*}   
Setting  $\epsilon=4t$ gives
 \begin{align*}
\myP\left[F(\hat{\vtheta})-F(\vtheta^*) \geq  \epsilon\right]
 \leq \underbrace{S\left(\frac{\epsilon}{4mL_2}\right)}_{\gamma}\exp\left(\frac{-N\epsilon^2}{8\,\alpha_M^2}\right).
\end{align*}   
Set the upper above bound equal to $\delta$,
and solve for $N$.

\item Second lower bound for $N$:
Apply Lemma~\ref{l_3} and Remark~\ref{r_excess}
to conclude
  \begin{align*}
\myP\left[F(\hat{\vtheta})-F(\vtheta^*) \geq  4t\right]
 \leq 2\,S\left(\frac{t}{mL_2}\right)\exp\left(\frac{-Nt^2}{2(\alpha_F^2+2t\alpha_2)}\right).
 \end{align*}
Setting $\epsilon=4t$ gives 
\begin{align*}
\myP\left[F(\hat{\vtheta})-F(\vtheta^*) \geq  
\epsilon\right]\leq 2\,
\underbrace{S\left(\frac{\epsilon}{4mL_2}\right)}_{\gamma}
\exp\left(\frac{-N\epsilon^2}{16(2\alpha_F^2+\epsilon\alpha_2)}\right).
\end{align*}   
Set the above upper bound
equal to $\delta$, and solve for~$N$.
\end{enumerate}
\end{proof}

Theorem~\ref{t_1} implies that 
the sampling amount $N$ is
small if the matrices 
$\ma(\vtheta)$ have small offdiagonal mass.

Now we assume specifically that the parameter space $\mathcal{H}$ is a $K$-dimensional sphere that is centered at $\vtheta_c\in\mathcal{H}$,
\begin{align}\label{e_H}
\mathcal{H}\equiv \{\vtheta\in\real^K: \|\vtheta - \vtheta_c\|_2\leq B\}\qquad \text{for some}\quad B>0.
\end{align}
This allows us to replace the covering number
by an expression containing $K$ and~$B$.

\begin{corollary}\label{c_1}
Given the assumptions above and in Section~\ref{s_ass},
let $0<\epsilon<12mL_2B$ and $0<\delta<1$. If
\begin{align}\label{e_l44a}
N\geq \frac{8\alpha_M^2}{\epsilon^2}\left(K\ln(12mL_2B/\epsilon)-\ln(\delta)\right)
\end{align}
or
\begin{align}\label{e_l44b}
N\geq \frac{16(2\alpha_F^2+\epsilon\alpha_2)}{\epsilon^2}\left(K\ln(12mL_2B/\epsilon)+\ln(2/\delta)\right)
\end{align}
then $0\le F(\hat{\vtheta})-F(\vtheta^*) \leq  \epsilon$ with probability at least $1-\delta$.
\end{corollary}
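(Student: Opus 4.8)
The plan is to derive Corollary~\ref{c_1} from Theorem~\ref{t_1} simply by bounding the covering number $\gamma = S(\epsilon/(4mL_2))$ of the $K$-dimensional sphere $\mathcal{H}$ of radius $B$. The standard volumetric bound on covering numbers (see \cite[Corollary 4.2.13]{vershynin2018high}) states that a Euclidean ball of radius $B$ in $\real^K$ admits an $\eta$-net of cardinality at most $(1 + 2B/\eta)^K \le (3B/\eta)^K$ whenever $\eta \le B$. Here $\eta = \epsilon/(4mL_2)$, so $3B/\eta = 12mL_2 B/\epsilon$, and the condition $\eta \le B$ is exactly the hypothesis $\epsilon < 12mL_2 B$ (indeed $\epsilon < 12mL_2B$ gives $\eta = \epsilon/(4mL_2) < 3B$; to get $\eta \le B$ one wants $\epsilon \le 4mL_2B$, so I should double-check which volumetric constant the authors intend — the cleanest route is to use the bound $S(\eta) \le (3B/\eta)^K$ which holds for all $\eta \le B$, and note the hypothesis as stated is what licenses the logarithm to be positive).

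First I would invoke Theorem~\ref{t_1} verbatim: under the stated assumptions, if $N \ge \frac{8\alpha_M^2}{\epsilon^2}\ln(\gamma/\delta)$ or $N \ge \frac{16(2\alpha_F^2+\epsilon\alpha_2)}{\epsilon^2}\ln(2\gamma/\delta)$, then $0 \le F(\hat\vtheta) - F(\vtheta^*) \le \epsilon$ with probability at least $1-\delta$. Next I would substitute the volumetric estimate $\gamma \le (12mL_2B/\epsilon)^K$, so that
\begin{align*}
\ln(\gamma/\delta) \le K\ln(12mL_2B/\epsilon) - \ln\delta, \qquad
\ln(2\gamma/\delta) \le K\ln(12mL_2B/\epsilon) + \ln(2/\delta).
\end{align*}
Since the right-hand sides of the two sampling conditions in Theorem~\ref{t_1} are monotone increasing in $\gamma$, replacing $\ln(\gamma/\delta)$ and $\ln(2\gamma/\delta)$ by the larger quantities above yields sufficient conditions for $N$, which are precisely \eqref{e_l44a} and \eqref{e_l44b}. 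Hence any $N$ satisfying \eqref{e_l44a} or \eqref{e_l44b} also satisfies the corresponding hypothesis of Theorem~\ref{t_1}, and the conclusion follows.

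I do not anticipate a serious obstacle; this is essentially a plug-in argument. The only point requiring mild care is the exact form of the covering-number bound and its range of validity: one must ensure that the chosen constant ($3$ rather than, say, $1 + 2 = 3$ only in the regime $\eta \le B$) is consistent with the hypothesis $0 < \epsilon < 12mL_2B$ under which the logarithm $\ln(12mL_2B/\epsilon)$ is positive and the bound is meaningful. A secondary bookkeeping point is that $S$ denotes the covering number of $\Theta = \mathcal{H}$ itself (a subset of $\real^K$), so applying the ball-covering bound directly is legitimate since $\mathcal{H}$ is a ball; if $\Theta$ were a general compact subset one would first pass to an enclosing ball, but that is not needed here.
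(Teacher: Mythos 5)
Your proposal is correct and follows essentially the same route as the paper: invoke Theorem~\ref{t_1}, bound the covering number of the ball $\mathcal{H}$ by $(12mL_2B/\epsilon)^K$ (the paper packages this as Lemma~\ref{l_net} with $\eta=\epsilon/(4mL_2)$, where the $\max\{\cdot,1\}$ form absorbs your concern about the regime of validity), and use the hypothesis $0<\epsilon<12mL_2B$ to guarantee the logarithm is positive before substituting into the sampling conditions. The monotonicity observation you add is implicit in the paper's ``insert this into the expressions for $N$'' step, so there is no substantive difference.
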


\begin{proof}
Apply Lemma~\ref{l_net} with $\eta=\epsilon/(4mL_2)$,
\begin{align*}
\gamma=S\left(\frac{\epsilon}{4mL_2}\right)=
\max\left\{\left(\frac{12mL_2B}{\epsilon}\right)^K, 1\right\}.
\end{align*}
From $0<\epsilon<12mL_2B$ follows
\begin{align*}
\gamma=\left(\frac{12mL_2B}{\epsilon}\right)^K>1.
\end{align*}
Insert this into the expressions for $N$ in Theorem~\ref{t_1}.
\end{proof}

Corollary~\ref{c_1} implies that
the sampling amount~$N$ is
small, if the maximal offdiagonal mass is small and 
 the dimension~$K$ of the sphere $\mathcal{H}$ is low.
 The sampling amount increases only logarithmically
with the matrix dimension~$m$, radius $B$ of the sphere,
and the Lipschitz constant~$L_2$ of the matrices $\ma(\vtheta)$.

\subsection{Auxiliary lemmas}\label{s_auxiliary}
We present three auxiliary lemmas: two for the
proof of Theorem~\ref{t_1}, and one for the proof
of Corollary~\ref{c_1}.

The first lemma bounds
the loss function over the net in terms of the cardinality of the net and the sampling 
amount~$N$ from Hoeffding's Theorem~\ref{t_hoeff}.

 \begin{lemma}\label{l_2b}
 Under the assumptions in Sections \ref{s_ass} and~\ref{s_infinite}, let $t>0$. Then 
   \begin{align*}
\myP\left[\max_{\vtheta\in\mathcal{C}}{|\hat{F}(\vtheta)-F(\vtheta)}|\geq t\right]
\leq S(\eta)\exp\left(\frac{-Nt^2}{2\,\alpha_M^2}\right)
\end{align*}   
and
\begin{align*}
\myP\left[\max_{\vtheta\in\mathcal{C}}{|\hat{F}(\vtheta)-F(\vtheta)}|\geq t\right]
\leq 2\, S(\eta)\exp\left(\frac{-Nt^2}{8(\alpha_F^2+t\alpha_2)}\right).
\end{align*}   
 \end{lemma}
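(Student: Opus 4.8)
The plan is to reduce the statement to the finite case and then apply a union bound over the net $\mathcal{C}$. Since $\mathcal{C}\subset\Theta$ is a finite subset with cardinality at most $S(\eta)$ (by Definition~\ref{d_net}, choosing $\mathcal{C}$ to be an $\eta$-net of smallest possible size), we are estimating the maximum of $|\hat F(\vtheta)-F(\vtheta)|$ over finitely many points. The first step is therefore to fix $\vtheta\in\mathcal{C}$ and recall from~\eqref{e_unbiased} that $\E[\hat F(\vtheta)]=F(\vtheta)$, so that each $|\hat F(\vtheta)-F(\vtheta)|$ is a centered deviation to which the pointwise concentration results of Section~\ref{s_hutch} apply directly.

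Second, I would apply Lemma~\ref{l_1} to each $\vtheta\in\mathcal{C}$ to obtain
\begin{align*}
\myP\left[|\hat F(\vtheta)-F(\vtheta)|\geq t\right]\leq\exp\left(\frac{-Nt^2}{2\,\|\overline\ma(\vtheta)\|_M^2}\right)\leq\exp\left(\frac{-Nt^2}{2\,\alpha_M^2}\right),
\end{align*}
where the last inequality uses the definition $\alpha_M=\max_{\vtheta\in\Theta}\|\overline\ma(\vtheta)\|_M$ from~\eqref{e_offdiagn} together with monotonicity of $x\mapsto e^{-c/x}$ for $c,x>0$. A union bound over the at most $S(\eta)$ elements of $\mathcal{C}$ then yields the first displayed inequality. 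For the second inequality, I would repeat the argument with Lemma~\ref{l_2} in place of Lemma~\ref{l_1}, using $\|\overline\ma(\vtheta)\|_F\leq\alpha_F$ and $\|\overline\ma(\vtheta)\|_2\leq\alpha_2$ to bound the denominator $8(\|\overline\ma(\vtheta)\|_F^2+t\|\overline\ma(\vtheta)\|_2)\leq 8(\alpha_F^2+t\alpha_2)$, which gives a per-point bound of $2\exp(-Nt^2/(8(\alpha_F^2+t\alpha_2)))$; the union bound over $\mathcal{C}$ then produces the factor $2\,S(\eta)$.

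This argument is essentially a restatement of the finite-parameter-space reasoning in the proof of Theorem~\ref{t_4}, so there is no real obstacle; the only point requiring minor care is that $\mathcal{C}$ need not have cardinality exactly $S(\eta)$ but can be chosen with $|\mathcal{C}|\leq S(\eta)$, and that the union bound's sum is monotone in the number of terms, so replacing $|\mathcal{C}|$ by the upper bound $S(\eta)$ only weakens the inequality. I would state this explicitly at the start. Everything else is the bookkeeping of substituting the uniform offdiagonal-norm bounds $\alpha_M,\alpha_F,\alpha_2$ into the pointwise tail bounds.
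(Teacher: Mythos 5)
Your argument is correct and is essentially identical to the paper's proof: fix $\vtheta\in\mathcal{C}$, invoke unbiasedness and the pointwise tail bounds of Lemmas~\ref{l_1} and~\ref{l_2}, replace the $\vtheta$-dependent offdiagonal norms by their suprema $\alpha_M$, $\alpha_F$, $\alpha_2$, and apply a union bound over the at most $S(\eta)$ points of the net. Your explicit remark that $\mathcal{C}$ should be taken of minimal cardinality so that $|\mathcal{C}|\le S(\eta)$ is a small point of care the paper leaves implicit, but it does not change the argument.
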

 
 \begin{proof}
  According to (\ref{e_unbiased}), $\hat{F}(\vtheta)$ is an unbiased estimator for $F(\vtheta)$.
  \begin{enumerate}
\item First bound: Lemma~\ref{l_1} implies for a fixed $\vtheta\in\mathcal{C}$ that
\begin{align*}
\myP\left[|\hat{F}(\vtheta)-F(\vtheta)|\geq  t\right]\leq
\exp\left(\frac{-Nt^2}{2\,\|\overline{\ma}(\vtheta)\|_M^2}\right).
\end{align*} 
The union bound over all elements in $\mathcal{C}$ gives
 \begin{align*}
\myP\left[\max_{\vtheta\in\mathcal{C}}{|\hat{F}(\vtheta)-F(\vtheta)}|\geq t\right]&\leq
\sum_{\vtheta\in\mathcal{C}}{\myP\left[|\hat{F}(\vtheta)-F(\vtheta)|\geq t\right]}\\
&\leq\sum_{\vtheta\in\mathcal{C}}{\exp\left(\frac{-Nt^2}{2\,\|\overline{\ma}(\vtheta)\|_M^2}\right)}
\leq
S(\eta)\exp\left(\frac{-Nt^2}{2\,\alpha_M^2}\right).
\end{align*} 

\item Second bound: Apply Lemma~\ref{l_2} and proceed as above.  
\end{enumerate}
 \end{proof}

 The second  lemma
extends the bound in Lemma~\ref{l_2b} to the whole parameter space~$\Theta$
 by exploiting the Lipschitz continuity 
 of~$\ma(\vtheta)$.
  
 \begin{lemma}\label{l_3}
  Under the assumptions in Sections \ref{s_ass} and~\ref{s_infinite}, let $t>0$. Then
  \begin{align*}
\myP\left[ \max_{\vtheta\in\Theta}{|\hat{F}(\vtheta)-F(\vtheta)|}
 \geq  2t\right]
 \leq S\left(\frac{t}{2mL_2}\right)\exp\left(\frac{-Nt^2}{2\,\alpha_M^2}\right)
\end{align*}   
and
 \begin{align*}
\myP\left[ \max_{\vtheta\in\Theta}{|\hat{F}(\vtheta)-F(\vtheta)|}
 \geq  2t\right]
 \leq 2\,S\left(\frac{t}{2mL_2}\right)\exp\left(\frac{-Nt^2}{8(\alpha_F^2+t\alpha_2)}\right).
\end{align*}   
    \end{lemma}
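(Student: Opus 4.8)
The plan is to interpolate between the loss on the net~$\mathcal{C}$ (controlled by Lemma~\ref{l_2b}) and the loss on all of~$\Theta$, using the Lipschitz assumption~\eqref{e_theta1} to bound the discrepancy $|\hat F(\vtheta)-F(\vtheta)| - |\hat F(\vtheta_0)-F(\vtheta_0)|$ for $\vtheta$ near a net point~$\vtheta_0$. So the first step is a deterministic perturbation estimate: for any $\vtheta,\vtheta'\in\Theta$, both $F$ and each realization of $\hat F$ are Lipschitz in~$\vtheta$. Indeed $|F(\vtheta)-F(\vtheta')| = |\trace(\ma(\vtheta)-\ma(\vtheta'))| \le m\|\ma(\vtheta)-\ma(\vtheta')\|_2 \le mL_2\|\vtheta-\vtheta'\|_2$ (using $|\trace(\mb)|\le m\|\mb\|_2$), and likewise $|\hat F(\vtheta)-\hat F(\vtheta')| = \frac1N|\sum_i \vomega_i\t(\ma(\vtheta)-\ma(\vtheta'))\vomega_i| \le \max_i \|\vomega_i\|_2^2\,\|\ma(\vtheta)-\ma(\vtheta')\|_2 = m\,\|\ma(\vtheta)-\ma(\vtheta')\|_2 \le mL_2\|\vtheta-\vtheta'\|_2$, since Rademacher vectors have $\|\vomega_i\|_2^2 = m$. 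Hence the map $\vtheta\mapsto \hat F(\vtheta)-F(\vtheta)$ is $2mL_2$-Lipschitz, deterministically, for every realization of the $\vomega_i$.

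The second step turns this into a covering argument. Take the $\eta$-net $\mathcal{C}$ with $\eta = t/(2mL_2)$, so its cardinality is at most $S(t/(2mL_2))$. For arbitrary $\vtheta\in\Theta$ pick $\vtheta_0\in\mathcal{C}$ with $\|\vtheta-\vtheta_0\|_2\le\eta$; then by the Lipschitz bound $|\hat F(\vtheta)-F(\vtheta)| \le |\hat F(\vtheta_0)-F(\vtheta_0)| + 2mL_2\eta = |\hat F(\vtheta_0)-F(\vtheta_0)| + t$. Taking the maximum over $\vtheta\in\Theta$ gives $\max_{\vtheta\in\Theta}|\hat F(\vtheta)-F(\vtheta)| \le \max_{\vtheta\in\mathcal{C}}|\hat F(\vtheta)-F(\vtheta)| + t$. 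Consequently, the event $\{\max_{\vtheta\in\Theta}|\hat F(\vtheta)-F(\vtheta)| \ge 2t\}$ is contained in the event $\{\max_{\vtheta\in\mathcal{C}}|\hat F(\vtheta)-F(\vtheta)| \ge t\}$.

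The third step just applies Lemma~\ref{l_2b} with this particular choice $\eta = t/(2mL_2)$ to the right-hand event, which yields the two claimed bounds: the first with the factor $S(t/(2mL_2))\exp(-Nt^2/(2\alpha_M^2))$ from the Hoeffding-based estimate, and the second with $2S(t/(2mL_2))\exp(-Nt^2/(8(\alpha_F^2+t\alpha_2)))$ from the mixed-tail estimate of Lemma~\ref{l_2}. Note the covering-number argument in Definition~\ref{d_net} presumes $\mathcal{C}$ is the smallest $\eta$-net so its size is exactly $S(\eta)$; if a larger net were used the bound is only improved after replacing $|\mathcal{C}|$ by $S(\eta)$, so there is no loss.

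The only real subtlety — the step I would be most careful about — is the Lipschitz constant for $\hat F$: one must argue it uniformly over \emph{all} realizations of the Rademacher vectors (so that the event inclusion is deterministic and the probability bound on the net transfers directly), and track where the factor $m$ enters. Here the crucial point is the elementary inequality $|\vomega\t\mb\vomega|\le\|\vomega\|_2^2\,\|\mb\|_2 = m\|\mb\|_2$ for Rademacher $\vomega$, combined with $|\trace(\mb)|\le m\|\mb\|_2$; these two together produce the $2mL_2$ and hence the argument $t/(2mL_2)$ of $S(\cdot)$. Everything else is bookkeeping.
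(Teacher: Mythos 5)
Your proposal is correct and follows essentially the same route as the paper: a Lipschitz perturbation bound on both $F$ and $\hat F$ (using $|\trace(\mb)|\le m\|\mb\|_2$ and $\|\vomega_i\|_2^2=m$) to transfer the loss from the net $\mathcal{C}$ to all of $\Theta$ at the cost of an additive $2mL_2\eta=t$, followed by Lemma~\ref{l_2b} with $\eta=t/(2mL_2)$. The only cosmetic difference is that you fix $\eta$ up front and phrase the reduction as a deterministic event inclusion, whereas the paper works with a maximizer $\vtheta_h$ and substitutes $\eta=t/(2mL_2)$ at the end.
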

 
 \begin{proof}
Since $\Theta$ is compact and both $F$ and $\hat{F}$ are continuous,  there is a $\vtheta_h\in\Theta$ so that  
 \begin{align*}
 \max_{\vtheta\in\Theta}{|\hat{F}(\vtheta)-F(\vtheta)|}= |\hat{F}(\vtheta_h)-F(\vtheta_h)|.
\end{align*}
 Pick $\vtheta_0\in\mathcal{C}$ so that 
 $\|\vtheta_h-\vtheta_0\|_2\leq \eta$. The Lipschitz continuity~(\ref{e_theta1}) implies
 \begin{align*}
|F(\vtheta_h)-F(\vtheta_0)|&=|\sum_{i=1}^m{\ve_i\t(\ma(\vtheta_h)-\ma(\vtheta_0))\ve_i}|
\leq \sum_{i=1}^m{|\ve_i\t(\ma(\vtheta_h)-\ma(\vtheta_0))\ve_i|} \\
&\leq m\|\ma(\vtheta_h)-\ma(\vtheta_0)\|_2\leq mL_2\|\vtheta_h-\vtheta_0\|_2\leq mL_2\eta.
\end{align*}
 Similarly, the fact that the elements of Rademacher vectors are $\pm1$ implies for the
 estimator
  \begin{align*}
|\hat{F}(\vtheta_h)-\hat{F}(\vtheta_0)|&=
\left|\frac{1}{N}\sum_{i=1}^N{\vomega_i\t(\ma(\vtheta_h)-\ma(\vtheta_0))\vomega_i}\right|\\
&\leq \frac{1}{N}\sum_{i=1}^N{|\vomega_i\t(\ma(\vtheta_h)-\ma(\vtheta_0))\vomega_i|} \\
&\leq \frac{1}{N}\sum_{i=1}^N{\|\vomega_i\|_2^2\|\ma(\vtheta_h)-\ma(\vtheta_0)\|_2} \leq mL_2\eta.
%&\leq m\|\ma(\vtheta_h)-\ma(\vtheta_0)\|_2\leq mL\eta.
\end{align*}
Subtract and add $\hat{F}(\vtheta_0)$ and $F(\vtheta_0)$ and use the above
Lipschitz continuity,
\begin{align*}
|\hat{F}(\vtheta_h)-F(\vtheta_h)|&\leq 
\underbrace{|\hat{F}(\vtheta_h)-\hat{F}(\vtheta_0)|}_{\leq mL_2\eta}+
|\hat{F}(\vtheta_0)-F(\vtheta_0)|+
\underbrace{|F(\vtheta_0)-F(\vtheta_h)|}_{\leq mL_2\eta}\\
&=2mL_2\eta+|\hat{F}(\vtheta_0)-F(\vtheta_0)|\\
&\leq 2mL_2\eta+\max_{\vtheta\in\mathcal{C}}{|\hat{F}(\vtheta)-F(\vtheta)|}.
\end{align*}
To summarize
 \begin{align*}
 \max_{\vtheta\in\Theta}{|\hat{F}(\vtheta)-F(\vtheta)|}= |\hat{F}(\vtheta_h)-F(\vtheta_h)|
 \leq  2mL_2\eta+\max_{\vtheta\in\mathcal{C}}{|\hat{F}(\vtheta)-F(\vtheta)|}.
\end{align*}
Now apply Lemma~\ref{l_2b} to 
$\max_{\vtheta\in\mathcal{C}}{|\hat{F}(\vtheta)-F(\vtheta)|}$ to deduce for the overall loss
\begin{align*}
\myP\left[ \max_{\vtheta\in\Theta}{|\hat{F}(\vtheta)-F(\vtheta)|}
 \geq  2mL_2\eta+t\right]
 \leq S(\eta)\exp\left(\frac{-Nt^2}{2\,\alpha_M^2}\right)
\end{align*}   
and
\begin{align*}
\myP\left[ \max_{\vtheta\in\Theta}{|\hat{F}(\vtheta)-F(\vtheta)|}
 \geq  2mL_2\eta+t\right]
 \leq 2\,S(\eta)\exp\left(\frac{-Nt^2}{8(\alpha_F^2+t\alpha_2)}\right).
\end{align*}   
At last, set $\eta=\frac{t}{2mL_2}$.
\end{proof}

The third auxiliary lemma specifies 
the number of points in a net~$\mathcal{C}$ required to discretize the $K$-dimensional sphere~$\mathcal{H}$
in (\ref{e_H}).

\begin{lemma}[(4.10) in \cite{vershynin2018high}]\label{l_net}
Under the assumptions in Sections \ref{s_ass} and~\ref{s_infinite}, there exists an $\eta$-net 
$\mathcal{C}\subset\mathcal{H}$
whose number of elements is at most
\begin{align*}
S(\eta)\equiv
\max\left\{\left(\frac{3B}{\eta}\right)^K, 1\right\}.
\end{align*}
\end{lemma}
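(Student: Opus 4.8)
The plan is to prove this by the standard volumetric packing argument in $\real^K$, making explicit the radius $B$ in the derivation behind~\cite[(4.10)]{vershynin2018high}. First I would dispose of the trivial case: if $\eta\geq B$, then the singleton $\mathcal{C}=\{\vtheta_c\}$ is already an $\eta$-net of $\mathcal{H}$, so $S(\eta)=1$ and the claimed bound holds. Hence I assume $\eta<B$ for the rest of the argument, which is the regime where the maximum equals $(3B/\eta)^K$.

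Next I would construct the net as a maximal $\eta$-separated subset $\mathcal{C}\subset\mathcal{H}$, i.e. a set with $\|\vtheta-\vtheta'\|_2>\eta$ for all distinct $\vtheta,\vtheta'\in\mathcal{C}$ that cannot be enlarged while keeping this separation property. Such a $\mathcal{C}$ exists and is finite, since $\mathcal{H}$ is bounded. By maximality, every point of $\mathcal{H}$ lies within two-norm distance $\eta$ of some point of $\mathcal{C}$ (otherwise that point could be added to $\mathcal{C}$), so $\mathcal{C}$ is an $\eta$-net of $\mathcal{H}$ in the sense of Definition~\ref{d_net}, and therefore $S(\eta)\leq|\mathcal{C}|$.

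The counting step is the heart of the proof. Because the points of $\mathcal{C}$ are pairwise more than $\eta$ apart, the open balls $\mathcal{B}(\vtheta,\eta/2)$ with $\vtheta\in\mathcal{C}$ are pairwise disjoint; and since each such $\vtheta$ satisfies $\|\vtheta-\vtheta_c\|_2\leq B$, all these balls are contained in $\mathcal{B}(\vtheta_c,B+\eta/2)$. Comparing $K$-dimensional Lebesgue volumes, which are homogeneous of degree $K$ under dilation so that the common factor $\operatorname{vol}(\mathcal{B}(\vzero,1))$ cancels, gives
\begin{align*}
|\mathcal{C}|\left(\frac{\eta}{2}\right)^{K}\leq\left(B+\frac{\eta}{2}\right)^{K},
\qquad\text{hence}\qquad
|\mathcal{C}|\leq\left(1+\frac{2B}{\eta}\right)^{K}.
\end{align*}
Finally, since $\eta<B$ we have $1+2B/\eta< B/\eta+2B/\eta=3B/\eta$, so $S(\eta)\leq|\mathcal{C}|\leq(3B/\eta)^{K}$, and together with the case $\eta\geq B$ this yields the stated maximum.

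There is no genuine obstacle here; the argument is classical. The only points that need a little care are the case distinction that produces the $\max\{\cdot,1\}$ (equivalently, ensuring $\eta<B$ before replacing $1+2B/\eta$ by $3B/\eta$), and the remark that volume ratios of balls in $\real^K$ depend only on $B$, $\eta$, and $K$ because Lebesgue measure scales as the $K$th power of the radius.
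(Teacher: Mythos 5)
Your proof is correct: the case split $\eta\geq B$ versus $\eta<B$, the maximal $\eta$-separated set, and the volume comparison $|\mathcal{C}|(\eta/2)^K\leq(B+\eta/2)^K$ followed by $1+2B/\eta<3B/\eta$ are all sound and yield exactly the stated bound. The paper gives no proof of its own here --- it cites Vershynin's (4.10) directly --- and your volumetric packing argument is precisely the standard derivation behind that citation, with the radius $B$ made explicit.
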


\section{Bounds based on generic chaining}\label{s_chaining}
The literature 
\cite{boucheron2003concentration,talagrand2005generic,
van2014probability,vershynin2018high} suggests that 
\textit{generic chaining} can produce bounds
that are superior to those derived from epsilon nets. Indeed, for Gaussian processes, Talagrand's majorizing measures theorem 
\cite[Theorem 8.6.1]{vershynin2018high}
shows that the bounds from chaining are optimal up to constants. 
However, we show that, in the context of trace optimization,
this superiority is not obvious. The bounds from
chaining do not require the Lipschitz continuity  of $\ma(\vtheta)$ but have the disadvantage of depending on functionals that are difficult to evaluate.

In contrast to the $\eta$-net in Definition~\ref{d_net}, which is often called an
epsilon-net or simply a net, chaining methods rely on multiscale nets. 
The idea is to set the context for chaining as follows: Represent the backward error
of the trace estimator for $\ma(\vtheta)$
as an element of a stochastic process 
on the parameter space $\Theta$,
\[ X_{\vtheta} = F(\vtheta) -\hat{F}(\vtheta) = \frac1N \sum_{j=1}^N(\trace(\ma(\vtheta)) - \vomega_j\t \ma(\vtheta) \vomega_j), \qquad \vtheta \in \Theta. \]
Since $\hat{F}(\vtheta)$ is an unbiased estimator for $F(\vtheta)$, the error
$X_{\vtheta}$ has zero mean.
To avoid issues with measurability of the supremum $\sup_{\vtheta \in \Theta} |X_{\vtheta}|$, we
assume that the process 
$\{X_{\vtheta}\}$ is separable \cite[Definition 5.22]{van2014probability}
-- an assumption that is almost always satisfied
\cite[Remark 5.23]{van2014probability}.

We determine the sampling amount~$N$ required to bound
the backward error of the estimator
for stochastic processes on
a parameter space $\Theta$. We derive 
bounds with subgaussian tails (Section~\ref{s_subg})
and mixed tails (Section~\ref{s_mixed}), as well as bounds 
specialized to parameter spaces that are spheres (Section~\ref{s_special}).
We end with several auxiliary results (Section~\ref{s_auxi}).
Where possible, we track the constants explicitly,
but make no claims as to their optimality. 

\subsection{Subgaussian tails}\label{s_subg}
We derive a bound for the backward error 
and the minimal sampling amount in terms of
a subgaussian tail.
After defining multi-scale nets
(Definition~\ref{d_adm}), we state our 
main result for the backward error (Theorem~\ref{thm:subgaussrisk}),
followed by a supporting result
(Theorem~\ref{thm:chainingsubgauss}).

\begin{definition}[(8.40) in \cite{vershynin2018high}]\label{d_adm}
 Given a set $\Theta$, a sequence of subsets $\{\mathcal{T}_k\}_{k \ge 0}$ is an admissible sequence if 
 \begin{enumerate}
\item  The subsets are increasing, i.e., $\mathcal{T}_k \subset \mathcal{T}_{k+1}$ for $k \ge 0$, and 
\item The subsets have bounded cardinality $|\mathcal{T}_k| \le 2^{2^k}$ for $k > 0$, and
$|\mathcal{T}_0| = 1$. 
\end{enumerate}
\end{definition}

For any  integer $\beta >0$,
we define the Talagrand $\gamma_\beta$-functional 
\cite[Definition 2.1]{Krahmer2014suprema},
\begin{align}\label{e_tala1}
\gamma_\beta(\Theta,d) \equiv \inf \sup_{\vtheta \in \Theta}\sum_{k=0}^\infty 2^{k/\beta}d(\vtheta,\mathcal{T}_k), 
\end{align}
where the infimum ranges over all admissible sequences, $\{\mathcal{T}_k\}_{k \ge 0}$.

We need the following (pseudo)-metrics\footnote{A
pseudo-metric does not necessarily satisfy the requirement that $d(\vtheta,\vphi) = 0$ implies $\vtheta = \vphi$.} on the offdiagonal
part (\ref{e_offdiag}) of matrices $\ma(\vtheta)\in\rmm$,
\begin{equation}\label{eqn:metrics} 
d_\xi(\vtheta,\vphi) \equiv  \|\overline\ma(\vtheta) - \overline\ma(\vphi)\|_\xi, \qquad \xi \in \{2, F, M\},  \end{equation} 
and, analogous to (\ref{e_offdiagn}), we define the corresponding norms 
\begin{equation}
    \label{eqn:alphas}
    \alpha_\xi \equiv \sup_{\vtheta \in \Theta} \|\overline\ma(\vtheta)\|_\xi, \qquad \xi \in \{2, F, M\},
\end{equation}
which are now based on a supremum.

Below, we present a subgaussian tail bound on the
backward error of the estimator, as well as a bound on the minimal
sampling amount~$N$ required to bound the 
backward error by~$\epsilon$.
\begin{theorem}\label{thm:subgaussrisk}
For $u \ge 2$
    \[ \myP\left[ {F}(\hat\vtheta) - F(\vtheta^*) \ge 
    \sqrt{\frac{8}{N}}\,u
    \left(2\gamma_2(\Theta,d_M) + \alpha_M \right)\right] \le 
    2\exp({-u^2/2}).   \]
Alternatively,     
let $ 0 < \delta < 1$ and $ \epsilon > 0$. If 
\begin{align}\label{e_t52b}
N \ge \frac{144}{ \epsilon^{2}} \ln(2/\delta) \max\{ \gamma_2(\Theta,d_M)^2, \alpha_M^2\},
\end{align}
then $ 0 \le {F}(\hat\vtheta) - F(\vtheta^*) \le \epsilon$ with probability at least $1-\delta$. 
     \end{theorem}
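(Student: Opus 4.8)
The plan is to bound the overall loss $\sup_{\vtheta \in \Theta}|F(\vtheta) - \hat{F}(\vtheta)|$ via a generic chaining argument, and then feed that bound into Remark~\ref{r_excess}. The starting point is the increment bound for the stochastic process $X_{\vtheta} = F(\vtheta) - \hat{F}(\vtheta)$: for fixed $\vtheta, \vphi \in \Theta$, the difference $X_{\vtheta} - X_{\vphi} = \frac1N \sum_{j=1}^N (\vomega_j\t(\overline\ma(\vphi) - \overline\ma(\vtheta))\vomega_j - \trace(\overline\ma(\vphi)-\overline\ma(\vtheta)))$ is an average of $N$ independent bounded centered random variables, each lying in an interval of width $2\|\overline\ma(\vtheta)-\overline\ma(\vphi)\|_M = 2d_M(\vtheta,\vphi)$ by the same reasoning as in~\eqref{e_hutch2}. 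Hoeffding's inequality (Theorem~\ref{t_hoeff}) then gives the subgaussian increment bound $\myP[|X_{\vtheta} - X_{\vphi}| \ge s] \le 2\exp(-Ns^2/(2 d_M(\vtheta,\vphi)^2))$, i.e.\ the process has subgaussian increments with respect to the metric $\sqrt{2/N}\,d_M$. This is exactly the hypothesis needed for Dudley-type / generic chaining tail bounds, which I expect is the content of the supporting result Theorem~\ref{thm:chainingsubgauss} mentioned in the text.

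Next I would invoke that chaining tail bound (Theorem~\ref{thm:chainingsubgauss}) to get, for $u \ge 2$,
\begin{align*}
\myP\left[\sup_{\vtheta \in \Theta}|X_{\vtheta}| \ge C u \sqrt{\tfrac{1}{N}}\left(\gamma_2(\Theta, d_M) + \operatorname{diam}_{d_M}(\Theta)\right)\right] \le 2\exp(-u^2/2),
\end{align*}
where the $\gamma_2$ term controls the fluctuation around any anchor point and the diameter term accounts for pinning the process at a single $\vtheta_0$ (recall $X_{\vtheta_0}$ itself is not zero, only mean zero). I then bound the diameter: $d_M(\vtheta,\vphi) = \|\overline\ma(\vtheta)-\overline\ma(\vphi)\|_M \le \|\overline\ma(\vtheta)\|_M + \|\overline\ma(\vphi)\|_M \le 2\alpha_M$, so $\operatorname{diam}_{d_M}(\Theta) \le 2\alpha_M$, and the bracket becomes $\gamma_2(\Theta,d_M) + 2\alpha_M \le 2\gamma_2(\Theta,d_M) + \alpha_M$ is not quite right — more carefully I expect the clean form $2\gamma_2(\Theta,d_M) + \alpha_M$ to come out of tracking the constant $\sqrt{8}$ and the chaining constant together, with the factor $2$ on $\gamma_2$ absorbing an additive constant from the chaining sum and the $\alpha_M$ (rather than $2\alpha_M$) coming from a slightly sharper pinning argument using the variance proxy at level $k=0$. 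Combining with the factor of $2$ from Remark~\ref{r_excess} yields the first displayed inequality with constant $\sqrt{8/N}$.

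For the second statement, I would set $u = \sqrt{2\ln(2/\delta)}$, so the failure probability $2\exp(-u^2/2) = \delta$, and then demand that the resulting bound on $F(\hat\vtheta) - F(\vtheta^*)$ be at most $\epsilon$. Using $2\gamma_2 + \alpha_M \le 3\max\{\gamma_2(\Theta,d_M), \alpha_M\}$ and $u \le \sqrt{2\ln(2/\delta)}$, the condition $\sqrt{8/N}\cdot\sqrt{2\ln(2/\delta)}\cdot 3\max\{\gamma_2,\alpha_M\} \le \epsilon$ rearranges to $N \ge (8 \cdot 2 \cdot 9/\epsilon^2)\ln(2/\delta)\max\{\gamma_2^2,\alpha_M^2\} = (144/\epsilon^2)\ln(2/\delta)\max\{\gamma_2(\Theta,d_M)^2,\alpha_M^2\}$, which is exactly~\eqref{e_t52b}; the lower bound $0 \le F(\hat\vtheta)-F(\vtheta^*)$ is automatic from $\vtheta^*$ being a minimizer of $F$.

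The main obstacle is purely bookkeeping of constants: the generic chaining machinery produces an absolute constant $C$ whose precise value depends on how the multiscale net sum is split and on the $u\ge 2$ regime, and matching it to the clean coefficient $\sqrt{8}$ (and then to $144$) requires carefully stating and proving the supporting Theorem~\ref{thm:chainingsubgauss} with explicit constants rather than the usual $O(\cdot)$ chaining bound. Everything else — the Hoeffding increment estimate, the diameter bound, and the substitution into Remark~\ref{r_excess} — is routine.
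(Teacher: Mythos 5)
Your overall architecture matches the paper's: a subgaussian increment bound from Hoeffding, a generic chaining bound for the oscillation of the process, a separate bound for the anchor point, and then Remark~\ref{r_excess}; your derivation of \eqref{e_t52b} from the tail bound (via $u=\sqrt{2\ln(2/\delta)}$ and $2\gamma_2+\alpha_M\le 3\max\{\gamma_2,\alpha_M\}$) is exactly the paper's calculation. The one genuine gap is the step you yourself flag as ``not quite right'': you try to produce the $\alpha_M$ term from $\operatorname{diam}_{d_M}(\Theta)\le 2\alpha_M$, which gives $\gamma_2+2\alpha_M$, and this is \emph{not} bounded by $2\gamma_2+\alpha_M$ in general. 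The diameter is the wrong object here. The paper splits $\sup_{\vtheta}|X_{\vtheta}|\le \sup_{\vtheta}|X_{\vtheta}-X_{\vtheta_0}|+|X_{\vtheta_0}|$ and treats the two pieces with \emph{different} tools: Theorem~\ref{thm:chainingsubgauss} gives $\sup_{\vtheta}|X_{\vtheta}-X_{\vtheta_0}|\le \sqrt{8/N}\,u\,\gamma_2(\Theta,d_M)=\sqrt{2/N}\cdot 2u\,\gamma_2$ with failure probability $\exp(-u^2/2)$, while $|X_{\vtheta_0}|$ is just the Hutchinson error for the single fixed matrix $\ma(\vtheta_0)$, so Lemma~\ref{l_1} applies directly with $\|\overline\ma(\vtheta_0)\|_M\le\alpha_M$ and yields $|X_{\vtheta_0}|\le\sqrt{2/N}\,u\,\alpha_M$ with failure probability $\exp(-u^2)\le\exp(-u^2/2)$. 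A union bound gives $\sqrt{2/N}\,u\,(2\gamma_2+\alpha_M)$ with probability $1-2\exp(-u^2/2)$, and the factor $2$ from Remark~\ref{r_excess} produces $\sqrt{8/N}\,u\,(2\gamma_2+\alpha_M)$. So the coefficient $2$ on $\gamma_2$ is nothing but the chaining constant $\sqrt{8}=2\sqrt{2}$ measured against the anchor scale $\sqrt{2/N}$, and the coefficient $1$ on $\alpha_M$ comes from the single-point Hoeffding bound, not from any diameter or level-$k=0$ refinement.

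A second, smaller point: you state the chaining conclusion as a bound on $\sup_{\vtheta}|X_{\vtheta}|$ with a combined $\gamma_2+\operatorname{diam}$ bracket and failure probability $2\exp(-u^2/2)$. The paper's Theorem~\ref{thm:chainingsubgauss} bounds only $\sup_{\vtheta}|X_{\vtheta}-X_{\vtheta_0}|$ with failure probability $\exp(-u^2/2)$; the factor $2$ in the final probability arises from the union bound with the anchor event, not from the chaining step itself. Once you replace your diameter argument with the direct application of Lemma~\ref{l_1} at $\vtheta_0$, the constants close without any further ``sharper pinning'' being needed.
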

     
\begin{proof}
From \cref{r_excess} and the triangle
inequality follows
\begin{equation}\label{eqn:inter1}  0 \le F(\hat{\vtheta})  - F(\vtheta^*) \le 2\sup_{\vtheta \in \Theta}|X_{\vtheta}|
%\[ \sup_{\vtheta \in \Theta}|X_{\vtheta}| 
\le 2\,\left( \sup_{\vtheta \in \Theta} | X_{\vtheta} - X_{\vtheta_0}| + |X_{\vtheta_0}|\right)
\end{equation}
Bounding the first summand with  \cref{thm:chainingsubgauss} shows that for $u>2$, 
\begin{align}\label{e_inter2}
\myP\left[\sup_{\vtheta\in \Theta}| X_{\vtheta} - X_{\vtheta_0} |  \ge 
\sqrt{\frac{2}{N}}\,2\,u\,\gamma_2(\Theta,d_M)\right] 
\leq \exp(-u^2/2).
\end{align}

Apply \cref{l_1} to the second summand
\begin{align*}
\myP[|X_{\vtheta_0}| \ge t] \le 
\exp\left(-\frac{Nt^2}{2\,\alpha_M^2}\right), \end{align*}
and set $t = \sqrt{\frac{2}{N}}\alpha_M\,u$
for $u>0$,
\begin{align}\label{e_inter3}
\myP\left[|X_{\vtheta_0}| \ge \sqrt{\frac{2}{N}} \alpha_M\, u\right] \le \exp(-u^2)\leq
\exp(-u^2/2).
\end{align}
Adding (\ref{e_inter2}) and (\ref{e_inter3}) and applying the union bound shows that with probability at least $1 - 2\exp(-u^2/2)$ 
\begin{align*}
\sup_{\vtheta \in \Theta} | X_{\vtheta} - X_{\vtheta_0}| + |X_{\vtheta_0}|\leq 
\sqrt{\frac{2\,}{N}}\,u\,
\left(2\gamma_2(\Theta,d_M) + \alpha_M\right).
\end{align*}
Inserting this into (\ref{eqn:inter1})
gives the probability tail bound.

Next, set $\delta=2\exp(-u^2/2)$ so that $u = \sqrt{2\ln(2/\delta)}$. Then with probability at least $1-\delta$, 
\[ 0 \le F(\hat{\vtheta})  - F(\vtheta^*) \le \sqrt{\frac{2}{N}}\,2u\,
\left(2\gamma_2(\Theta,d_M) + \alpha_M\right)
 \le  \epsilon, \] 
where the penultimate inequality follows 
from the choice of~$N$ in (\ref{e_t52b}).
\end{proof}

We compare the bound on the sampling
amount from generic chaining to that from the
$\eta$-net.

\begin{remark}[Comparison of Theorems \ref{t_1}
and~\ref{thm:subgaussrisk}]\label{r_com2}
A comparison of the two bounds for the minimal sampling amount~$N$ is difficult and inconclusive.

The bound in Theorem~\ref{t_1} is
\begin{align}\label{e_com3}
N\geq \frac{8\alpha_M^2}{\epsilon^2}
\ln\left(\gamma/\delta\right), \qquad\text{where}\quad
\gamma\equiv S\left(\frac{\epsilon}{4mL_2}\right)
\end{align}
and $S(\cdot)$ denotes the covering number of the net.
The bound in Theorem~\ref{thm:subgaussrisk} is
\begin{align}\label{e_com4}
N \ge \frac{144}{ \epsilon^{2}} \ln(2/\delta) \max\{ \gamma_2(\Theta,d_M)^2, \alpha_M^2\},
\end{align}
where $\gamma_2(\Theta,d_M)$ is the Talagrand functional~(\ref{e_tala1}).

\begin{enumerate}
\item Since $\hat{F}$ is a Monte Carlo estimator,
both bounds grow with $-\ln(\delta)/\epsilon^{-2}$.
That is, the sampling amount increases as the desired
error $\epsilon$ and the failure probability~$\delta$ become smaller. 

\item Both bounds depend on the ``size'' of $\Theta$,
either through the covering number or the Talagrand
functional. However, (\ref{e_com4}) with the Talagrand functional is much more difficult to evaluate.

\item The chaining bound (\ref{e_com4}) has the advantage of not requiring Lipschitz continuity of~$\ma(\vtheta)$, while (\ref{e_com3}) does.

\item The chaining bound (\ref{e_com4}) 
might grow less with the offdiagonal 
mass $\alpha_M$ and the ``size'' of $\Theta$
because it depends on the maximum
of $\alpha_M^2$ and $\gamma_2(\Theta,d_M)^2$, while 
(\ref{e_com3})
depends on the product $\alpha_M^2\ln(\gamma)$.

But then again, the chaining bound (\ref{e_com4})
depends more
strongly on the ``size'' of $\Theta$ than (\ref{e_com3}), which depends only
on the logarithm of the covering number.
\end{enumerate}
\end{remark}

Next is the foundation for the proof of
Theorem~\ref{thm:subgaussrisk}. 

\begin{figure}[!ht]
    \centering
    \includegraphics[width=0.5\linewidth]{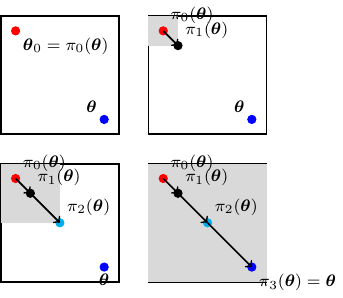}
    \caption{An illustration of the chaining argument. We construct the chain $\vtheta_0 = \pi_0(\vtheta) \rightarrow \pi_1(\vtheta) \rightarrow \pi_2(\vtheta) \rightarrow \pi_3(\vtheta) = \vtheta.$ The shaded regions represent the sets $\mathcal{T}_1, \mathcal{T}_2$, and $\mathcal{T}_3 = \Theta$, with $\mathcal{T}_0 = \{\vtheta_0\}$.}
    \label{fig:chaining}
\end{figure}

\begin{theorem}\label{thm:chainingsubgauss}
Let $\vtheta_0 \in \Theta$ and $u\geq 2$. With probability at least $1 -\exp(-u^2/2)$,
    \[  \sup_{\vtheta\in \Theta}| X_{\vtheta} - X_{\vtheta_0} |  \le \sqrt{\frac{8}{N}}\, u\, \gamma_2(\Theta,d_M).   \] 
\end{theorem}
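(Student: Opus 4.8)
The plan is to prove \cref{thm:chainingsubgauss} by the standard generic chaining argument applied to the zero-mean stochastic process $\{X_{\vtheta}\}$, whose increments must first be shown to have subgaussian tails with respect to the metric $d_M$. The increment process is
\[ X_{\vtheta} - X_{\vphi} = \frac1N\sum_{j=1}^N \bigl( \vomega_j\t(\overline\ma(\vtheta) - \overline\ma(\vphi))\vomega_j - \trace(\overline\ma(\vtheta)-\overline\ma(\vphi)) \bigr), \]
where I have used that subtracting diagonal parts does not change the fluctuation (the diagonal contributes a deterministic quantity that cancels, since $\vomega_j\t\diag(\mb)\vomega_j = \trace(\diag(\mb))$ for Rademacher $\vomega_j$). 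Each summand is a bounded, centered random variable taking values in an interval of width $2\|\overline\ma(\vtheta) - \overline\ma(\vphi)\|_M = 2 d_M(\vtheta,\vphi)$, by the same reasoning as in \eqref{e_hutch2}. So by \cref{t_hoeff} (Hoeffding), for every $t>0$,
\[ \myP\bigl[ |X_{\vtheta} - X_{\vphi}| \ge t\bigr] \le 2\exp\!\left(\frac{-Nt^2}{2\,d_M(\vtheta,\vphi)^2}\right), \]
i.e.\ $\{X_{\vtheta}\}$ has subgaussian increments with respect to the metric $\sqrt{2/N}\,d_M$.

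With the subgaussian increment bound in hand, I would invoke the generic chaining tail bound for subgaussian processes — this is essentially \cite[Theorem 8.5.5 / Exercise 8.5.6 and the surrounding discussion]{vershynin2018high} (Dirksen's or Talagrand's high-probability chaining bound), which states that for a process with increments subgaussian w.r.t.\ a metric $\rho$, one has
\[ \myP\left[ \sup_{\vtheta \in \Theta}|X_{\vtheta} - X_{\vtheta_0}| \ge C\bigl(\gamma_2(\Theta,\rho) + u\cdot\mathrm{diam}_\rho(\Theta)\bigr)\right] \le 2\exp(-u^2), \]
or the cleaner variant where, for $u$ above a small constant, the whole right-hand side is absorbed into $u\,\gamma_2(\Theta,\rho)$. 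Applying this with $\rho = \sqrt{2/N}\,d_M$, so that $\gamma_2(\Theta,\rho) = \sqrt{2/N}\,\gamma_2(\Theta,d_M)$, and tracking constants so that they collapse into the stated $\sqrt{8/N}\,u$ for $u\ge 2$, yields exactly the claimed inequality. Alternatively — and this is likely the route the paper takes, to keep the exposition self-contained and the constants explicit — I would reproduce the chaining proof directly: fix an admissible sequence $\{\mathcal{T}_k\}$ nearly achieving the infimum in $\gamma_2(\Theta,d_M)$, let $\pi_k(\vtheta)$ denote the closest point of $\mathcal{T}_k$ to $\vtheta$, write the telescoping chain $X_{\vtheta} - X_{\vtheta_0} = \sum_{k\ge 1}(X_{\pi_k(\vtheta)} - X_{\pi_{k-1}(\vtheta)})$, apply the subgaussian increment tail plus a union bound over the at most $|\mathcal{T}_k|\,|\mathcal{T}_{k-1}| \le 2^{2^{k+1}}$ links at level $k$ with deviation level $\propto \sqrt{2^k}\,u\,d_M(\pi_k(\vtheta),\pi_{k-1}(\vtheta))$, sum the resulting exponents (using $d_M(\pi_k,\pi_{k-1}) \le d_M(\vtheta,\pi_k) + d_M(\vtheta,\pi_{k-1})$ and the definition of $\gamma_2$), and optimize the geometric series of failure probabilities so the total is at most $\exp(-u^2/2)$ for $u \ge 2$.

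The main obstacle is purely bookkeeping: getting the constant right so it comes out as $\sqrt{8/N}\,u$ rather than some larger multiple, and in particular handling the restriction $u\ge 2$ carefully, since that is precisely what lets the chaining tail be stated without an additive diameter term — one needs $2^{2^k} \le \exp(2^{k-1}u^2)$ type inequalities to hold term by term, which requires $u$ bounded below by a constant of size roughly $2$. Everything else is routine: the subgaussianity of the increments is immediate from Hoeffding once the diagonal-cancellation observation is made, and the chaining machinery is standard. I would therefore present the Hoeffding-based increment bound as a short displayed computation, then either cite the generic chaining tail bound from \cite{vershynin2018high,van2014probability,talagrand2005generic} or carry out the telescoping union-bound argument explicitly, deferring the most tedious constant-chasing to a remark that the constants are not claimed optimal (consistent with the sentence ``Where possible, we track the constants explicitly, but make no claims as to their optimality'' already in the text).
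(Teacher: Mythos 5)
Your proposal is correct and follows essentially the same route as the paper: the increments are shown to be subgaussian in $d_M$ via the Hoeffding-based Lemma~\ref{l_1} (using exactly the diagonal-cancellation observation), and the bound is then obtained by the explicit telescoping chain over an admissible sequence, a union bound over the at most $2^{2^{k+1}}$ links per level, the triangle inequality $d_M(\pi_k,\pi_{k-1})\le d_M(\vtheta,\pi_k)+d_M(\vtheta,\pi_{k-1})$, and summation of the geometric series of failure probabilities for $u\ge 2$ (the paper's Lemma~\ref{lemma:sum}). The only discrepancy is your factor of $2$ in the increment tail bound, which the paper's version of Lemma~\ref{l_1} does not carry and which would otherwise degrade the final probability to $1-2\exp(-u^2/2)$ --- a constant-tracking detail, not a gap in the argument.
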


\begin{proof}
The separability assumption allows us to limit
consideration to the case where $\Theta$ is a finite set. An example for extensions to the infinite case in
a more general context is
given in \cite[Theorem 5.24]{van2014probability}.

The idea is to break down $\sup_{\vtheta\in\Theta}{|X_{\vtheta} - X_{\vtheta_0}|}$ into a sequence of
suprema over smaller subsets $\mathcal{T}_i$ from an admissible sequence.
To this end,
the proof proceeds in several steps, along the lines
of \cite[pages 10-11]{talagrand2005generic}.
\begin{enumerate}
\item \textit{Concentration inequality.}
    Since the elements $X_{\vtheta}$ have zero mean, \cref{l_1} implies that they have  subgaussian increments, 
    \begin{align*}
\myP[|X_{\vtheta} - X_{\vphi}| \ge t ]  \le \exp\left(-\frac{Nt^2}{2\,d_M(\vtheta,\vphi)^2}\right). 
\end{align*}
Below we use the following version:
For $u>0$, integers $i\geq 1$, and change of variable
$t=u\,\sqrt{2}\, 2^{i/2}d_M(\vtheta,\vphi)/\sqrt{N}$ the above
is equal to 
   \begin{align}\label{e_c1}
\myP\left[|X_{\vtheta} - X_{\vphi}| \ge u \,\sqrt{2}\,2^{i/2}d_M(\vtheta,\vphi)/\sqrt{N}\right]  \le \exp\left(-u^2 2^{i}\right). 
\end{align}

 \item \textit{Constructing a chain from $\vtheta_0$ to another parameter.} Let $\vtheta\in\Theta$, let $\{\mathcal{T}_i\}$ with $\mathcal{T}_0=\{\vtheta_0\}$ be an admissible sequence, and let $\pi_i(\vtheta)$
    be an element closest to~$\vtheta$
    from the subset $\mathcal{T}_i$, that is,
    \begin{align*}
\pi_{i}(\vtheta) \in \argmin_{\vtheta' \in \mathcal{T}_i} d_M(\vtheta,\vtheta'),\qquad i\geq 0.
\end{align*}

Since the subsets $\mathcal{T}_i$ are increasing
in size, the elements $\pi_i(\vtheta)$ define a
chain from $\vtheta_0$ 
to $\vtheta$,
    \[  \vtheta_0 =\pi_0(\vtheta) \rightarrow \pi_{1}(\vtheta) \rightarrow \cdots \rightarrow \pi_T(\vtheta) = \vtheta, \]
    where $T$ is finite because $\Theta$ is finite. This is illustrated in Figure~\ref{fig:chaining}.

We can decompose the distance between the processes
    $X_{\vtheta}$ and $X_{\vtheta_0}$ as the telescoping sum of successive increments,
    \[ X_{\vtheta} - X_{\vtheta_0} = \sum_{i=1}^T (X_{\pi_i(\vtheta)} - X_{\pi_{i-1}(\vtheta)}).\] 

Furthermore, because $\pi_i(\vtheta)$ is an element in $\mathcal{T}_i$ closest to $\vtheta$, the distance
between the parameters $\vtheta$ and $\pi_i(\vtheta)$ is equal to the
distance between $\vtheta$ and the set $\mathcal{T}_i$,
\begin{align}\label{e_inter6}
d_M(\vtheta, \pi_i(\vtheta)) = d_M(\vtheta,\mathcal{T}_i)\equiv \inf_{\vphi\in\mathcal{T}_i}{d_M(\vtheta,\vphi)},\qquad 0 \le i \le T.
\end{align}

\item \textit{Bounding the distance between successive processes in the chain.}
We define events that represent the 
    distance between successive processes,
    \[  E_{\vtheta,i}  \equiv \left\{ |X_{\pi_i(\vtheta)} - X_{\pi_{i-1}(\vtheta)}| \ge u \sqrt{2}\,2^{i/2}\frac{\sigma_i}{\sqrt{N}}\right\} \qquad \vtheta \in \Theta, \quad 1 \le i \le T, \] 
 where   $\sigma_i \equiv d_M(\pi_i(\vtheta),\pi_{i-1}(\vtheta))$ is the distance between the
 corresponding parameters
in the chain. Then (\ref{e_c1}) implies
\begin{align}\label{e_inter5}
\myP[E_{\vtheta,i} ] \le \exp\left(-u^2 2^{i}\right), \qquad \vtheta \in \Theta, \> 1 \le i \le T.
\end{align}
    
\item \textit{Union bound over all possible process distances.} 
 Since the elements in the chain are not unique, we need to sum over all possible events, that is, over all possible 
parameter pairs.
The number of possible pairs
$\{\pi_i(\vtheta),\pi_{i-1}(\vtheta)\}$ is bounded by
\begin{align*}
|\mathcal{T}_i||\mathcal{T}_{i-1}| 
\le |\mathcal{T}_i|^2 \le 2^{2^{i+1}},\qquad 
1\leq i\leq T.
\end{align*} 

The probability of all events occurring is
\begin{align}\label{e_53prob}
\myP\left[\bigcup_{\vtheta\in \Theta} \bigcup_{1 \le i \le T}  E_{\vtheta,i} \right]  &\le   \sum_{i=1}^T |\mathcal{T}_i|^2  \myP[ E_{\vtheta,i} ] \nonumber\\
&\leq \sum_{i=0}^\infty 2^{2^{i+1}}\exp\left(-u^2\,2^{i}\right) \le \exp(-u^2/2),
\end{align}
where the second inequality follows from~(\ref{e_inter5}), and the third from Lemma~\ref{lemma:sum} in Section~\ref{s_auxi} for $u\geq 2$.

The expression~(\ref{e_53prob}) is the probability 
that  $\sup_{\vtheta \in \Theta}|X_{\vtheta} - X_{\vtheta_0}|$ is not bounded. 
Now we derive the
actual bound for this supremum by adding the
bounds in the individual $E_{\vtheta,i}$,
$1\leq i\leq T$.

\item \textit{Bounding $\sup_{\vtheta \in \Theta}|X_{\vtheta} - X_{\vtheta_0}|$.}
From~(\ref{e_53prob}) and the definition of
$E_{\vtheta,i}$
follows that with probability  at least  $1-\exp(-u^2/2)$,
\begin{align*} \sup_{\vtheta \in \Theta}|X_{\vtheta} - X_{\vtheta_0}| 
\le  &\> \sup_{\vtheta \in \Theta} \sum_{i=1}^T |  X_{\pi_i(\vtheta)} - X_{\pi_{i-1}(\vtheta)}| 
    \le   \sup_{\vtheta \in \Theta}u \sum_{i=1}^T \sqrt{2}\, 2^{i/2}\frac{\sigma_i}{\sqrt{N}} \\
    \le &  \sup_{\vtheta \in \Theta} 
    \sqrt{\frac{2}{N}}\,u \sum_{i=1}^T  2^{i/2} (d_M(\pi_i(\vtheta), \vtheta) + d_M(\vtheta,\pi_{i-1}(\vtheta))) \\
    \leq & \sup_{\vtheta \in \Theta}
    \sqrt{\frac{2}{N}}\, 2u\sum_{i=0}^T 2^{i/2} d_M(\vtheta, \mathcal{T}_i)\\
    \le &\sup_{\vtheta \in \Theta} 
    \sqrt{\frac{8}{N}}\, u\,\sum_{i=0}^\infty 2^{i/2} d_M(\vtheta, \mathcal{T}_i). \end{align*} 
Here, the third inequality follows from the triangle inequality,
\[ \sigma_i = d_M(\pi_i(\vtheta),\pi_{i-1}(\vtheta)) \le d_M(\pi_i(\vtheta), \vtheta) + d_M(\vtheta,\pi_{i-1}(\vtheta)), \qquad 1 \le i \le T, \]
and the fourth inequality from (\ref{e_inter6}). We have shown that with probability  at least  $1-\exp(-u^2/2)$,
\begin{align*} 
\sup_{\vtheta \in \Theta}|X_{\vtheta} - X_{\vtheta_0}| 
\leq \sup_{\vtheta \in \Theta} 
\sqrt{\frac{8}{N}}\,u \sum_{i=0}^\infty 2^{i/2} d_M(\vtheta, \mathcal{T}_i).
\end{align*}
Taking the infimum over all admissible sequences $\{\mathcal{T}_i\}_{i \ge 0}$ in the 
upper bound implies that with probability  at least $1-\exp(-u^2/2)$, 
\begin{align*}
\sup_{\vtheta \in \Theta}|X_{\vtheta} - X_{\vtheta_0}| \le 
\sqrt{\frac{8}{N}}\,u\,
\gamma_2(\Theta,d_M).
\end{align*}
\end{enumerate}
\end{proof}

\subsection{Mixed tails}\label{s_mixed}
We derive a bound for the backward error 
and minimal sampling amount in terms of a 
mixed tail.
After defining another version of multi-scale nets
(Definition~\ref{d_adm2}), we state our 
main result for the backward error (Theorem~\ref{thm:chainingmixed}),
and make a comparison with the 
corresponding $\eta$-net
bound (Remark~\ref{r_com2}),
followed by a supporting result
(Theorem~\ref{thm:mixedtails}).

For the mixed tails,
we need a more stringent version of nets.
Recall that the $\eta$-nets in 
Definition~\ref{d_adm} contain, at every level $k$, a single set $\mathcal{T}_k$ which is a proper subset of $\Theta$ (except possibly at the last level). In contrast, the partitions defined below contain, at every
level $k$, a collection of sets that cover
all of $\Theta$. This is necessary because the 
proof of Theorem~\ref{thm:mixedtails} requires intersections of sets associated with different tails,
and some of them have to be non-empty. This would not be possible with a
single set at every level.

\begin{definition}[Definition 1.2.3 in \cite{talagrand2005generic}]\label{d_adm2}
Given a set $\Theta \in \real^K$, a partition of 
$\Theta$ is a set~$\mathcal{A}$ whose 
elements are disjoint subsets of $\Theta$ and whose union
equals $\Theta$. That is,
$\mathcal{A}=\{A_i\}$ where $A_i\subset\Theta$, $A_i\cap A_j=\emptyset$
for $i\neq j$, and $\bigcup_i{A_i}=\Theta$.

A sequence of partitions
$\{\mathcal{A}_k\}_{k\geq 0}$ of $\Theta$ is increasing if every element of $\mathcal{A}_{k+1}$ is a subset of an element of $\mathcal{A}_k$. 

An increasing sequence of partitions $\{\mathcal{A}_k\}_{k\geq 0}$ of $\Theta$ is
admissible if $|\mathcal{A}_k| \le 2^{2^k} $ for $k>0$ and
$|\mathcal{A}_0|=1$.
This is illustrated in Figure~\ref{fig:admissible}. 
\end{definition}

\begin{figure}[!ht]
    \centering
    \includegraphics[width=0.75\linewidth]{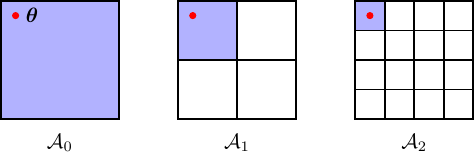}
    \caption{An illustration of the sequence of partitions $\{\mathcal{A}_k\}_{0 \le k\leq 2}$. The red circle denotes the point $\vtheta$, and the blue squares  denote the unique elements $A_0(\vtheta), A_1(\vtheta)$, and $A_2(\vtheta)$.}
    \label{fig:admissible}
\end{figure}

The diameter of a set $ \mathcal{T} \subset \Theta$ associated with a (pseudo-)metric $d_\xi$ is 
\begin{equation}
\Delta_{\xi} (\mathcal{T}) \equiv \sup_{\vtheta,\vphi \in \mathcal{T}} d_\xi(\vtheta,\vphi), \qquad \xi \in \{2, F, M\}.
\end{equation}

Let $\{\mathcal{A}_k\}_{k\ge 0}$ be an admissible sequence of 
partitions of $\Theta$, and let
$A_k(\vtheta)$ be the unique element in   $\mathcal{A}_k$ containing $\vtheta$.  
For any integer $\alpha>0$, define the second Talagrand functional 
\begin{equation}\label{e_tala2}
\gamma_\alpha'(\Theta, d_{\xi}) \equiv \inf \sup_{\vtheta \in \Theta} \sum_{k=0}^{\infty} 2^{k/\alpha} \Delta_{\xi}(A_k(\vtheta)),
\end{equation}
where the infimum ranges over all admissible sequences of partitions $\{\mathcal{A}_k\}_{k\geq 0}$.

The Talagrand functionals (\ref{e_tala1}) and 
(\ref{e_tala2}) are equivalent, in the sense that for any (pseudo-)metric $d$ and integer $\beta>0$
\begin{align}\label{e_gamma}
\gamma_\beta(\Theta,d) \le \gamma'_{\beta}(\Theta,d) %
\leq C_{\beta}\gamma_\beta(\Theta,d),   
\end{align}
where $C_\beta$ is a constant that depends only on~$\beta$.

\begin{remark}
The functional $\gamma_\alpha'(\Theta, d_{\xi})$
is an upper bound on the diameter of $\Theta$. That is,
\begin{align*}
\gamma_\alpha'(\Theta, d_{\xi}) \ge \Delta_{\xi}(\Theta).
\end{align*}
This follows from $A_0(\vtheta) = \Theta$  and the fact
that regardless of the partition
$\{\mathcal{A}_k\}_{k \ge 0}$, 
\begin{align*}
\sup_{\vtheta \in \Theta} \sum_{k=0}^{\infty} 2^{k/\alpha} \Delta_{\xi}(A_k(\vtheta)) \ge 
\sup_{\vtheta \in \Theta}\Delta_{\xi}(A_0(\vtheta)) = \Delta_{\xi}(\Theta).
\end{align*}
\end{remark}

Below we present a mixed tail bound on the
backward error of the estimator, 
as well as a bound on the minimal
sampling amount~$N$ required to bound the error by~$\epsilon$.

\begin{theorem}\label{thm:chainingmixed}
For $u\geq 4$,
with probability at least $1-4\exp(-u/2)$ 
\[  0 \le F(\hat\vtheta) - F(\vtheta^*) \le \frac{32u}{N} \, \gamma_1'(\Theta, d_2) + \sqrt{\frac{128u}{N}} \,\gamma_2'(\Theta, d_F) +  \frac{8u}{N}\,\alpha_2 + \sqrt{\frac{16u}{N}}\,\alpha_F. \] 
Alternatively, let $0<\delta \leq 4\exp(-2)$ and $ \epsilon > 0$. If 
\begin{align}\label{e_t55a}
N   \ge 4096\epsilon^{-2}\ln(4/\delta) \max\left\{ \epsilon\gamma_1'(\Theta,d_2), \epsilon\alpha_2  , (\gamma_2'(\Theta, d_F))^2, \alpha_F^2     \right\},  
\end{align}
then $ 0 \le {F}(\hat\vtheta) - F(\vtheta^*) \le \epsilon$ with probability at least $1-\delta$.
\end{theorem}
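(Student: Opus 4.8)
The plan is to mirror the proof of \cref{thm:subgaussrisk}, replacing the single subgaussian bound by a mixed-tail decomposition. First I would invoke \cref{r_excess} together with the triangle inequality to reduce the backward error to the supremum of the process: $0 \le F(\hat\vtheta) - F(\vtheta^*) \le 2\sup_{\vtheta \in \Theta}|X_\vtheta| \le 2\bigl(\sup_{\vtheta \in \Theta}|X_\vtheta - X_{\vtheta_0}| + |X_{\vtheta_0}|\bigr)$ for a fixed anchor $\vtheta_0 \in \Theta$. The first summand is handled by the supporting \cref{thm:mixedtails}, which (by analogy with the subgaussian case) should give a bound of the form $\sup_{\vtheta \in \Theta}|X_\vtheta - X_{\vtheta_0}| \le c_1 \tfrac{u}{N}\gamma_1'(\Theta,d_2) + c_2\sqrt{\tfrac{u}{N}}\gamma_2'(\Theta,d_F)$ with probability at least $1 - c_3\exp(-u/2)$, reflecting the two-term structure of the Bernstein/\cref{t_CK} tail (a subexponential piece controlled by $d_2$ and $\gamma_1'$, and a subgaussian piece controlled by $d_F$ and $\gamma_2'$).

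Next I would bound the anchor term $|X_{\vtheta_0}|$ using \cref{l_2} (the mixed-tail trace bound), which reads $\myP[|X_{\vtheta_0}| \ge t] \le 2\exp\bigl(-Nt^2/(8(\alpha_F^2 + t\alpha_2))\bigr)$. Choosing $t$ so that the exponent equals $-u/2$ — concretely, splitting the denominator and taking $t = \max\{\sqrt{16u\alpha_F^2/N}, 16u\alpha_2/N\} \le \sqrt{16u/N}\,\alpha_F + (8u/N)\,\alpha_2$ after using $\sqrt{a+b}\le\sqrt a+\sqrt b$ and $2ab \le a^2+b^2$ type inequalities — gives a failure probability of at most $2\exp(-u/2)$ for the anchor term. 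Adding this to the chaining failure probability, a union bound produces the claimed $1 - 4\exp(-u/2)$, and summing the two bounds (after multiplying by the factor $2$ from \eqref{eqn:inter1}, which accounts for the precise constants $32, \sqrt{128}, 8, \sqrt{16}$) yields the first displayed inequality.

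For the second, quantitative statement, I would set $\delta = 4\exp(-u/2)$, i.e. $u = 2\ln(4/\delta)$, noting that $\delta \le 4\exp(-2)$ guarantees $u \ge 4$ as required. Then each of the four terms in the tail bound must be at most $\epsilon/4$; imposing this on the worst term and solving for $N$ gives a lower bound proportional to $\epsilon^{-2}\ln(4/\delta)$ times $\max\{\epsilon\gamma_1'(\Theta,d_2), \epsilon\alpha_2, (\gamma_2'(\Theta,d_F))^2, \alpha_F^2\}$, which after absorbing the constants $32, 128, 8, 16$ and the factor $16$ from $u = 2\ln(4/\delta)$ yields the stated threshold $4096\,\epsilon^{-2}\ln(4/\delta)$. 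The lower bound $0 \le F(\hat\vtheta) - F(\vtheta^*)$ is immediate from $\vtheta^*$ minimizing $F$, as in \cref{r_excess}.

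The main obstacle is the supporting result \cref{thm:mixedtails}: carrying out the generic-chaining argument for a process with \emph{mixed} (Bernstein-type) increments rather than purely subgaussian ones. Unlike the subgaussian case, the two terms in the tail of \cref{t_CK} scale differently with the chaining level $2^k$, so one must run two coupled chains — one calibrated to $d_F$ with weights $2^{k/2}$ (subgaussian regime) and one calibrated to $d_2$ with weights $2^k$ (subexponential regime) — and this is precisely why the more rigid admissible \emph{partitions} of \cref{d_adm2} are needed, so that the relevant level-$k$ sets can be intersected and kept nonempty. Tracking the explicit constants through this double-chaining telescoping sum, and through the auxiliary summation lemma (the analogue of \cref{lemma:sum}) that bounds $\sum_k 2^{2^{k+1}}\exp(-u\,2^{k/2})$ and $\sum_k 2^{2^{k+1}}\exp(-u\,2^k)$ for $u \ge 4$, is the delicate part; the reduction steps above are otherwise routine.
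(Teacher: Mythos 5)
Your proposal follows the paper's proof essentially verbatim: the reduction via Remark~\ref{r_excess} and the triangle inequality, the chaining term handled by Theorem~\ref{thm:mixedtails}, the anchor term $|X_{\vtheta_0}|$ handled by Lemma~\ref{l_2} via the same change of variable, the union bound giving $1-4\exp(-u/2)$, and the choice $u = 2\ln(4/\delta)$ with each of the four summands set to $\epsilon/4$. The only blemishes are cosmetic — your intermediate bound $16u\alpha_2/N \le 8u\alpha_2/N$ for the anchor term is stated backwards (the paper gets $\frac{4u}{N}\alpha_2 + \sqrt{4u/N}\,\alpha_F$ before the factor of $2$), and the paper's summation lemma only needs $\sum_k 2^{2^{k+1}}\exp(-u\,2^k)$, not the $2^{k/2}$ variant — but these do not affect the correctness of the argument.
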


\begin{proof}
\cref{l_2} implies for a fixed $\vtheta_0 \in \Theta$, 
\[\myP[| X_{\vtheta_0}| \ge t] \le 2\exp\left(-\frac{Nt^2}{8(\alpha_F^2 + t\alpha_2)}\right). \]
Analogous to step~3 in the proof of the subsequent Theorem~\ref{thm:mixedtails}, a change of variable
gives for $u > 0$
\[ \myP\left[| X_{\vtheta_0}| \ge \frac{4u}{N}\alpha_2 + \sqrt{\frac{4u}{N}} \alpha_F \right] 
\le 2 \exp(-u/2).\]
From (\ref{eqn:inter1}), the union bound and Theorem~\ref{thm:mixedtails}
follows that for $u\geq 4$ with probability at least $1 - 4\exp(-u/2)$,  
\[ \sup_{\vtheta \in \Theta} |X_{\vtheta}| \le \frac{16u}{N} \, \gamma_1'(d_2) + \sqrt{\frac{32u}{N}} \,\gamma_2'(d_F) +  \frac{4u}{N}\,\alpha_2 + \sqrt{\frac{4u}{N}}\, \alpha_F,\] 
and 
\[  0 \le F(\hat\vtheta) - F(\vtheta^*) \le \frac{32u}{N} \, \gamma_1'(d_2) + \sqrt{\frac{128u}{N}} \,\gamma_2'(d_F) +  \frac{8u}{N}\,\alpha_2 + \sqrt{\frac{16u}{N}}\,\alpha_F.  \] 
Set $\delta=4\exp(-u/2)$ so that $u = 2\ln(4/\delta)$ for 
$0<\delta \leq 4\exp(-2)$. The expression for~$N$
in~(\ref{e_t55a}) is obtained by setting each of
the 4 summands above to $\epsilon/4$, solving for $N$, and taking the maximum.
\end{proof}

A comparison of the sampling amounts from 
Theorems~\ref{thm:chainingmixed} 
and~\ref{t_1} is similar to but as inconclusive
as the one in Remark~\ref{r_com2}.

The following 
represents the basis for the proof of Theorem~\ref{thm:chainingmixed}.

\begin{theorem}\label{thm:mixedtails}
 Let $\vtheta_0 \in \Theta$ and $u\geq 4$.
 With probability at least $1-2\exp(-u/2)$ 
\[ \begin{aligned} \sup_{\vtheta \in \Theta} |X_{\vtheta}-X_{\vtheta_0}| \le & \>  \frac{16u}{N} \, \gamma_1'(d_2) + \sqrt{\frac{32u}{N}} \,\gamma_2'(d_F).   \end{aligned}\]
\end{theorem}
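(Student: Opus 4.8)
The plan is to adapt the classical generic-chaining-for-mixed-tails argument (as in \cite[Theorem 2.2.23]{talagrand2005generic} or Dirksen's work on chaining with mixed tails \cite{Krahmer2014suprema}) to the stochastic process $X_{\vtheta}=F(\vtheta)-\hat F(\vtheta)$. The starting point is the mixed-tail increment bound supplied by \cref{l_2}: since $X_{\vtheta}-X_{\vphi}=\frac1N\sum_{j=1}^N(\vomega_j\t(\ma(\vtheta)-\ma(\vphi))\vomega_j-\trace(\ma(\vtheta)-\ma(\vphi)))$ is exactly the Hutchinson deviation for the matrix $\ma(\vtheta)-\ma(\vphi)$, whose offdiagonal part is $\overline\ma(\vtheta)-\overline\ma(\vphi)$, \cref{l_2} gives
\[ \myP[|X_{\vtheta}-X_{\vphi}|\ge t]\le 2\exp\!\left(-\frac{Nt^2}{8(d_F(\vtheta,\vphi)^2+t\,d_2(\vtheta,\vphi))}\right). \]
Inverting this tail in the standard way (solving the quadratic in $t$) yields the ``two-parameter'' deviation inequality: for $u>0$,
\[ \myP\!\left[|X_{\vtheta}-X_{\vphi}|\ge \tfrac{4u}{N}d_2(\vtheta,\vphi)+\sqrt{\tfrac{4u}{N}}\,d_F(\vtheta,\vphi)\right]\le 2\exp(-u/2), \]
which is the mixed-tail analogue of \eqref{e_c1}. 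This inversion step is routine but must be done carefully to get the constants quoted in the statement.

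Next I would run the chaining over an admissible \emph{sequence of partitions} $\{\mathcal{A}_k\}_{k\ge0}$ rather than over nested nets, because the proof needs, at each level, the ability to pick for each $\vtheta$ a representative point $\pi_k(\vtheta)$ in the unique cell $A_k(\vtheta)\ni\vtheta$, and then control increments $|X_{\pi_k(\vtheta)}-X_{\pi_{k-1}(\vtheta)}|$ along the chain $\vtheta_0=\pi_0(\vtheta)\to\pi_1(\vtheta)\to\cdots\to\vtheta$. By reduction to finite $\Theta$ via separability, the chain is finite. For each level $k\ge1$ I would define the bad event that the $k$-th increment exceeds $\tfrac{4u\,2^k}{N}\Delta_2(A_{k-1}(\vtheta))+\sqrt{\tfrac{4u\,2^k}{N}}\,\Delta_F(A_{k-1}(\vtheta))$ — using $2^k$ in place of $u$ in the tail so that a union bound over the at most $|\mathcal{A}_k|\,|\mathcal{A}_{k-1}|\le 2^{2^{k+1}}$ possible pairs is absorbed. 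Summing $2\cdot2^{2^{k+1}}\exp(-u\,2^k/2)$ over $k$ and invoking the auxiliary summation lemma (the mixed-tail counterpart of Lemma~\ref{lemma:sum}, valid for $u\ge4$) gives total failure probability at most $2\exp(-u/2)$.

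On the complement of the bad event, I would sum the per-level bounds: using $d_\xi(\pi_k(\vtheta),\pi_{k-1}(\vtheta))\le \Delta_\xi(A_{k-1}(\vtheta))$ (both points lie in the level-$(k-1)$ cell) and telescoping,
\[ \sup_{\vtheta}|X_{\vtheta}-X_{\vtheta_0}|\le \frac{4u}{N}\sum_{k\ge1}2^k\Delta_2(A_{k-1}(\vtheta))+\sqrt{\frac{4u}{N}}\sum_{k\ge1}2^{k/2}\Delta_F(A_{k-1}(\vtheta)). \]
Re-indexing ($k-1\mapsto k$) multiplies the first sum by $2$ and the second by $\sqrt2$, and the sums become $\sum_k 2^{k}\Delta_2(A_k(\vtheta))$ and $\sum_k 2^{k/2}\Delta_F(A_k(\vtheta))$, i.e.\ $2^{k/\alpha}$ with $\alpha=1$ and $\alpha=2$ respectively. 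Taking the supremum over $\vtheta$ and then the infimum over admissible partition sequences turns these into $\gamma_1'(\Theta,d_2)$ and $\gamma_2'(\Theta,d_F)$, yielding the claimed $\tfrac{16u}{N}\gamma_1'(d_2)+\sqrt{\tfrac{32u}{N}}\,\gamma_2'(d_F)$ after collecting constants ($4\cdot2=8$? — here the bookkeeping of powers of two is exactly where care is needed, and the stated $16$ and $32$ reflect an extra factor-of-2 slack taken for the $u\ge4$ regime). The main obstacle, as in the subgaussian case, is the bookkeeping: choosing the level-dependent thresholds so that the union bound closes with the summation lemma while the surviving geometric sums collapse precisely to the two $\gamma'$-functionals with tractable constants; the use of partitions rather than nets (so that intersections needed to assemble the two tails simultaneously are nonempty) and the handling of the two metrics $d_2,d_F$ in tandem are what distinguish this from the single-tail argument of Theorem~\ref{thm:chainingsubgauss}.
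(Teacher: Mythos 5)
Your overall skeleton matches the paper's: invert the mixed tail of \cref{l_2} into a two-parameter deviation inequality, chain over an admissible sequence of partitions, absorb the union bound over $|\mathcal{A}_k||\mathcal{A}_{k-1}|\le 2^{2^{k+1}}$ pairs via the summation lemma, and telescope. But there is a genuine gap at the decisive step. You run the chain over a \emph{single} admissible sequence $\{\mathcal{A}_k\}$, bound the two telescoping sums by $\sum_k 2^k\Delta_2(A_k(\vtheta))$ and $\sum_k 2^{k/2}\Delta_F(A_k(\vtheta))$, and then assert that ``taking \dots the infimum over admissible partition sequences turns these into $\gamma_1'(\Theta,d_2)$ and $\gamma_2'(\Theta,d_F)$.'' That step is not valid: the two functionals are defined by \emph{separate} infima, each attained (approximately) by a different sequence optimized for its own metric and weight $2^{k/\alpha}$. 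A single infimum applied to the sum of the two terms gives $\inf_{\mathcal{A}}\sup_{\vtheta}\bigl[\sum_k 2^k\Delta_2(A_k(\vtheta))+c\sum_k 2^{k/2}\Delta_F(A_k(\vtheta))\bigr]$, and there is no elementary inequality bounding this by $\gamma_1'(\Theta,d_2)+c\,\gamma_2'(\Theta,d_F)$; the sequence optimal for $d_2$ may be terrible for $d_F$ and vice versa. The fact that a single sequence \emph{can} serve both metrics simultaneously (up to a constant) is precisely the content of the intersection construction, which the paper carries out explicitly: choose in advance two near-optimal sequences $\{\mathcal{B}_k\}$ and $\{\mathcal{C}_k\}$ satisfying \eqref{e_b} and \eqref{e_c}, set $\mathcal{A}_k=\{B\cap C\,:\,B\in\mathcal{B}_{k-1},\,C\in\mathcal{C}_{k-1}\}$ (the index shift keeps $|\mathcal{A}_k|\le 2^{2^k}$, so admissibility survives), and then bound the increments through $A_{k-1}(\vtheta)\subset B_{k-2}(\vtheta)$ and $A_{k-1}(\vtheta)\subset C_{k-2}(\vtheta)$, which yields \eqref{e_bb} and \eqref{e_cc}. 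You gesture at this in your closing parenthetical about intersections, but the argument as written replaces the construction with an unjustified (and in general false) commutation of infimum and sum; without it the proof does not close.

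Two smaller points of bookkeeping. First, with your reparametrized increment inequality $\myP[\,\cdot\ge \tfrac{4u2^k}{N}\Delta_2+\sqrt{\tfrac{4u2^k}{N}}\Delta_F\,]\le 2\exp(-u2^k/2)$, the union bound requires $\sum_k 2^{2^{k+1}}\exp(-2^k u/2)\le\exp(-u/2)$, which by \cref{lemma:sum} (with $u$ replaced by $u/2$, $p=1$) needs $u\ge 8$, not $u\ge 4$; the paper avoids this by keeping the tail at level $\exp(-2^k u)$ and putting the factor $2^{k}$ (resp.\ $2^{k/2}$) into the threshold directly, as in its step 3. Second, your bound $d_\xi(\pi_k(\vtheta),\pi_{k-1}(\vtheta))\le\Delta_\xi(A_{k-1}(\vtheta))$ is correct, but after the intersection construction the relevant containment is in $B_{k-2}(\vtheta)$ and $C_{k-2}(\vtheta)$, and the resulting index shifts are where the final constants $16$ and $\sqrt{32}$ actually come from.
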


\begin{proof}
The proof essentially follows that of~\cite[Section 1.2]{talagrand2005generic}, but it pays attention to the constants and relies on the mixed tail bound in \cref{l_2}. As in the proof of Theorem~\ref{thm:chainingsubgauss},
the separability assumption allows us to limit consideration 
to the case where $\Theta$ is a finite set. 

\begin{enumerate}
\item \textit{Constructing admissible sequences
of partitions.}  
Since
\cref{l_2} involves two different (pseudo-)metrics $d_2$ and $d_F$, the chaining argument in \cref{thm:chainingsubgauss} needs to be expanded to two different admissible sequences of partitions, one for each (pseudo-)metric, followed by the formation
of their intersection \cite[Proof of Theorem~1.2.7]{talagrand2005generic}.

We construct these two admissible sequences 
of partitions $\{\mathcal{B}_k\}_{k\ge 0}$ and $\{\mathcal{C}_k\}_{k\ge 0}$ with
$\mathcal{B}_0=\Theta=\mathcal{C}_0$ so that they 
are close to their respective lower bounds $\gamma_1^{\prime}(\Theta, d_2)$ and
$\gamma_2^{\prime}(\Theta, d_F)$,
\begin{align}
\sum_{k=0}^\infty 2^{k} \Delta_2(B_k(\vtheta)) &\le 2 \gamma_1'(\Theta, d_2),\qquad 
\vtheta\in\Theta,\label{e_b}\\
\sum_{k=0}^\infty 2^{k/2} \Delta_F(C_k(\vtheta)) &\le 2 \gamma_2'(\Theta,d_F),\qquad
\vtheta\in\Theta,\label{e_c}
\end{align}
where, as in~\eqref{e_tala2}, the sets $B_k(\vtheta)$ and $C_k(\vtheta)$ are the unique elements in $\mathcal{B}_k$ and $\mathcal{C}_k$, respectively, that contain $\vtheta$. 

Then we combine $\{\mathcal{B}_k\}_{k\ge 0}$ and $\{\mathcal{C}_k\}_{k\ge 0}$
to construct a new increasing sequence of partitions
of~$\Theta$,
\begin{align*}
\mathcal{A}_0 \equiv \Theta,\qquad
\mathcal{A}_k = \{ B\cap C | B \in \mathcal{B}_{k-1}, C \in \mathcal{C}_{k-1}\}, \qquad k>0.
\end{align*}
Each set $\mathcal{A}_k$ of intersections is non-empty because the elements of $\mathcal{B}_{k-1}$ and 
$\mathcal{C}_{k-1}$ cover all of $\Theta.$
The sequence of partitions $\{\mathcal{A}_k\}_{k \ge 0}$ is admissible with cardinality $|\mathcal{A}_0|=1$ and
\begin{align*}
|\mathcal{A}_k| \le |\mathcal{B}_{k-1}| |\,\mathcal{C}_{k-1}|\le \left(2^{2^{k-1}}\right)^2 = 2^{2^k},\qquad k > 0.
\end{align*}
This is not the tightest bound on the cardinality,
but what is important is the upper bound of the form $2^{2^k}$.

\item \textit{Defining the chain.}
For the admissible sequence of partitions $\{\mathcal{A}\}_{k\geq 0}$ we construct sets 
$T_k \subset \Theta$ by selecting exactly one element from each set of $\mathcal{A}_k$ for $k>0$, 
starting with $T_0 = \{\vtheta_0\}$. 
The sets have cardinality  $|T_0|= 1$ and $|T_k| \le 2^{2^k}$ for $k > 0.$ 

Pick a parameter~$\vtheta$, and define  $\pi_k(\vtheta)$ as the element in the singleton set
$T_k \cap A_k(\vtheta)=\{\pi_k(\vtheta)\}$ for $k \ge 0$. 
Define a chain from $\vtheta_0$  to $\vtheta$,
    \[  \vtheta_0 =\pi_0(\vtheta) \rightarrow \pi_{1}(\vtheta) \rightarrow \cdots \rightarrow \pi_M(\vtheta) = \vtheta, \]
    where $M$ is finite because $\Theta$ is finite.

As before, we decompose the distance between the processes
    $X_{\vtheta}$ and $X_{\vtheta_0}$ as the telescoping sum of increments,
\[ X_{\vtheta} - X_{\vtheta_0} = \sum_{k=1}^M (X_{\pi_k(\vtheta)}- X_{\pi_{k-1}(\vtheta)}). \]

We need the following bounds for the increments. As in~\eqref{e_tala2}, let $A_k(\vtheta)$ be the unique element of $\mathcal{A}_k$ containing $\vtheta$
for $k\geq 0$.
The nestedness of the sets in the partition implies
that
$\pi_k(\vtheta),\pi_{k-1}(\vtheta) \in A_{k-1}(\vtheta) \subset B_{k-2}(\vtheta)$, hence
\begin{align*}
d_2(\pi_1(\vtheta), \pi_0(\vtheta))
\leq \Delta_2(B_0(\vtheta)), \qquad
d_2(\pi_k(\vtheta),\pi_{k-1}(\vtheta)) \le \Delta_2(B_{k-2}(\vtheta))
\end{align*}
for $2\leq k\leq M$.
This together with (\ref{e_b}) implies
\begin{align}\label{e_bb}
\sum_{k=1}^M2^k d_2(\pi_k(\vtheta),\pi_{k-1}(\vtheta)) \le \sum_{k\ge 0}2^k \Delta_2(B_k(\vtheta))\le 2\gamma_1'(\Theta,d_2). 
\end{align}
Similarly, (\ref{e_c})  implies
\begin{align}\label{e_cc}
\sum_{k=1}^M2^{k/2} d_F(\pi_k(\vtheta),\pi_{k-1}(\vtheta)) \le  2\gamma_2'(\Theta,d_F).
\end{align}

\item \textit{Re-expressing the concentration inequality.}
Lemma~\ref{l_2} implies 
\begin{align*}
\myP[ |X_{\vtheta} - X_{\vphi}| \ge t ] \le & \>  2 \exp\left(\frac{-Nt^2}{8 (d_F(\vtheta,\vphi)^2 +t d_2(\vtheta,\vphi)) }\right).
\end{align*}
Set $\zeta_F^2 = 8d_F(\vtheta,\vphi)^2/N$ and $\zeta_2 = 8d_2(\vtheta,\vphi)/N$. In $u = t^2/(\zeta_F^2 + t\zeta_2)$ solve for~$t$ by taking the positive root 
\[ t = \frac{u\zeta_2}{2} + \sqrt{\frac{u^2}{4} \zeta_2^2  + u \zeta_F^2} 
\le u\,\zeta_2 + \sqrt{u}\,\zeta_F .\]
This gives for $u > 0$
\[ \myP\left[ |X_{\vtheta} - X_{\vphi}| \ge \frac{8}{N}d_2(\vtheta,\vphi)  \, u + \sqrt{\frac{8}{N}}d_F(\vtheta,\vphi) \,\sqrt{u}\right] 
\le 2 \exp(-u).\] 

\item \textit{Applying the union bound.}
Motivated by (\ref{e_bb}) and (\ref{e_cc}) we can define the event for $u>0$,
\begin{equation*}
\begin{split}
\Omega_{k,\vtheta, u}&\equiv \biggl\{
|X_{\pi_k(\vtheta)} - X_{\pi_{k-1}(\vtheta)}|  \ge    \> \frac{8u}{N}\,2^k \,d_2(\pi_k(\vtheta),\pi_{k-1}(\vtheta))     \\ 
& \qquad +  \sqrt{\frac{8u}{N}}\,2^{k/2}\, d_F(\pi_k(\vtheta),\pi_{k-1}(\vtheta)) \biggr\}, \quad 0\leq k\leq M.
\end{split}
\end{equation*} 
The previous step implies $\myP[\Omega_{k,\vtheta, u}] \le 2\exp(-2^k \, u).$ For every $\vtheta \in \Theta$, and fixed $k$,  there are $|T_k||T_{k-1}| \le 2^{2^{k+1}}$ different possible pairs $(\pi_k(\vtheta),\pi_{k-1}(\vtheta))$. 
A simple union bound gives for $u\geq 4$\begin{align*}
\myP\left[ \cup_{\vtheta \in \Theta }  \cup_{0 \le k \le M}  \Omega_{k,\vtheta,u}\right]
\le & \> 2 \sum_{k=0}^M |T_k||T_{k-1}|\exp(-2^k \, u)\\
\le & \> 2\sum_{k=0}^M  2^{2^{k+1}}
\exp(-2^k\, u) \\
\le & 2\sum_{k=0}^\infty   2^{2^{k+1}}
\exp(-2^k \, u) \le 2\exp(-u/2),
\end{align*} 
where the last inequality follows from
\cref{lemma:sum} in Section~\ref{s_auxi}.

Combining this with steps 2 and 3 shows that with probability at least $1-2\exp(-u/2)$ for $u\ge 4 $
\[ \begin{aligned} \sup_{\vtheta \in \Theta} |X_{\vtheta}-X_{\vtheta_0}| 
\le & \>   \sum_{k=1}^M\frac{8u}{N}\,2^k \,d_2(\pi_k(\vtheta),\pi_{k-1}(\vtheta))     +  \sqrt{\frac{8u}{N}}\,2^{k/2}\, d_F(\pi_k(\vtheta),\pi_{k-1}(\vtheta))  \\  
\le  &\> \frac{8u}{N} \,  2\gamma_1'(d_2) + \sqrt{\frac{8u}{N}} \, 2\gamma_2'(d_F)=
\frac{16 u}{N} \,\gamma_1'(d_2) + \sqrt{\frac{32u}{N}} \,\gamma_2'(d_F),\end{aligned} \] 
where the last inequality follows from (\ref{e_bb}) and~(\ref{e_cc}). 
\end{enumerate}
\end{proof}

\subsection{Spherical parameter spaces}\label{s_special}
Working with the Talagrand functionals $\gamma_\beta$ and $\gamma_\beta'$ can be difficult in practice. 
It is possible, however, in certain instances
when the functionals can be expressed
in terms of explicitly known covering numbers,
as in the case of spherical parameter spaces
(Lemma~\ref{l_net}).

Below, we derive bounds on the backward error
and sampling amounts for parameter spaces that are $K$-dimensional spheres (Corollaries \ref{c_54a} and~\ref{c_54b}), and make a comparison with the 
corresponding $\eta$-net
bound (Remark~\ref{r_com}).

We extend the definition of $\eta$-net in 
Definition~\ref{d_net} to general metrics and nets that are not necessarily subsets of $\mathcal{T}$.

\begin{definition}[Definition 5.5 in \cite{van2014probability}]
Let $\eta > 0$, and $(\mathcal{T},d)$ with $\mathcal{T}\subset \Theta$ be a metric space.
A set $\mathcal{C}$ is called an $\eta$-net of~$(\mathcal{T}, d)$ if, 
    for every $\vtheta \in \mathcal{T}$, there exists a point $\vtheta' \in \mathcal{C}$ with $d(\vtheta,\vtheta') \le \eta$. The smallest cardinality of an $\eta$-net for $(\mathcal{T},d)$ is called the covering number $S(\mathcal{T},d,\eta)$. 
\end{definition}

The $\eta$-nets enjoy the following monotonicity property. 

\begin{remark}\label{r_mono}
If for two (pseudo-)metrics $d$ and $d'$ on a set
$\mathcal{T}$
there is a constant~$C>0$ so that 
\begin{align}\label{e_r53}
d(\vtheta,\vphi) \le C d'(\vtheta,\vphi)\qquad
\text{for all}\quad \vtheta,\vphi\in\mathcal{T}
\end{align}
then
\begin{align*}
S(\mathcal{T},d,\eta) \le S(\mathcal{T}, d', \eta/C).
\end{align*}
\end{remark}

\begin{proof}
Let  $\mathcal{C}$ be a $(\eta/C)$-net for 
$(\mathcal{T},d^{\prime})$
with covering number $S(\mathcal{T}, d', \eta/C)$.
That is,
for all $\vtheta\in\mathcal{T}$ there exists a
$\vphi\in\mathcal{C}$ with
\begin{align*}
d^{\prime}(\vtheta,\vphi)\leq \eta/C.
\end{align*}
Then (\ref{e_r53}) and the above imply that 
for all $\vtheta\in\mathcal{T}$ there exists a 
$\vphi\in\mathcal{C}$ so that 
\begin{align*}
d(\vtheta,\vphi)\leq C\, d^{\prime}(\vtheta,\vphi)\leq C(\eta/C)=\eta.
\end{align*}
Thus, $\mathcal{C}$ is an $\eta$-net for $(\mathcal{T},d)$, and the minimality of the covering number
implies
\[ S(\mathcal{T},d,\eta) \le |\mathcal{C}| = S(\mathcal{T}, d', \eta/C). \] 
\end{proof}

Example~\ref{e_mono_sp} in Section~\ref{s_auxi} presents 
special cases of Remark~\ref{r_mono} that are 
required for the proofs of Corollaries \ref{c_54a}
and~\ref{c_54b}.

We convert Talagrand's functionals into Dudley integral inequality \cite[Theorem 8.1.3]{vershynin2018high} which lets us
bound $\gamma_\beta(\Theta,d)$ in terms of the covering numbers $S(\Theta,d,\eta)$ 
{of infinitely many $\eta$-nets},
\begin{align}\label{e_dudley}
\gamma_\beta(\Theta,d) \le C_\beta \int_{0}^{\infty} \left(\ln S(\Theta,d, \eta) \right)^{1/\beta} d\eta,
\end{align}
where $C_{\beta}$ is a constant that depends only on $\beta$. Note that the conversion to the Dudley integral form results in a loss of optimality~\cite[Exercise 8.2]{vershynin2018high}, but this integral form is easier to work with, in practice. %\AKS{Note to self: add a sentence or two on Dudley's integral form.}

We derive two bounds for the case where,
as in (\ref{e_H}), the parameter space~$\mathcal{H}$
is a $K$-dimensional sphere that is centered at $\vtheta_c\in\mathcal{H}$,
\begin{align*}
\mathcal{H}\equiv \{\vtheta\in\real^K: \|\vtheta - \vtheta_c\|_2\leq B\}\qquad \text{for some}\quad B>0.
\end{align*}

\begin{corollary}\label{c_54a}
Let $\ma(\vtheta)$ be Lipschitz continuous, 
\begin{align*}
\|\ma(\vtheta) - \ma(\vphi)\|_M \le L_M \|\vtheta-\vphi\|_2\qquad \text{for all}\ \vtheta, \vphi \in  \mathcal{H}.
\end{align*}
If
\[ N \ge C\,
\frac{\ln(2/\delta)}{\epsilon^2}\,\max\{\alpha_M^2, K(BL_M)^2\} \] 
for an absolute constant~$C>0$, then 
$ 0 \le \hat{F}(\hat\vtheta) - F(\vtheta^*) \le \epsilon$ with probability at least~$1-\delta$.
\end{corollary}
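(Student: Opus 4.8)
\textit{Proof plan.}
The plan is to reduce the statement to the sampling-amount bound~\eqref{e_t52b} of Theorem~\ref{thm:subgaussrisk} and then to estimate the Talagrand functional $\gamma_2(\mathcal{H},d_M)$ for the spherical parameter space. By Theorem~\ref{thm:subgaussrisk}, it suffices to verify that the hypothesis on $N$ forces $N\ge \tfrac{144}{\epsilon^2}\ln(2/\delta)\max\{\gamma_2(\mathcal{H},d_M)^2,\alpha_M^2\}$; hence it is enough to establish an estimate of the form $\gamma_2(\mathcal{H},d_M)^2\le c\,K(BL_M)^2$ with an absolute constant~$c$, after which one takes $C=144\max\{c,1\}$ (using also that the $\alpha_M$ of~\eqref{eqn:alphas} agrees with the one of Section~\ref{s_ass} since $\mathcal{H}$ is compact and $\ma(\cdot)$ continuous).

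To estimate $\gamma_2(\mathcal{H},d_M)$ I would first compare the pseudo-metric $d_M$ with the Euclidean metric: since $\overline\ma(\vtheta)-\overline\ma(\vphi)$ is the off-diagonal part of $\ma(\vtheta)-\ma(\vphi)$ and deleting the diagonal cannot increase $\|\cdot\|_M$, the Lipschitz hypothesis gives $d_M(\vtheta,\vphi)=\|\overline\ma(\vtheta)-\overline\ma(\vphi)\|_M\le\|\ma(\vtheta)-\ma(\vphi)\|_M\le L_M\|\vtheta-\vphi\|_2$ for all $\vtheta,\vphi\in\mathcal{H}$. The monotonicity of covering numbers in Remark~\ref{r_mono} (with $C=L_M$) then yields $S(\mathcal{H},d_M,\eta)\le S(\mathcal{H},\|\cdot\|_2,\eta/L_M)$, which Lemma~\ref{l_net} bounds by $\max\{(3BL_M/\eta)^K,1\}$. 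In particular this covering number is $1$ once $\eta\ge 3BL_M$, so the Dudley integral~\eqref{e_dudley} runs effectively over $(0,3BL_M)$, and with the substitution $\eta=3BL_M\,s$,
\begin{align*}
\gamma_2(\mathcal{H},d_M) &\le C_2\int_0^{3BL_M}\sqrt{K\ln(3BL_M/\eta)}\,d\eta \\
&= 3C_2\sqrt{K}\,BL_M\int_0^1\sqrt{\ln(1/s)}\,ds = \tfrac{3\sqrt{\pi}}{2}\,C_2\sqrt{K}\,BL_M,
\end{align*}
using $\int_0^1\sqrt{\ln(1/s)}\,ds=\Gamma(3/2)=\sqrt{\pi}/2$. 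Squaring gives $\gamma_2(\mathcal{H},d_M)^2\le\tfrac{9\pi}{4}C_2^2\,K(BL_M)^2$, which is exactly the bound required above with $c=\tfrac{9\pi}{4}C_2^2$, completing the argument.

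No step here is deep. The two places needing care are (i) confirming that the Lipschitz assumption still governs the $d_M$-covering numbers after the off-diagonal truncation $\ma(\vtheta)\mapsto\overline\ma(\vtheta)$, and (ii) justifying the finite cutoff $3BL_M$ in the Dudley integral (legitimate because $\mathcal{H}$ has finite $d_M$-diameter), so that the integral converges and the unspecified constant $C_2$ from~\eqref{e_dudley} can be folded into the final absolute constant~$C$; tracking these constants is the most tedious part of the proof.
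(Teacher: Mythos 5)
Your proposal is correct and follows essentially the same route as the paper: reduce to the sampling bound \eqref{e_t52b} of Theorem~\ref{thm:subgaussrisk}, bound $S(\mathcal{H},d_M,\eta)$ via the Lipschitz comparison of metrics (the paper's \eqref{e_m1} and Lemma~\ref{l_net}), and then estimate $\gamma_2(\mathcal{H},d_M)\lesssim\sqrt{K}\,BL_M$ through the Dudley integral \eqref{e_dudley} truncated at $3BL_M$. Your explicit substitution evaluating the integral as $\Gamma(3/2)$ is exactly the paper's Lemma~\ref{lemma:integral} with $p=2$, so there is no substantive difference.
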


\begin{proof}
In \cref{thm:subgaussrisk}, we only need to bound $\gamma_2(\Theta,d_M)$.  
From the Lipschitz continuity of $\ma(\vtheta)$, (\ref{e_m1}) in Section~\ref{s_auxi},
and Lemma~\ref{l_net} follows
     \[S(\Theta,d_M,\eta) \le S(\Theta, d_2', \eta/L_M) \le  \max\left\{ \left(\frac{3BL_M}{\eta}\right)^K , 1\right\}. \]
Convert this to the Dudley integral form (\ref{e_dudley})
    \[ \begin{aligned}\gamma_2(\Theta,d_M) \lesssim & \>  \int_0^\infty \sqrt{\ln S(\Theta,d_M,\eta)}d\eta \le  \int_0^\infty \max\left\{ \sqrt{K\ln \left(\frac{3BL_M}{\eta}\right)}, 0 \right\} d\eta \\ 
    = & \> \int_0^a  \sqrt{K\ln \left(\frac{3BL_M}{\eta}\right)} d\eta, \end{aligned}\]
where $a = 3BL_M$. The maximum is zero for $\eta>a$ because the logarithm is negative then.
Simplify the integral with \cref{lemma:integral}, 
    \[ \gamma_2(\Theta,d_M) \lesssim 
    \int_0^a \sqrt{K \ln (a/\eta)} d\eta  
        \le a \sqrt{K}\Gamma(3/2)
    \lesssim \sqrt{K}BL_M.
    \] 
With the expression in \cref{thm:subgaussrisk},
the lower bound for $N$ is now
\begin{align*}
N \ge C\, \frac{\ln(2/\delta)}{\epsilon^2}\, \max\{K(BL_M)^2, \alpha_M^2\}
\ge C \,\frac{\ln(2/\delta)}{\epsilon^2} \,\max\{ \gamma_2(\Theta,d_M)^2, \alpha_M^2\}.
\end{align*}
\end{proof}

\begin{remark}[Comparison of Corollaries \ref{c_54a}
and~\ref{c_1}]\label{r_com}
A comparison of the two bounds for the sampling
amount~$N$ is difficult and inconclusive.

The bound in (\ref{e_l44a}) in Corollary~\ref{c_1} is
\begin{align}\label{e_com1}
N\geq \frac{8\alpha_M^2}{\epsilon^2}\left(K\ln(12mL_2B/\epsilon)-\ln(\delta)\right)
\end{align}
while the one in Corollary~\ref{c_54a} is
\begin{align}\label{e_com2}
N \ge C\,
\frac{\ln(2)-\ln(\delta)}{\epsilon^2}\,\max\{\alpha_M^2, K(BL_M)^2\}.
\end{align}
\begin{enumerate}
\item Since $\hat{F}$ is a Monte Carlo estimator,
both bounds grow with $-\ln(\delta)/\epsilon^{-2}$.
That is, the sampling amount increases as the desired
error $\epsilon$ and the failure probability~$\delta$ become smaller. 

\item Although both bounds require $\ma(\vtheta)$ to be
Lipschitz continuous, the Lipschitz constants $L_M$
and $L_F$
belong to different norms for $\ma(\vtheta)$.

\item Clearly, (\ref{e_com1}) has the advantage of all
constants being explicitly specified, while the constant~$C$ in (\ref{e_com2}) is not known.

\item Barring the unknown constant $C$, (\ref{e_com2}) 
does not depend explicitly on the matrix dimension~$m$,
while (\ref{e_com1}) depends on $m$ weakly through 
the logarithm.

\item The chaining bound (\ref{e_com2}) 
might grow less with the offdiagonal 
mass $\alpha_M$ and the parameter
space dimension~$K$ because it depends on the maximum
of 
$\alpha_M^2$ and $K(BL_M)^2$, while 
(\ref{e_com1})
depends on the product $\alpha_M^2K\ln(L_2B)$.

But then again, the chaining bound depends more
strongly on the radius $B$ and the Lipschitz 
constant than (\ref{e_com1}).
\end{enumerate}
\end{remark}

Below is the mixed tail version of Corollary~\ref{c_54a}.

\begin{corollary}\label{c_54b}
Let $\ma(\vtheta)$ be Lipschitz continuous, 
$$\|\ma(\vtheta) - \ma(\vphi)\|_F \le L_F \|\vtheta-\vphi\|_2 \quad \text{and} \quad \|\ma(\vtheta) - \ma(\vphi)\|_2 \le L_2 \|\vtheta-\vphi\|_2$$ for all $\vtheta, \vphi \in \mathcal{H}$. 
If
\[ N \ge C \,\frac{\ln(2/\delta)}{\epsilon^2}\,\max\{\alpha_F^2, \epsilon \alpha_2, K(BL_F)^2, \epsilon KBL_2 \} \] 
for an absolute constant $C>0$, then
$ 0 \le \hat{F}(\hat\vtheta) - F(\vtheta^*) \le \epsilon$ with probability at least~$1-\delta$.
\end{corollary}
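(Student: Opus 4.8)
The plan is to mirror the proof of Corollary~\ref{c_54a}, but to invoke the mixed-tail sampling bound \eqref{e_t55a} of Theorem~\ref{thm:chainingmixed} in place of Theorem~\ref{thm:subgaussrisk}. That bound involves the two Talagrand functionals $\gamma_1'(\Theta,d_2)$ and $\gamma_2'(\Theta,d_F)$ in addition to $\alpha_2$ and $\alpha_F$ (the pairing $\gamma_1'$–$d_2$, $\gamma_2'$–$d_F$ reflecting the sub-exponential and sub-Gaussian parts of the mixed tail in Lemma~\ref{l_2}), so the whole task is to bound these two functionals in terms of $K$, $B$, $L_2$ and $L_F$. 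First I would use the equivalence~\eqref{e_gamma}, $\gamma_\beta'(\Theta,d)\le C_\beta\gamma_\beta(\Theta,d)$, to pass from $\gamma_1'(\Theta,d_2)$ and $\gamma_2'(\Theta,d_F)$ to $\gamma_1(\Theta,d_2)$ and $\gamma_2(\Theta,d_F)$, and then switch to the Dudley integral form~\eqref{e_dudley}, so that it suffices to control $\int_0^\infty \ln S(\Theta,d_2,\eta)\,d\eta$ and $\int_0^\infty \sqrt{\ln S(\Theta,d_F,\eta)}\,d\eta$.

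Next I would bound the two families of covering numbers. Lipschitz continuity of $\ma(\vtheta)$ in the $2$- and $F$-norms, combined with the elementary norm comparisons for the offdiagonal part recorded in Example~\ref{e_mono_sp}, gives $d_2(\vtheta,\vphi)\lesssim L_2\|\vtheta-\vphi\|_2$ and $d_F(\vtheta,\vphi)\le L_F\|\vtheta-\vphi\|_2$ for all $\vtheta,\vphi\in\mathcal{H}$. Applying the monotonicity of Remark~\ref{r_mono} together with the sphere covering bound of Lemma~\ref{l_net} then yields $S(\Theta,d_2,\eta)\lesssim\max\{(BL_2/\eta)^K,1\}$ and $S(\Theta,d_F,\eta)\le\max\{(3BL_F/\eta)^K,1\}$. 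Substituting these into the Dudley integrals, the $d_F$-integral is handled exactly as in the proof of Corollary~\ref{c_54a}, giving $\gamma_2(\Theta,d_F)\lesssim\sqrt{K}\,BL_F$; the $d_2$-integral, which is new here, is evaluated by Lemma~\ref{lemma:integral} with $\beta=1$ (equivalently $\int_0^a\ln(a/\eta)\,d\eta=a$), giving $\gamma_1(\Theta,d_2)\lesssim K\,BL_2$.

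Finally I would insert $\gamma_1'(\Theta,d_2)\lesssim KBL_2$ and $(\gamma_2'(\Theta,d_F))^2\lesssim K(BL_F)^2$ into the sampling bound~\eqref{e_t55a}, and fold all remaining absolute constants—including the passage from $\ln(4/\delta)$ to $\ln(2/\delta)$, which is legitimate on the admissible range $0<\delta\le 4\exp(-2)$—into a single constant~$C$, obtaining
\[ N \ge C\,\frac{\ln(2/\delta)}{\epsilon^2}\,\max\{\alpha_F^2,\ \epsilon\alpha_2,\ K(BL_F)^2,\ \epsilon KBL_2\}, \]
which is the claimed bound; invoking Theorem~\ref{thm:chainingmixed} then gives $0\le F(\hat\vtheta)-F(\vtheta^*)\le\epsilon$ with probability at least $1-\delta$. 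The main obstacle I anticipate is purely organizational: unlike Corollary~\ref{c_54a}, the mixed-tail bound carries the $\gamma_1'$-functional, so one must evaluate the $\beta=1$ Dudley integral carefully and make sure the $\epsilon$-weighted terms $\epsilon\alpha_2$ and $\epsilon KBL_2$ land in the correct slots of the maximum; keeping the various constants from \eqref{e_gamma}, \eqref{e_dudley}, Lemma~\ref{l_net} and \eqref{e_t55a} coherent is the only delicate point, and each individual estimate is routine.
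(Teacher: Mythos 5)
Your proposal is correct and follows essentially the same route as the paper: invoke Theorem~\ref{thm:chainingmixed}, bound $\gamma_2'(\Theta,d_F)$ and $\gamma_1'(\Theta,d_2)$ via the equivalence~\eqref{e_gamma}, the Dudley integral~\eqref{e_dudley}, the covering-number comparisons of Example~\ref{e_mono_sp} with Lemma~\ref{l_net}, and Lemma~\ref{lemma:integral}, then absorb all constants into $C$. You even handle the $\beta=1$ Dudley integral (no square root) more cleanly than the paper's own intermediate display, so nothing further is needed.
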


\begin{proof}
    The proof is similar to that of Corollary~\ref{c_54a}, except we use Theorem~\ref{thm:chainingmixed},
where we need to bound $\gamma_1^{\prime}(\Theta,d_2)$ and $\gamma_2^{\prime}(\Theta,d_F)$. 

As for $\gamma_2^{\prime}(\Theta, d_F)$, the Lipschitz continuity of $\ma(\vtheta)$, 
(\ref{e_m2}) in Section~\ref{s_auxi},
and Lemma~\ref{l_net} imply
     \[S(\Theta,d_F,\eta) \le S(\Theta, d_2', \eta/L_F) \le  \max\left\{ \left(\frac{3BL_F}{\eta}\right)^K , 1\right\}. \]
With (\ref{e_gamma}), (\ref{e_dudley}) and the above, we bound as in the proof of Corollary~\ref{c_54a}
\begin{align*}
\gamma_2'(\Theta,d_F) &\lesssim \gamma_2(\Theta,d_F) \lesssim \int_0^\infty \sqrt{\ln S(\Theta,d_F,\eta)}d\eta\\
&\leq
\int_0^\infty \max\left\{ \sqrt{K\ln \left(\frac{3BL_F}{\eta}\right)}, 0 \right\}\, d\eta\lesssim
    \sqrt{K}BL_F. 
    \end{align*}
    As for $\gamma_1^{\prime}(\Theta,d_2)$, the Lipschitz continuity of $\ma(\vtheta)$, (\ref{e_m3}) in Section~\ref{s_auxi},
and Lemma~\ref{l_net} imply
     \[S(\Theta,d_2,\eta) \le S(\Theta, d_2', \eta/(2L_2)) \le  \max\left\{ \left(\frac{6BL_2}{\eta}\right)^K , 1\right\}. \]
Again, with (\ref{e_gamma}), (\ref{e_dudley}) and the above, we bound as in the proof of Corollary~\ref{c_54a}
\begin{align*}
\gamma_1'(\Theta,d_2) &\lesssim \gamma_1(\Theta,d_2) \lesssim \int_0^\infty \sqrt{\ln S(\Theta,d_2,\eta)}d\eta
\leq\int_0^\infty \max\left\{ \sqrt{K\ln \left(\frac{6BL_2}{\eta}\right)}, 0 \right\}\, d\eta\\
&=\int_0^a \sqrt{K\ln \left(\frac{6BL_2}{\eta}\right)}\, d\eta,
    \end{align*}
where $a = 6BL_2$. 
The maximum is zero for $\eta>a$ because the logarithm is negative then. Simplify the integral with \cref{lemma:integral}, 
    \[ \gamma_1^{\prime}(\Theta,d_2)\lesssim \gamma_1(\Theta,d_2) 
    \lesssim 
    \int_0^a K \ln (a/\eta) d\eta  =a\,K\Gamma(2)
    \lesssim KBL_2.
    \] 
 Then proceed as in the proof of Corollary~\ref{c_54a}.
\end{proof}

A comparison of the sampling amounts from 
Corollary~\ref{c_54b} and (\ref{e_l44b}) in Corollary~\ref{c_1} is similar to but as inconclusive
as the one in Remark~\ref{r_com}.

\subsection{Auxiliary results}\label{s_auxi}
We present a bound for an infinite sum 
(Lemma~\ref{lemma:sum}), the simplification 
of an integral (Lemma~\ref{lemma:integral}),
 a bound on the offdiagonal norm of a matrix
(Lemma~\ref{l_normdiag}), and examples of covering
numbers for different types of Lipschitz
continuity (Example~\ref{e_mono_sp}).

The bound below is used in Theorem~\ref{thm:mixedtails} for the case $p=1$.

\begin{lemma}\label{lemma:sum} Let $ p\ge 1$ be an integer and $u\geq 4^{1/p}$. Then
\[  \sum_{k=0}^\infty  2^{2^{k+1}}\exp(-2^ku^p) \le \exp(-u^p/2).\]
\end{lemma}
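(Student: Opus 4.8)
The plan is to show that each term in the series is dominated by a rapidly decaying quantity, so that the whole sum telescopes into a geometric-type bound. First I would isolate the $k=0$ term, which equals $2^{2}\exp(-u^p) = 4\exp(-u^p)$, and observe that under the hypothesis $u^p \ge 4$ we have $4 \le \exp(u^p/2)$ exactly when $u^p \ge 2\ln 4 \approx 2.77$; since $u^p\ge 4 > 2\ln 4$, the $k=0$ term is at most $\exp(-u^p/2)\cdot\tfrac12$ (with room to spare), leaving half of the target budget for the tail $k\ge 1$.

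For the tail, the key observation is that the doubly-exponential factor $2^{2^{k+1}} = \exp(2^{k+1}\ln 2)$ grows like $\exp(2^{k+1}\ln 2)$, while the decay factor is $\exp(-2^k u^p)$. Since $u^p \ge 4 > 2\ln 2$, we have $2^k u^p - 2^{k+1}\ln 2 = 2^k(u^p - 2\ln 2) \ge 2^k\cdot(4 - 2\ln 2) \ge 2^k\cdot 2$ for all $k\ge 0$. Hence each term satisfies $2^{2^{k+1}}\exp(-2^k u^p) \le \exp(-2^{k+1})$ for $k\ge 1$ — actually I would be a bit more careful and keep a $u^p$-dependent margin, e.g. write $2^k u^p - 2^{k+1}\ln 2 \ge \tfrac12\cdot 2^k u^p + 2^k(\tfrac12 u^p - 2\ln 2) \ge \tfrac12\cdot 2^k u^p$ using $u^p\ge 4\ge 4\ln 2$, so that the $k$-th term is at most $\exp(-2^{k-1}u^p)$. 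Then $\sum_{k\ge 1}\exp(-2^{k-1}u^p) = \sum_{j\ge 0}\exp(-2^j u^p) \le \exp(-u^p)\sum_{j\ge 0}\exp(-(2^j-1)u^p)$, and since $2^j - 1 \ge j$ the inner sum is bounded by $\sum_{j\ge 0}e^{-ju^p} = (1-e^{-u^p})^{-1} \le (1-e^{-4})^{-1}$, a modest absolute constant. This shows the tail is at most a constant times $\exp(-u^p)$, which is $\le \tfrac12\exp(-u^p/2)$ once $u^p$ is large enough; combined with the $k=0$ estimate the total is $\le \exp(-u^p/2)$.

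The cleanest write-up is probably to not split off $k=0$ at all, but to prove the uniform bound $2^{2^{k+1}}\exp(-2^k u^p) \le \exp(-2^k u^p/2)$ for every $k\ge 0$, which is equivalent to $2^{k+1}\ln 2 \le 2^k u^p/2$, i.e. $u^p \ge 4\ln 2 \approx 2.77$; this holds since $u^p\ge 4$. Then $\sum_{k\ge 0} 2^{2^{k+1}}\exp(-2^ku^p) \le \sum_{k\ge 0}\exp(-2^{k-1}u^p) = \sum_{k\ge 0}\exp(-2^k u^p/2)$, and since $2^k \ge k+1$ for $k\ge 0$ this is at most $\sum_{k\ge 0}\exp(-(k+1)u^p/2) = \exp(-u^p/2)/(1-\exp(-u^p/2))$. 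To finish we need $1/(1-\exp(-u^p/2)) \le 1$ — which is false — so instead bound the geometric ratio: $\sum_{k\ge 0}\exp(-(k+1)u^p/2) = \exp(-u^p/2) + \sum_{k\ge 1}\exp(-(k+1)u^p/2) \le \exp(-u^p/2) + \exp(-u^p)\sum_{k\ge 0}\exp(-ku^p/2)$, and the leftover geometric tail, with $u^p\ge 4$, contributes at most $\exp(-u^p)\cdot(1-e^{-2})^{-1}$, which is comfortably less than $(1 - \exp(-u^p/2))\exp(-u^p/2) \le \exp(-u^p/2) - \exp(-u^p/2)$…

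The genuinely delicate point — and the main obstacle — is pinning down the constants so that the \emph{sum} of the main term and the geometric remainder comes out at or below $\exp(-u^p/2)$ rather than a constant multiple of it; this forces one to exploit that the series beyond $k=0$ decays much faster than the target rate $u^p/2$, using $2^k u^p \gg u^p/2$ for $k\ge 1$. Concretely I would prove the sharper per-term bound $2^{2^{k+1}}\exp(-2^k u^p)\le \exp(-(k+1)u^p/2)$ for all $k\ge 0$ (which requires $2^{k+1}\ln 2 + (k+1)u^p/2 \le 2^k u^p$, i.e. $(k+1)/2^{k+1} + \ln 2 \cdot 2^{k+1}/2^k \le u^p/2 \cdot 2^k/2^{k-1}$ — cleaner: $2^k u^p - (k+1)u^p/2 = u^p(2^k - (k+1)/2) \ge u^p \cdot 2^{k}/2 \ge 2\cdot 2^k \ge 2^{k+2}\ge 2^{k+1}\ln 2$, using $2^k - (k+1)/2 \ge 2^{k-1}$ for $k\ge 0$ and $u^p\ge 4$), and then sum the geometric series $\sum_{k\ge 0}\exp(-(k+1)u^p/2) \le \exp(-u^p/2)/(1-\exp(-u^p/2))$; here I would additionally absorb the factor $1/(1-\exp(-u^p/2)) \le 1/(1-e^{-2})$ by recalling that the per-term bound has slack, i.e. replacing it by $\exp(-(k+1)u^p/4)\cdot\exp(-(k+1)u^p/4)$ and noting $\exp(-u^p/4)\le e^{-1} < 1-e^{-2}$ forces the sum below $\exp(-u^p/2)$. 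I expect the referee-proof version to make exactly one such "slack-splitting" maneuver; everything else is routine.
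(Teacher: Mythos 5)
Your underlying strategy is the same as the paper's: write $2^{2^{k+1}} = \exp(2^{k+1}\ln 2)$, compare it termwise against $\exp(-2^k u^p)$, peel off a factor $\exp(-u^p/2)$, and control what remains by a geometric-type series. Your first decomposition (isolate $k=0$ and bound each tail term by $\exp(-2^{k-1}u^p)$) does in fact yield a complete proof once the constants are added honestly: the $k=0$ term is $4e^{-u^p}\le 4e^{-2}\,e^{-u^p/2}\approx 0.541\,e^{-u^p/2}$ (your claim that it is at most $\tfrac12 e^{-u^p/2}$ is false at the boundary $u^p=4$, since $4e^{-2}>\tfrac12$, but this is harmless), and the tail is at most $(1-e^{-4})^{-1}e^{-u^p}\le 1.02\, e^{-2}\,e^{-u^p/2}\approx 0.138\,e^{-u^p/2}$, so the total is about $0.68\,e^{-u^p/2}\le e^{-u^p/2}$.

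The genuine problem is with the version you actually commit to at the end. The per-term bound $2^{2^{k+1}}\exp(-2^ku^p)\le\exp(-(k+1)u^p/2)$ is correct, but its sum is $e^{-u^p/2}/(1-e^{-u^p/2})>e^{-u^p/2}$, and, worse, the $k=0$ instance of that bound already equals the entire budget $e^{-u^p/2}$, so no manipulation of the \emph{bounds} themselves (in particular, rewriting $\exp(-(k+1)u^p/2)$ as a product of two square roots) can push the sum back under $e^{-u^p/2}$: the needed slack lives only in the original $k=0$ term $4e^{-u^p}$, which your ``clean'' version has discarded. The correct move is exactly your first one: keep the honest $k=0$ term. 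For what it is worth, the paper's proof does this in one line --- each term equals $\exp\bigl(2^k(2\ln 2-u^p/2)\bigr)\exp(-2^ku^p/2)\le\exp\bigl(2^k(2\ln 2-2)\bigr)\,e^{-u^p/2}$, and the prefactors sum to about $0.93<1$ --- although its final justification of that last sum (replacing $2^k$ by $2k$ in the exponent) also stumbles at $k=0$, where the replacement produces a leading term equal to $1$ and a geometric series exceeding $1$. So the delicate point you flagged is real; your first route, with the constants $0.541+0.138<1$ written out, is the cleanest way through it and should be the one you write up.
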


\begin{proof}
Pull the power of two into the exponential,
    \begin{align*}
2^{2^{k+1}}\exp(-2^ku^p) &= \exp(2^k(2\ln 2 - u^p))\\
&= \exp\left(2^k\left(2\ln{2} -\frac{u^p}{2}\right)-2^k\frac{u^p}{2}\right)\\
&= \exp\left(2^k\left(2\ln{2} -\frac{u^p}{2}\right)\right)
\exp\left(-2^k\frac{u^p}{2}\right)\\
& \le \exp\left(2^k\left(2\ln{2} -\frac{u^p}{2}\right)\right)
\exp\left(-\frac{u^p}{2}\right).
\end{align*}
Since $2\ln 2 < 2$ and, by assumption,
$u^p \ge  4$ 
    \[ \begin{aligned}\sum_{k=0}^\infty 2^{2^{k+1}}\exp(-2^ku^p) \le  & \> \exp(-u^p/2)\sum_{k=0}^\infty \exp(2^k(2\ln 2 - u^p/2 )) \\
    \le & \> \exp(-u^p/2)\sum_{k=0}^\infty \exp(2^k(2\ln 2 - 2)). \end{aligned}\] 
    The last inequality follows since $u^p/2 \ge 2$ and $2^k \ge 1$ for $k \ge 0$. Finally, we can bound the sum as
    \[ \sum_{k=0}^\infty \exp(2^k(2\ln 2 - 2)) \le \sum_{k=0}^\infty \exp(2k(2\ln 2 - 2)) < 1,  \]
    by summing the geometric series. 
\end{proof}

The simplification below is used in 
Corollary~\ref{c_54b}.

\begin{lemma}\label{lemma:integral}
Let $a>0$, $x>0$, and $p > 0$ be an integer. Then 
\[ \int_{0}^a (\ln(a/x) ) ^{1/p}dx = 
\Gamma\left(\frac{1}{p} + 1\right)a, \]
where $\Gamma(x) \equiv \int_0^\infty t^{x-1} e^{-t}dt$ is the Gamma function.
\end{lemma}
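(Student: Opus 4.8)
The plan is to evaluate the integral $\int_0^a (\ln(a/x))^{1/p}\,dx$ by a substitution that turns it into the standard integral representation of the Gamma function. First I would substitute $u = \ln(a/x)$, so that $x = a e^{-u}$ and $dx = -a e^{-u}\,du$. Under this change of variables the limit $x \to 0^+$ corresponds to $u \to \infty$, and $x = a$ corresponds to $u = 0$; the sign from $dx$ flips the orientation of the limits, giving
\[
\int_0^a (\ln(a/x))^{1/p}\,dx = \int_0^\infty u^{1/p}\, a e^{-u}\,du = a\int_0^\infty u^{1/p} e^{-u}\,du.
\]

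Next I would recognize the remaining integral as $\Gamma(1/p + 1)$ directly from the definition $\Gamma(s) = \int_0^\infty t^{s-1} e^{-t}\,dt$ with $s = 1/p + 1$, since then $s - 1 = 1/p$. This yields $\int_0^a (\ln(a/x))^{1/p}\,dx = \Gamma(1/p+1)\,a$, which is exactly the claimed identity. One could alternatively note $\Gamma(1/p+1) = (1/p)\,\Gamma(1/p)$ or, for the two cases actually used, $\Gamma(3/2) = \sqrt{\pi}/2$ (the case $p=2$) and $\Gamma(2) = 1$ (the case $p=1$), but that simplification is not needed for the statement as written.

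There is essentially no obstacle here — the only points requiring a word of care are that the integrand is improper at $x = 0$ (the logarithm blows up), so one should note the integral converges, which is immediate once it is rewritten as $a\int_0^\infty u^{1/p} e^{-u}\,du$ since $u^{1/p} e^{-u}$ is integrable on $[0,\infty)$ for any $p > 0$; and that one must track the orientation of the limits correctly when applying the substitution so the minus sign in $dx$ is absorbed by swapping the bounds. Neither is a genuine difficulty, so this is a short computational lemma rather than one with a conceptual core.
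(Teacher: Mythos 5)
Your proposal is correct and follows exactly the same route as the paper: the substitution $t=\ln(a/x)$, $dx=-ae^{-t}\,dt$, swapping the limits, and recognizing $\int_0^\infty t^{1/p}e^{-t}\,dt=\Gamma(1/p+1)$. The extra remarks on convergence and orientation of the limits are fine but do not change the argument.
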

\begin{proof}
Change the integration variable to $t = \ln(a/x) $ so that $x = ae^{-t}$. Differentiating
gives $dx = -a e^{-t}dt$ and 
    \[  \int_{0}^a (\ln(a/x) ) ^{1/p}dx = \int_0^\infty t^{1/p}a e^{-t}\,dt = a \,\Gamma\left(\frac{1}{p} + 1\right).\] 
\end{proof}

The following bound on the offdiagonal norm of a 
matrix is used in Example~\ref{e_mono_sp}.

\begin{lemma}[Section 1 in \cite{bhatia1989comparing}]
\label{l_normdiag}
Let $\mb\in\rnn$ and $\overline{\mb}\equiv\mb-\diag(\mb)$ its offdiagonal part. Then
 \[ \|\overline\mb \|_2 \le  2\| \mb\|_2.    \] 
    \end{lemma}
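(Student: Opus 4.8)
The plan is to relate the offdiagonal part $\overline{\mb}$ to $\mb$ through a unitary transformation that flips the signs of certain entries, so that $\overline{\mb}$ can be written as an average of at most two matrices each of which is a diagonal unitary conjugate of $\mb$. Concretely, write $\mb = \diag(\mb) + \overline{\mb}$, and introduce a diagonal signature matrix $\md$ with $\pm 1$ entries. Then $\md\mb\md$ has the same diagonal as $\mb$ but has offdiagonal entries $d_i d_j b_{ij}$. Averaging $\mb$ and $\md\mb\md$ over a suitably chosen $\md$ should annihilate the diagonal contribution in the difference; the classical trick here is to take $\md = \diag(1,-1,1,-1,\dots)$ together with a second shift, or more simply to observe that $\tfrac12(\mb - \md\mb\md)$ for $\md = \diag((-1)^{i})$ isolates the entries $b_{ij}$ with $i-j$ odd, and one needs a second signature to isolate the remaining offdiagonal entries. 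I would instead use the cleaner two-term decomposition: $\overline{\mb} = \tfrac12(\mb + \mm\mb\mm) - \diag(\mb)$ is not quite right, so the sharper route is the known identity $\overline{\mb} = \tfrac12\big(\mb - \mp\mb\mp\big)$ fails in general; the correct standard argument (this is exactly Bhatia–Davis–McIntosh style, or the one cited in \cite{bhatia1989comparing}) expresses $\diag(\mb) = \tfrac12(\mb + \mm\mb\mm)$ where $\mm$ is a single $\pm1$ signature only when $\mb$ is $2\times 2$; for general size one averages over all $2^n$ signatures to get $\mathbb{E}_{\md}[\md\mb\md] = \diag(\mb)$, hence $\overline{\mb} = \mb - \mathbb{E}_{\md}[\md\mb\md] = \mathbb{E}_{\md}[\mb - \md\mb\md]$.

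From there the key step is the norm estimate: by the triangle inequality applied to the average (equivalently, by convexity of the spectral norm), $\|\overline{\mb}\|_2 = \|\mathbb{E}_{\md}[\mb - \md\mb\md]\|_2 \le \mathbb{E}_{\md}\|\mb - \md\mb\md\|_2 \le \mathbb{E}_{\md}\big(\|\mb\|_2 + \|\md\mb\md\|_2\big) = 2\|\mb\|_2$, since each $\md$ is orthogonal and therefore $\|\md\mb\md\|_2 = \|\mb\|_2$. This gives the claimed bound $\|\overline{\mb}\|_2 \le 2\|\mb\|_2$ directly, with the factor $2$ coming from the two terms in $\mb - \md\mb\md$.

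The only subtle point — and the step I would be most careful about — is justifying that $\mathbb{E}_{\md}[\md\mb\md] = \diag(\mb)$ where the expectation is over $\md$ with independent $\pm1$ diagonal entries chosen uniformly: the $(i,j)$ entry of $\md\mb\md$ is $d_i d_j b_{ij}$, and $\mathbb{E}[d_i d_j] = \delta_{ij}$, so the off-diagonal entries average to zero and the diagonal entries are preserved. This is elementary but is the crux that makes the averaging argument work. An alternative, avoiding expectations entirely, is to note that one can achieve the same with a single well-chosen signature in the $2\times 2$ block structure and then patch, but the expectation version is the cleanest and is presumably what \cite{bhatia1989comparing} intends; I would simply cite that reference for the bound and include the two-line averaging argument above for completeness.
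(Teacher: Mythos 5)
Your argument is correct, and it is essentially the standard proof behind the citation: the paper itself gives no proof of Lemma~\ref{l_normdiag}, deferring entirely to \cite{bhatia1989comparing}, where the diagonal is realized as an average of conjugations by diagonal sign matrices (a ``pinching''), exactly as you do. The crux you identify --- $\mathbb{E}[d_id_j]=\delta_{ij}$ for independent Rademacher signs, so the expectation of the conjugates is $\diag(\mb)$ --- is right, and the triangle inequality over the finite average together with orthogonal invariance of $\|\cdot\|_2$ gives the factor $2$. (For what it is worth, an even shorter route avoids the averaging: $\|\diag(\mb)\|_2=\max_i|b_{ii}|=\max_i|\ve_i\t\mb\ve_i|\leq\|\mb\|_2$, and then $\|\overline{\mb}\|_2\leq\|\mb\|_2+\|\diag(\mb)\|_2\leq 2\|\mb\|_2$; but your version is the one the cited reference uses and generalizes to all unitarily invariant norms.)
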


Below are special cases of Remark~\ref{r_mono},
where we bound the covering numbers
for three different types of Lipschitz continuity
required in the proofs of Corollaries \ref{c_54a} and~\ref{c_54b}.

\begin{example}\label{e_mono_sp}
Let $\ma(\vtheta)$ be Lipschitz continuous with
\begin{align*}
d_M(\vtheta,\vphi) \leq L_M \|\vtheta-\vphi\|_2\qquad \text{for all}\quad \vtheta, \vphi\in\Theta.
\end{align*}
Then 
\begin{align*}
d_M(\vtheta,\vphi) =  \|\overline\ma(\vtheta) - \overline\ma(\vphi)  \|_M \le \| \ma(\vtheta) - \ma(\vphi)\|_M  \le  L_M \|\vtheta-\vphi\|_2, 
\end{align*}
 where the first inequality follows from     
$\|\overline\mb\|_M =\sum_{i\ne j} |b_{ij}| \le \sum_{i,j} |b_{ij}|= \|\mb\|_M$. 
Hence
\begin{align*}
d_M(\vtheta,\vphi)\leq L_M \, d_2^{\prime}(\vtheta,\vphi)\qquad \text{with}\quad
d_2^{\prime}(\vtheta,\vphi)\equiv \|\vtheta-\vphi\|_2,
\end{align*}
and
\begin{align}\label{e_m1}
S(\mathcal{T},d_M,\eta) \le S(\mathcal{T}, d_2^{\prime}, \eta/L_M).
\end{align}

Let $\ma(\vtheta)$ be Lipschitz continuous with
\begin{align*}
d_F(\vtheta,\vphi) \leq L_F \|\vtheta-\vphi\|_2\qquad \text{for all}\quad \vtheta, \vphi\in\Theta.
\end{align*}
Then 
\begin{align*}
d_F(\vtheta,\vphi) =  \|\overline\ma(\vtheta) - \overline\ma(\vphi)  \|_F \le \| \ma(\vtheta) - \ma(\vphi)\|_F  \le  L_F \|\vtheta-\vphi\|_2, 
\end{align*}
and, as above,
\begin{align}\label{e_m2}
S(\mathcal{T},d_F,\eta) \le S(\mathcal{T}, d_2^{\prime}, \eta/L_F).
\end{align}
Let $\ma(\vtheta)$ be Lipschitz continuous with
\begin{align*}
d_2(\vtheta,\vphi) \leq L_2 \|\vtheta-\vphi\|_2\qquad \text{for all}\quad \vtheta, \vphi\in\Theta.
\end{align*}
Then
 \begin{align*}
d_2(\vtheta,\vphi) =  \|\overline\ma(\vtheta) - \overline\ma(\vphi)  \|_2 \le 2\,\| \ma(\vtheta) - \ma(\vphi)\|_2  \le  2 \,L_2 \|\vtheta-\vphi\|_2,
\end{align*}
 where the first inequality follows from Lemma~\ref{l_normdiag}.
Hence
\begin{align*}
d_2(\vtheta,\vphi)\leq 2\,L_2 \, d_2^{\prime}(\vtheta,\vphi),
\end{align*}
and
\begin{align}\label{e_m3}
S(\mathcal{T},d_2,\eta) \le S(\mathcal{T}, d_2^{\prime}, \eta/(2\,L_2)).
\end{align}

\end{example}

\section{Conclusion and discussion}\label{sec:conc}
We have extended the randomized trace estimation of a fixed 
matrix 
to trace optimization of parameter dependent matrices,
and derived bounds for the required sampling amounts.
The bounds based on nets (Section~\ref{s_main}) are straightforward to verify and apply, and come with constants fully specified. 
In contrast, the bounds based on chaining (Section~\ref{s_chaining}), although optimal in some contexts, are more difficult to apply and do not necessarily have a clear advantage over net-based arguments. Our results contribute theoretical justifications in the Monte Carlo trace estimators 
for SAA optimization techniques in \cite{chi2019going,chung2024}. 

A drawback of our chaining argument is the
dependence of the bounds (Theorem~\ref{thm:mixedtails}) on Talagrand's $\gamma_1'$ functional. This dependence
could have possibly been avoided by bounding instead
the suprema of chaos processes, which 
are intimately connected to trace estimation.
Arguments 
based on chaining with decoupling can produce bounds
that involve only the $\gamma_2$ functional
\cite{dirksen2015tail,Krahmer2014suprema}.

One avenue for future work could be practical bounds for Talagrand functionals in special cases. 
There are several options. The Dudley inequality bounds 
Talagrand functionals in terms
of covering numbers  
(Section~\ref{s_special}). The covering numbers, in turn, can be bounded if the matrix has low rank~\cite{candes2011tight}, while a
more general technique relies on Sudakov minorization
\cite[Section 7.4]{vershynin2018high}.
Talagrand functionals can also be bounded in optimization problems with sparsity promoting solutions~\cite{Krahmer2014suprema,vershynin2009role}.

Finally,  moving beyond nets, 
it is worth exploring the potential of another approach
from statistical learning in trace optimization, namely
that of Rademacher 
complexity~\cite[Section 4.5]{bach2024learning}.

\bibliographystyle{siamplain}
\bibliography{refs}

\begin{thebibliography}{10}

\bibitem{alexanderian2018efficient}
{\sc A.~Alexanderian and A.~K. Saibaba}, {\em Efficient {D}-optimal design of experiments for infinite-dimensional {B}ayesian linear inverse problems}, SIAM J. Sci. Comput., 40 (2018), pp.~A2956--A2985, \url{https://doi.org/10.1137/17M115712X}, \url{https://doi.org/10.1137/17M115712X}.

\bibitem{Anitescu2012}
{\sc M.~Anitescu, J.~Chen, and L.~Wang}, {\em A matrix-free approach for solving the parametric {Gaussian} process maximum likelihood problem}, SIAM J. Sci. Comput., 34 (2012), pp.~A240--A262, \url{https://doi.org/10.1137/110831143}, \url{https://doi.org/10.1137/110831143}.

\bibitem{avron2011randomized}
{\sc H.~Avron and S.~Toledo}, {\em Randomized algorithms for estimating the trace of an implicit symmetric positive semi-definite matrix}, J. ACM, 58 (2011), pp.~1--34.

\bibitem{bach2024learning}
{\sc F.~Bach}, {\em Learning theory from first principles}, MIT press, 2024.

\bibitem{bhatia1989comparing}
{\sc R.~Bhatia, M.~D. Choi, and C.~Davis}, {\em Comparing a matrix to its off-diagonal part}, in The {G}ohberg anniversary collection, {V}ol.\ {I} ({C}algary, {AB}, 1988), vol.~40 of Oper. Theory Adv. Appl., Birkh\"auser, Basel, 1989, pp.~151--164.

\bibitem{boucheron2003concentration}
{\sc S.~Boucheron, G.~Lugosi, and P.~Massart}, {\em Concentration inequalities using the entropy method}, Ann. Probab., 31 (2003), pp.~1583--1614.

\bibitem{candes2011tight}
{\sc E.~J. Candes and Y.~Plan}, {\em Tight oracle inequalities for low-rank matrix recovery from a minimal number of noisy random measurements}, IEEE Trans. Inform. Theory, 57 (2011), pp.~2342--2359.

\bibitem{chi2019going}
{\sc E.~C. Chi, L.~Hu, A.~K. Saibaba, and A.~U.~K. Rao}, {\em Going off the grid: Iterative model selection for biclustered matrix completion}, J. Comput. Graph. Statist., 28 (2019), pp.~36--47.

\bibitem{chung2024}
{\sc J.~Chung, S.~M. Miller, M.~S. Landman, and A.~K. Saibaba}, {\em Efficient hyperparameter estimation in {Bayesian} inverse problems using sample average approximation}, arXiv preprint, arXiv:2412.02773,  (2024), \url{https://arxiv.org/abs/2412.02773}.

\bibitem{cortinovis2022randomized}
{\sc A.~Cortinovis and D.~Kressner}, {\em On randomized trace estimates for indefinite matrices with an application to determinants}, Found. Comput. Math., 22 (2022), pp.~1--29.

\bibitem{dirksen2015tail}
{\sc S.~Dirksen}, {\em Tail bounds via generic chaining}, Electron. J. Probab., 20 (2015), pp.~no. 53, 29, \url{https://doi.org/10.1214/EJP.v20-3760}, \url{https://doi.org/10.1214/EJP.v20-3760}.

\bibitem{dong2017scalable}
{\sc K.~Dong, D.~Eriksson, H.~Nickisch, D.~Bindel, and A.~G. Wilson}, {\em Scalable log determinants for gaussian process kernel learning}, Advances in Neural Information Processing Systems, 30 (2017).

\bibitem{Girard1989}
{\sc D.~A. Girard}, {\em A fast ``{M}onte {C}arlo cross-validation'' procedure for large least squares problems with noisy data}, Numer. Math., 56 (1989), pp.~1--23, \url{https://doi.org/10.1007/BF01395775}, \url{https://doi.org/10.1007/BF01395775}.

\bibitem{Hutch1989}
{\sc M.~F. Hutchinson}, {\em A stochastic estimator of the trace of the influence matrix for {L}aplacian smoothing splines}, Comm. Statist. Simulation Comput., 18 (1989), pp.~1059--1076.

\bibitem{KRV2024}
{\sc V.~Kanade, P.~Rebeschini, and T.~Va{\v s}kevi{\v c}ius}, {\em Exponential tail local {R}ademacher complexity risk bounds without the {B}ernstein condition}, J. Mach. Learn. Res., 25 (2024), pp.~Paper No. 388, 43.

\bibitem{Krahmer2014suprema}
{\sc F.~Krahmer, S.~Mendelson, and H.~Rauhut}, {\em Suprema of chaos processes and the restricted isometry property}, Comm. Pure Appl. Math., 67 (2014), pp.~1877--1904.

\bibitem{MHKL2025}
{\sc F.~Matti, H.~He, D.~Kressner, and H.~Y. Lam}, {\em Stochastic trace estimation for parameter-dependent matrices applied to spectral density approximation}, 2025, \url{https://arxiv.org/abs/2502.18626}.
\newblock arXiv:2502.18626.

\bibitem{roosta2015improved}
{\sc F.~Roosta-Khorasani and U.~Ascher}, {\em Improved bounds on sample size for implicit matrix trace estimators}, Found. Comput. Math., 15 (2015), pp.~1187--1212.

\bibitem{shapiro2021lectures}
{\sc A.~Shapiro, D.~Dentcheva, and A.~Ruszczy\'nski}, {\em Lectures on stochastic programming---modeling and theory}, vol.~28 of MOS-SIAM Series on Optimization, Society for Industrial and Applied Mathematics (SIAM), Philadelphia, PA; Mathematical Optimization Society, Philadelphia, PA, third~ed., [2021] \copyright 2021, \url{https://doi.org/10.1137/1.9781611976595}, \url{https://doi.org/10.1137/1.9781611976595}.

\bibitem{talagrand2005generic}
{\sc M.~Talagrand}, {\em The generic chaining}, Springer Monogr. Math., Springer-Verlag, Berlin, 2005.

\bibitem{van2014probability}
{\sc R.~Van~Handel}, {\em Probability in high dimension}, Lecture Notes (Princeton University), 2 (2014), pp.~2--3.

\bibitem{vershynin2009role}
{\sc R.~Vershynin}, {\em On the role of sparsity in compressed sensing and random matrix theory}, in 2009 3rd IEEE International Workshop on Computational Advances in Multi-Sensor Adaptive Processing (CAMSAP), IEEE, 2009, pp.~189--192.

\bibitem{vershynin2018high}
{\sc R.~Vershynin}, {\em High-dimensional probability}, vol.~47 of Camb. Ser. Stat. Probab. Math., Cambridge University Press, Cambridge, 2018, \url{https://doi.org/10.1017/9781108231596}, \url{https://doi.org/10.1017/9781108231596}.

\end{thebibliography}

\end{document}